\documentclass[journal]{IEEEtran}

\usepackage{float}
\usepackage{caption}

\ifCLASSINFOpdf
 
\else

\fi

\usepackage{amsmath}

\interdisplaylinepenalty=2500

\hyphenation{op-tical net-works semi-conduc-tor}
\usepackage{amsmath,amssymb,amsthm}
\usepackage{subfigure}
\usepackage{algorithm}
\usepackage{algorithmic}

\newtheorem{theorem}{Theorem}
\newtheorem{lemma}{Lemma}
\newtheorem{property}{Property}
\newtheorem{definition}{Definition}
\newtheorem{remark}{Remark}
\usepackage{adjustbox}
\usepackage{xcolor}

\begin{document}

\title{Effective Streaming Low-tubal-rank Tensor Approximation via Frequent Directions}

\author{Qianxin Yi, Chenhao Wang, Kaidong Wang, and Yao Wang
	\thanks{Qianxin Yi, Chenhao Wang, Kaidong Wang and Yao Wang are with the Center for Intelligent Decision-making and Machine Learning, School of Management, Xi'an Jiaotong University, Xi'an, 710049, China.}
	\thanks{Qianxin Yi and Kaidong Wang are also with the School of Mathematics and Statistics, Xi'an Jiaotong University, Xi'an, 710049, China.}
	\thanks{Yao Wang is the corresponding author. Email: yao.s.wang@gmail.com.}}

\makeatletter 
\renewcommand{\@thesubfigure}{\hskip\subfiglabelskip}
\makeatother


\maketitle

\begin{abstract}
Low-tubal-rank tensor approximation has been proposed to analyze large-scale and multi-dimensional data. However, finding such an accurate approximation is challenging in the streaming setting, due to the limited computational resources. To alleviate this issue, this paper extends a popular matrix sketching technique, namely Frequent Directions, for constructing an efficient and accurate low-tubal-rank tensor approximation from streaming data based on the tensor Singular Value Decomposition (t-SVD). Specifically, the new algorithm allows the tensor data to be observed slice by slice, but only needs to maintain and incrementally update a much smaller sketch which could capture the principal information of the original tensor. The rigorous theoretical analysis shows that the approximation error of the new algorithm can be arbitrarily small when the sketch size grows linearly. Extensive experimental results on both synthetic and real multi-dimensional data further reveal the superiority of the proposed algorithm compared with other sketching algorithms for getting low-tubal-rank approximation, in terms of both efficiency and accuracy.
\end{abstract}

\begin{IEEEkeywords}
	tensor sketching, streaming low-tubal-rank approximation, tensor Singular Value Decomposition (t-SVD), Frequent Directions.
\end{IEEEkeywords}

\IEEEpeerreviewmaketitle

\section{Introduction}

\IEEEPARstart{T}{ensors}, or multi-dimensional arrays, are generalizations of vectors and matrices, which have been commonly used  in representing real-world data, such as videos \cite{jiang2017novel, kim2007tensor}, hyperspectral images \cite{du2016pltd, renard2008denoising}, multilinear signals \cite{de1997signal, comon2002tensor} and communication networks \cite{papalexakis2014spotting, nakatsuji2017semantic} among numerous others.  Common to these and many other data, the so-called low-rank structure can oftentimes be used to identify these tensor data, and thus low-rank tensor approximation is becoming a fundamental tool in today's data analytics \cite{renard2008denoising, liu2012denoising, grasedyck2013literature}, \cite{liu2015generalized, liu2014trace}.  However, it is oftentimes infeasible to find an accurate approximation because of the large size of these tensors. For example, as stated in~\cite{gilman2020grassmannian}, the hyperspectral video with hundreds of spectral bands and megapixel spatial resolution needs to be stored at the order of 10 gigabit per second. This clearly means that such a large tensor  may not fit in main memory on a computer, which brings difficulties to the subsequent low-rank approximation by a tensor decomposition. Also, calculating such an approximation needs to perform SVD (Singular Value Decomposition)  or tensor SVD,  which is usually time-consuming.

To address this concern, several tensor sketching methods \cite{wang2015fast, battaglino2018practical,xia2017effective,malik2018low, zhang2018randomized} were designed to perform fast low-rank approximation based on specific decompositions with a slight loss of precision, while significantly reducing the memory requirement. More precisely, analogous to the matrix sketching technique \cite{woodruff2014sketching} (such as random sampling \cite{boutsidis2014near} and random projection \cite{bingham2001random}), tensor sketching aims to compute a sketch tensor that is significantly smaller than the original one but still preserves its vital properties for further computations.  In many real applications, however, the tensor data just as the aforementioned hyperspectral video often arrives in a streaming manner which naturally requires the sketching algorithms to be one pass. This poses a challenge that how to develop a sketching algorithm to perform streaming low-rank tensor approximation efficiently. 

In this paper,  we develop an effective sketching algorithm to compute the low-tubal-rank tensor approximation from streaming data using the tensor SVD (t-SVD) framework. Similar to the matrix SVD, a key property of the t-SVD is the optimality of the truncated t-SVD for tensor approximation in terms of the Frobenius norm. Another key property of the t-SVD framework is that the derived tubal rank can well characterize the inherent low-rank structure of a tensor. With these good properties, the t-SVD has been extensively studied in dealing with several low-rank tensor approximation problems, both theoretically and practically. See, e.g., \cite{hao2013facial}, \cite{lu2016tensor}, among others. Considering that most existing studies are focused on the batch setting that requires tensor data to be fitted in the main memory, very little is known about the performance of t-SVD based low-rank approximation in the streaming setting. Our work presented here tries to fill in this void. 

By extending the simple deterministic sketching technique, Frequent Directions \cite{liberty2013simple}, we propose a new tensor sketching algorithm called tensor Frequent Directions (t-FD) to pursue an efficient streaming low-tubal-rank tensor approximation. The key idea of the proposed algorithm is to maintain a small sketch tensor dynamically updated in the streaming data. We shall summarize the main contributions of this paper as follows:

\begin{itemize}
	\item This is the first attempt to apply FD technique to deal with higher order tensors. Specifically, the proposed 
	t-FD algorithm only requires a single pass over the tensor, and thus is applicable to the streaming setting. 
	\item Considering that operations under the t-SVD are mainly processed in the Fourier domain, we analyze the  relationship of the tensor norms between the original and Fourier domains, and further derive the tensor covariance and projection error bounds. The theoretical analysis shows that the proposed t-FD is  within $1+\varepsilon$ of best tubal-rank-$k$ approximation.  
	\item Extensive experiments are carried out on both synthetic and real tensor data to illustrate the superiority of our t-FD algorithm over matrix FD algorithm and two randomized tensor sketching algorithms (srt-SVD and NormSamp) in most cases. These experimental results also partially verify our theoretical findings. 
\end{itemize}

\section{Related work}
\subsection{Streaming low-rank tensor approximation}
Finding the low-rank structure from the streaming tensor data has been developed well in recent years.  Most of the previous works focus on the 
{Tucker and CP} decompositions. The Tucker decomposition factorizes a tensor into the multiplication of a core tensor with  orthogonal factor matrices along each mode, and the corresponding Tucker rank is defined as the tuple of the ranks of all  unfolding matrices. Therefore, its computation is  heavily relied on the computation of SVD, thus the work \cite{hu2011incremental} considered in incorporating incremental SVD to update the data dynamically. Subsequently, \cite{malik2018low, sun2019low} integrated the randomized sketching techniques into the traditional HOSVD/HOOI algorithms for single pass  low Tucker rank decomposition. The CP decomposition factorizes a tensor into the sum of several rank-one tensors, and the corresponding CP rank is defined as the minimum number of such rank-one tensors, which is intuitive and similar to that of matrix rank. There are also a series of works \cite{zhou2016accelerating,ma2018randomized} focusing on the online CP problem, however, tracking the CP decomposition of online tensor often utilizes the alternating least squares method, namely CP-ALS, to update factor matrices in a nonconvex optimization manner, thus the performance would be highly dependent on a good initialization, which may not be satisfied in some real situations.

	More recently, the works ~\cite{kilmer2011factorization, martin2013order} proposed an alternative tensor decomposition named t-SVD, which is an elegant and natural extension of matrix SVD. More specifically, t-SVD factorizes a tensor into three factor tensors based on a new defined tensor-tensor product (t-product) operation, and could capture spatial-shifting correlation without losing intrinsic structures caused by matricization.  As further stated in~\cite{kilmer2013third}, t-SVD possesses both efficient computations and solid mathematical foundations, and thus has been widely used in a great number of low-rank tensor related problems, e.g.,~\cite{hao2013facial,zhang2016exact, lu2019tensor}. However, very little is known about the performance of t-SVD in dealing with streaming low-rank tensor data.




\subsection{Frequent Directions}

The main idea of the so-called matrix sketching technique~\cite{woodruff2014sketching} is to first construct a sketch matrix whose size is much smaller than the original matrix but can retain most of the information, and then use such sketch instead of the original matrix to do the subsequent operations, such as matrix multiplication and SVD. To get the sketch matrix, several randomized algorithms, such as random sampling~\cite{boutsidis2014near} and  random projection~\cite{bingham2001random}, have drawn great attention. The random sampling technique obtains a precise representation of the original matrix by sampling a small number of rows or columns and reweighting them. The most well-known random sampling technique is the leverage score sampling, in which the sampling probability
is proportional to the leverage score of each column. This obviously poses the difficulty that the leverage score involves the calculation of the singular vectors of the original matrix, and thus is hard to process streaming data. As for the random projection technique, its key is to find a random matrix used to project the original matrix to a much smaller one. This needs to load the original matrix completely in memory, which is obviously unsuitable for streaming setting. As mentioned previously, these randomized sketching techniques have been extended to get fast low-rank tensor approximation based on specific decompositions, namely  CP~\cite{battaglino2018practical}, Tucker~\cite{malik2018low} and t-SVD~\cite{zhang2018randomized}. Similar to the matrix case, such randomized tensor sketching algorithms cannot process streaming data directly.

Recently, a deterministic matrix sketching technique named Frequent Directions (FD), which was introduced by \cite{liberty2013simple} and further analyzed by \cite{ghashami2016frequent}, is well suited for the streaming data.  Precisely, the sketch is first initialized to an all zero-valued matrix. Then FD needs to insert the rows of the original matrix  into the sketch matrix  until it is fulfilled. A subsequent shrinking procedure is conducted by computing the SVD of the sketch and subtracting the squared $\ell$-th singular value. Considering that the last row of the sketch is always all zero-valued after the shrinking procedure, FD inserts the rows continually until all rows are processed. It has been proved in~\cite{woodruff2014low} that FD can achieve the optimal tradeoff between space and accuracy. Since FD could deal with streaming data without the sacrifice of accuracy, many online learning tasks have adopted it. Leng \cite{leng2015online}  utilized FD to learn hashing function efficiently in the online setting. Ilja \cite{kuzborskij2019efficient} showed FD could accelerate two popular linear contextual bandit algorithms without losing much precision. 
In  recent years, many subsequent  attempts have been made to improve the precision and speed of FD. Luo \cite{luo2019robust} proposed Robust Frequent Directions (RFD) by introducing an additional variable to make the FD more robust. Huang \cite{huang2019near}  considered to sample the removed part in the shrinking procedure then concatenate it with the sketch as the final result. And he theoretically proved such procedure is a space-optimal algorithm with improved running time compared with traditional FD. Besides, some papers considered the random projection technique to accelerate the original FD. That is, the subsampled randomized Hadamard transform and Count Sketch matrix were considered in \cite{chen2017frosh} and \cite{teng2018fast}, respectively.

\section{Notations and preliminaries}
We use the symbols $a$, $\boldsymbol{a}$, $\boldsymbol{A}$, $\mathbf{\mathcal{A}}$ for scalars, vectors, matrices,
and tensors respectively. 
	For the order-$p$ tensor $\mathbf{\mathcal{A}} \in \mathbb{R}^{n_{1} \times n_{2} \times\cdots \times n_{p}}\ (p\ge3)$, the $(i_1,i_2,\ldots,i_p)$-th entry is denoted by $\mathbf{\mathcal{A}}_{i_1i_2\ldots i_p}$,  and matrix frontal slices of order-$p$ tensors can be referenced using linear indexing by reshaping the tensor into an $n_{1} \times n_{2} \times \rho$ third-order tensor and referring to the $k$-th frontal slice as $\boldsymbol{A}^{(k)}$, where $\rho=n_3n_4\ldots n_p$, and the corresponding relationship is as follows: 
	$$
	(i_1,i_2,i_3,\ldots,i_p)\rightarrow(i_1,i_2,\sum_{a=4}^{p}(i_a-1)\Pi_{b=3}^{a-1}n_b+i_3).
	$$
	 The Frobenius norm of $\mathcal{A}$ is denoted by $\|\mathbf{\mathcal{A}}\|_{F}=\sqrt{\sum_{i_1 i_2, \cdots,i_p }\left|\mathbf{\mathcal{A}}_{i_1 i_2, \ldots,i_p}\right|^{2}}$. We represent $\mathbf{\mathcal{A}} $ as $[\mathbf{\mathcal{A}}_1, ..., \mathbf{\mathcal{A}}_{n_1} ]$, where $\mathcal{A}_i \in \mathbb{R}^{1 \times n_2  \times\cdots \times n_{p}}$ denotes the $i$-th horizontal tensor. Furthermore, $\mathcal{A}^{(i)}\in \mathbb{R}^{n_{1} \times  \cdots \times n_{p-1}}$ for $i=1,\ldots,n_p$ denotes the $(p - 1)$-order tensor created by holding the
	 $p$-th index of $\mathcal{A}$ fixed at $i$.  It is easy to see that when $p=3$, $\mathcal{A}^{(i)} $ is equivalent to the previously defined frontal slice  $\boldsymbol{A}^{(i)}$. And the mode-1 unfolding matrix $\boldsymbol{A}_{(1)}$ of $\mathbf{\mathcal{A}}$ is denoted as
	$$
	\boldsymbol{A}_{(1)}=\left[\boldsymbol{A}^{(1)}\ \boldsymbol{A}^{(2)}\ \cdots\ \boldsymbol{A}^{(\rho)}\right].
	$$ 
	 Moreover, $\mathbf{\bar{\mathcal{A}}}\in \mathbb{C}^{n_{1} \times n_{2} \times\cdots \times n_{p}}$ is obtained by repeating FFTs along each mode of $\mathbf{\mathcal{A}}$, and $\boldsymbol{\bar{A}}$ is the block diagonal matrix composed of each frontal slice of $\bar{\mathbf{\mathcal{A}}}$, i.e.,
	$$
	\boldsymbol{\bar{A}}=\mathtt{bdiag}(\mathbf{\bar{\mathcal{A}}})=\left[\begin{array}{cccc}
		\boldsymbol{\bar{A}}^{(1)} & & & \\
		& \boldsymbol{\bar{A}}^{(2)} & & \\
		& & \ddots & \\
		& & & \boldsymbol{\bar{A}}^{\left(\rho\right)}
	\end{array}\right].
	$$
	 Note that 
	\begin{align}
		\boldsymbol{\bar{A}}=\left(\boldsymbol{\tilde{F}} \otimes \boldsymbol{I}_{n_{1}}\right) \cdot \boldsymbol{\tilde{A}} \cdot \left(\boldsymbol{\tilde{F}}^{-1} \otimes \boldsymbol{I}_{n_{2}}\right), \label{orthop}
	\end{align} 
	where $\boldsymbol{\tilde{F}}=\boldsymbol{F}_{n_{p}} \otimes\boldsymbol{F}_{n_{p-1}} \otimes \cdots \otimes \boldsymbol{F}_{n_{3}}$, $\boldsymbol{F}_{n_{i}}$ is the discrete Fourier transformation matrix, $\otimes$ denotes the Kronecker product, and $\boldsymbol{\tilde{A}}$ is the $n_1\rho \times n_2\rho$ block matrix formed from $\mathbf{\mathcal{A}}$ in the base
	level of recursion (see Fig. 3.2 in \cite{martin2013order} for details). Specifically, for the third-order tensor, we get that $\boldsymbol{\tilde{F}}= \boldsymbol{F}_{n_{3}}, \  \boldsymbol{\tilde{A}} =\mathtt{bcirc}(\mathbf{\mathcal{A}})$, and the block circulant matrix $\mathtt{bcirc}(\mathbf{\mathcal{A}})$ is denoted as 
	$$
	\mathtt{bcirc}(\mathbf{\mathcal{A}})=\left[\begin{array}{cccc}
		\boldsymbol{A}^{(1)} & \boldsymbol{A}^{\left(n_{3}\right)} & \cdots & \boldsymbol{A}^{(2)} \\
		\boldsymbol{A}^{(2)} & \boldsymbol{A}^{(1)} & \cdots & \boldsymbol{A}^{(3)} \\
		\vdots & \vdots & \ddots & \vdots \\
		\boldsymbol{A}^{\left(n_{3}\right)} & \boldsymbol{A}^{\left(n_{3}-1\right)} & \cdots & \boldsymbol{A}^{(1)}
	\end{array}\right].
	$$

Now we shall give a brief review of some related definitions of  tensors used in the paper. 

	\begin{definition}[t-product for order-$p$ tensors ($p\ge3$), \cite{martin2013order}] \label{deftproduct-p}
	Let $\mathbf{\mathcal{A}} \in \mathbb{R}^{n_{1} \times n_{2} \times \cdots \times n_{p}}$ and $\mathbf{\mathcal{B}} \in \mathbb{R}^{n_{2} \times \ell \times \cdots \times n_{p}} .$ Then the $t$-product $\mathbf{\mathcal{A}} * \mathbf{\mathcal{B}}$ is the order-$p$ tensor defined recursively as
	$$
	\mathbf{\mathcal{A}} * \mathbf{\mathcal{B}}=\mathtt {fold}(\mathtt{bcirc}(\mathbf{\mathcal{A}}) * \mathtt{unfold}(\mathbf{\mathcal{B}})).
	$$
	The $(p-1)$-order tensor $\mathtt{bcirc}(\mathbf{\mathcal{A}})$ is defined as
	$$
	\mathtt{bcirc}(\mathbf{\mathcal{A}})=\left[\begin{array}{ccccc}
		\mathcal{A}^{(1)} & \mathcal{A}^{(n_{p})} & \mathcal{A}^{(n_{p}-1)} & \cdots & \mathcal{A}^{(2)} \\
		\mathcal{A}^{(2)} & \mathcal{A}^{(1)} & \mathcal{A}^{(n_{p})} & \cdots & \mathcal{A}^{(3)} \\
		\vdots & \ddots & \ddots & \ddots & \vdots \\
		\mathcal{A}^{(n_{p})} & \mathcal{A}^{(n_{p}-1)} & \cdots & \mathcal{A}^{(2)} & \mathcal{A}^{(1)}
	\end{array}\right].
	$$
	Define $\mathtt { unfold }(\cdot)$ by taking an $n_1 \times \cdots \times n_p$ tensor and returning an $n_1n_p \times
	n_2 \times \cdots \times n_{p-1}$ block tensor in the following way:
	$$
	\mathtt{unfold}(\mathcal{A})=\left[\begin{array}{c}
		\mathcal{A}^{(1)} \\
		\mathcal{A}^{(2)} \\
		\vdots \\
		\mathcal{A}^{(n_{p})}
	\end{array}\right]. 
	$$
	Thus, the operation $\mathtt { fold }(\cdot)$ takes an $n_{1} n_{p} \times n_{2} \times \cdots \times n_{p-1}$ block tensor and returns an $n_{1} \times \cdots \times n_{p}$ tensor. That is,
	$$
	\mathtt{fold}\left(\mathtt{unfold}(\mathcal{A})\right)=\mathcal{A}.
	$$
\end{definition}

\begin{definition}[Tensor transpose for order-$p$ tensors ($p\ge3$), \cite{martin2013order}]
	If $\mathcal{A}$ is $n_{1} \times \cdots \times n_{p}$, then $\mathcal{A}^{T}$ is the $n_{2} \times n_{1} \times n_{3} \times \cdots \times n_{p}$
	tensor obtained by tensor transposing each $\mathcal{A}^{(i)}$ for $i=1, \ldots, n_{p}$, and then reversing the order of the $\mathcal{A}^{(i)}$ for $2$ through $n_{p} .$ In other words, 
	$$
	\mathcal{A}^{T}= \mathtt{ fold }\left(\left[\begin{array}{c}
		(\mathcal{A}^{(1)})^{T} \\
		(\mathcal{A}^{(n_{p})})^{T} \\
		(\mathcal{A}^{(n_{p-1})})^{T} \\
		\vdots \\
		(\mathcal{A}^{(n_{2})})^{T}
	\end{array}\right]\right).
	$$
	For complex tensor, the tensor transpose is conjugate.
\end{definition}

\begin{definition}[Identity tensor for third-order tensors, \cite{kilmer2013third}]	
	The identity tensor $\mathbf{\mathcal{I}} \in \mathbb{R}^{n \times n \times n_{3}}$ is the tensor whose first frontal
	slice is the $n\times n$ identity matrix, and other frontal slices are all zeros.
\end{definition}

\begin{definition}[Identity tensor for order-$p$ tensors ($p>3$), \cite{martin2013order}]	
	The $n \times n \times \ell_{1}\times \cdots \times \ell_{p-2}$ order-$p$ identity tensor $ \mathcal{I}$ is the tensor such that $\mathcal{I}^{(1)}$ is the $n \times n \times \ell_{1}\times  \cdots \times \ell_{p-3}$ order-$(p-1)$ identity tensor, and $\mathcal{I}^{(j)}$ is the order-$(p-1)$ zero tensor for $j=2, \ldots, \ell_{p-2}$.
\end{definition}

\begin{definition}[Orthogonal tensor for order-$p$ tensors ($p\ge3$), \cite{martin2013order}] 
	A real-valued tensor $\mathbf{\mathcal{Q}} \in \mathbb{R}^{n \times n \times \ell_{1} \times \cdots \times \ell_{p-2}}$ is orthogonal if it satisfies
	$
	\mathbf{\mathcal{Q}}^{T}*\mathbf{\mathcal{Q}}=\mathbf{\mathcal{Q}}*\mathbf{\mathcal{Q}}^{T}=\mathbf{\mathcal{I}}
	$. A real-valued tensor $\mathbf{\mathcal{Q}} \in \mathbb{R}^{p \times q \times  \ell_{1} \times \cdots \times \ell_{p-2}}$ is partially orthogonal if it satisfies
	$
	\mathbf{\mathcal{Q}}^{T}*\mathbf{\mathcal{Q}}=\mathbf{\mathcal{I}}.
	$
\end{definition}

\begin{definition}[f-diagonal tensor for order-$p$ tensors ($p\ge3$), \cite{martin2013order}]	
	The f-diagonal tensor $\mathcal{A}$ has the property that $\mathcal{A}_{i_{1} i_{2} \ldots i_{p}}=0$ unless $i_{1}=i_{2} .$ 
\end{definition}

\begin{definition}[$\ell_{2^{*}}$ norm of tensor column for order-$p$ tensors ($p\ge3$), \cite{zhang2016exact}]	
	Let $\vec{\boldsymbol{x}}$ be an $n_{1} \times 1 \times n_{3}\times \cdots\times n_{p}$ tensor column, the $\ell_{2^{*}}$ norm denotes
	$$\|\vec{\boldsymbol{x}}\|_{2^{*}}=\sqrt{\sum_{i_1=1}^{n_{1}} \sum_{i_3=1}^{n_{3}} \cdots \sum_{i_p=1}^{n_{p}}\vec{\boldsymbol{x}}_{i_1 1 i_3\ldots i_p}^{2}}.$$
\end{definition}

\begin{definition}[Tensor spectral norm for order-$p$ tensors ($p\ge3$)]\label{def_tsn}
	Given $\mathcal{A} \in \mathbb{R}^{n_{1} \times n_{2} \times \cdots \times n_{p}}$ and $\mathcal{V} \in \mathbb{R}^{n_{2} \times 1 \times \cdots \times n_{p}} $, the tensor spectral norm is defined as 	
	\begin{align}
		\|\mathcal{A}\|: &=\sup _{\|\mathcal{V}\|_{F} \leq 1}\|\mathcal{A} * \mathcal{V}\|_{F} \notag\\
		&=\sup _{\|\mathcal{V}\|_{F} \leq 1}\|\operatorname{bcirc}(\mathcal{A}) * \operatorname{unfold}(\mathcal{V})\|_{F} \notag \\
		&=\sup _{\|\mathcal{V}\|_{F} \leq 1}\|\boldsymbol{\tilde{A}} \cdot \boldsymbol{\hat{V}} \|_{F}\notag\\
		&=\|\boldsymbol{\tilde{A}}\|  \label{msnp}\\
		&=\|\boldsymbol{\bar{A}}\| ,\label{simprop}
	\end{align}	
	where $\boldsymbol{\hat{V}}$ is the $  n_2\rho\times 1$ unfold matrix formed from $\mathbf{\mathcal{V}}$ in the base
	level of recursion. \end{definition}
	It is not hard to check that the equation (\ref{msnp}) holds by the definition of matrix spectral norm, and the equation (\ref{simprop}) holds by combining (\ref{orthop}) and the property that $\left(\boldsymbol{\tilde{F}} \otimes \boldsymbol{I}_{n_{1}}\right) / \sqrt{\rho}$ is orthogonal.

\begin{definition}[t-SVD for order-$p$ tensors ($p\ge3$), \cite{martin2013order}]
	Let $\mathcal{A}$ be an $n_{1} \times \cdots \times n_{p}$ real-valued tensor. Then $\mathcal{A}$ can be factored as
	$$
	\mathcal{A}=\mathcal{U} * \mathcal{S} * \mathcal{V}^{T}
	$$
	where $\mathcal{U}, \mathcal{V}$ are orthogonal $n_{1} \times n_{1} \times n_{3} \times n_{4} \times \cdots \times n_{p}$ and $n_{2} \times n_{2} \times n_{3} \times n_{4} \times \cdots \times n_{p}$ tensors
	respectively, and $\mathcal{S}$ is an $n_{1} \times n_{2} \times \cdots \times n_{p}$ f-diagonal tensor. The factorization is called the t-SVD.
\end{definition}
Basically,  t-SVD can be computed efficiently by the following steps:
\begin{itemize}
	\item[1.] Compute $\mathbf{\mathcal{A}}=\mathtt{fft}(\mathbf{\mathcal{A}},[ ], i)$, for $i=3,\ldots,p$;
	\item[2.] Set $\mathbf{\bar{\mathcal{A}}}:=\mathbf{\mathcal{A}}$; 
	\item[3.] Compute matrix SVD $\boldsymbol{\bar{A}}^{(i)}=\boldsymbol{\bar{U}}^{(i)} \boldsymbol{\bar{S}}^{(i)} \boldsymbol{\bar{V}}^{(i)*}$ for each frontal slice, $ i=1,\ldots,\rho$;
	\item[4.] Compute $\mathbf{\bar{\mathcal{U}}}=\mathtt{ifft}(\mathbf{\bar{\mathcal{U}}}, [], i)$, $\mathbf{\bar{\mathcal{S}}}=\mathtt{ifft}(\mathbf{\bar{\mathcal{S}}}, [], i),$ and $\mathbf{\bar{\mathcal{V}}}=\mathtt{ifft}(\mathbf{\bar{\mathcal{V}}}, [], i)$, for $i=3,\ldots,p$; 
	\item[5.] Set $\mathbf{\mathcal{U}}:=\mathbf{\bar{\mathcal{U}}}$, $\mathbf{\mathcal{S}}:=\mathbf{\bar{\mathcal{S}}}$ and $\mathbf{\mathcal{V}}:=\mathbf{\bar{\mathcal{V}}}$. 
\end{itemize}

\begin{definition}[Tensor tubal rank for third-order tensors, \cite{lu2019tensor}] \label{tubalrank3}
	For $\mathbf{\mathcal{A}} \in$ $\mathbb{R}^{n_{1} \times n_{2} \times n_{3}},$ the tensor tubal rank, denoted as $\operatorname{rank}_{t}(\mathbf{\mathcal{A}}),$ is defined as the number of nonzero singular tubes of $\mathbf{\mathcal{S}},$ where $\mathbf{\mathcal{S}}$ is from the t-SVD of $\mathbf{\mathcal{A}}=\mathbf{\mathcal{U}} * \mathbf{\mathcal{S} }* \mathbf{\mathcal{V}}^{T} .$ We can write
	$$
	\operatorname{rank}_{t}(\mathbf{\mathcal{A}})=\#\{i, \mathbf{\mathcal{S}}(i, i, 1) \neq 0\}=\#\{i, \mathbf{\mathcal{S}}(i, i,:) \neq 0\}.
	$$
\end{definition}

	In the following, we shall extend the above definition to more general order-$p$ case with $p>3$. 
	\begin{definition}[Tensor tubal rank for order-$p$ tensors ($p>3$)] 
		For $\mathbf{\mathcal{A}} \in$ $\mathbb{R}^{n_{1} \times \cdots \times n_{p}},$ the tensor tubal rank, denoted as $\operatorname{rank}_{t}(\mathbf{\mathcal{A}}),$ is defined as the number of nonzero singular scalars of $\mathbf{\mathcal{S}},$ where $\mathbf{\mathcal{S}}$ is from the t-SVD of $\mathbf{\mathcal{A}}=\mathbf{\mathcal{U}} * \mathbf{\mathcal{S} }* \mathbf{\mathcal{V}}^{T} .$ We can write
		\begin{align}
			\operatorname{rank}_{t}(\mathbf{\mathcal{A}})&=\#\{i, \mathbf{\mathcal{S}}(i, i, 1, \ldots ,1) \neq 0\}\notag\\
			&=\#\{i, \mathbf{\mathcal{S}}(i, i,:,\ldots,:) \neq 0\}.\notag
		\end{align}
	\end{definition}

\begin{lemma}[Best tubal-rank-$k$ approximation for third-order tensors, \cite{kilmer2011factorization}]
	Let the t-SVD of $\mathbf{\mathcal{A}} \in \mathbb{R}^{n_{1} \times n_{2} \times n_{3}}$ be $
	\mathbf{\mathcal{A}}=\mathbf{\mathcal{U}} * \mathbf{\mathcal{S}} * \mathbf{\mathcal{V}}^{T}
	$. For a given positive
	integer $k$, define $\mathbf{\mathcal{A}}_{k}=\sum_{s=1}^{k} \mathbf{\mathcal{U}}(:, s,:) * \mathbf{\mathcal{S}}(s, s,:) * \mathbf{\mathcal{V}}^{T}(:, s,:)$. Then $\mathbf{\mathcal{A}}_{k}=\underset{\mathbf{\hat{\mathcal{A}}} \in \mathbb{A}}{\arg \min }\|\mathbf{\mathcal{A}}-\mathbf{\hat{\mathcal{A}}}\|_{F},$ where $\mathbb{A}=\left\{\mathcal{X} * \mathcal{Y}^{T} | \mathcal{X} \in\mathbb{R}^{n_{1} \times k \times n_{3}}, \mathcal{Y} \in \mathbb{R}^{n_{2} \times k \times n_{3}}\right\}$. This means that $\mathbf{\mathcal{A}}_{k}$ is the approximation of $\mathbf{\mathcal{A}}$ with the tubal rank at most $k$.
\end{lemma}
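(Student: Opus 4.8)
The plan is to pass to the Fourier domain along the third mode, where the $t$-product becomes an ordinary (block-diagonal) matrix product and the Frobenius norm is preserved up to a fixed factor, and then to apply the classical Eckart--Young--Mirsky theorem frontal slice by frontal slice. The starting point is the Parseval-type identity
\[
\|\mathcal{A}\|_F^2=\frac{1}{n_3}\|\bar{\mathcal{A}}\|_F^2=\frac{1}{n_3}\sum_{i=1}^{n_3}\big\|\boldsymbol{\bar{A}}^{(i)}\big\|_F^2,
\]
which follows from applying Parseval to each tube of $\mathcal{A}$ together with the fact that $\boldsymbol{\bar{A}}=\mathtt{bdiag}(\bar{\mathcal{A}})$ only rearranges entries. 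Applied to $\mathcal{A}-\hat{\mathcal{A}}$, this reduces the objective to $\frac{1}{n_3}\sum_{i=1}^{n_3}\big\|\boldsymbol{\bar{A}}^{(i)}-\boldsymbol{\bar{\hat A}}^{(i)}\big\|_F^2$, where $\boldsymbol{\bar{\hat A}}^{(i)}$ is the $i$-th Fourier frontal slice of $\hat{\mathcal{A}}$.

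Next I would translate the constraint $\hat{\mathcal{A}}\in\mathbb{A}$. Since the $t$-product block-diagonalizes in the Fourier domain and the tensor transpose acts there as the conjugate transpose on each slice, writing $\hat{\mathcal{A}}=\mathcal{X}*\mathcal{Y}^T$ with $\mathcal{X}\in\mathbb{R}^{n_1\times k\times n_3}$ yields $\boldsymbol{\bar{\hat A}}^{(i)}=\boldsymbol{\bar{X}}^{(i)}(\boldsymbol{\bar{Y}}^{(i)})^{*}$ with $\boldsymbol{\bar{X}}^{(i)}\in\mathbb{C}^{n_1\times k}$, hence $\operatorname{rank}(\boldsymbol{\bar{\hat A}}^{(i)})\le k$ for every $i$. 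Applying the matrix Eckart--Young--Mirsky theorem to each slice then gives, for every $\hat{\mathcal{A}}\in\mathbb{A}$,
\[
\|\mathcal{A}-\hat{\mathcal{A}}\|_F^2\ \ge\ \frac{1}{n_3}\sum_{i=1}^{n_3}\big\|\boldsymbol{\bar{A}}^{(i)}-[\boldsymbol{\bar{A}}^{(i)}]_k\big\|_F^2,
\]
where $[\boldsymbol{M}]_k$ denotes the best rank-$k$ approximation of a matrix $\boldsymbol{M}$.

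It remains to show that $\mathcal{A}_k$ attains this lower bound and lies in $\mathbb{A}$. Reading off the steps that compute the $t$-SVD (FFT along mode $3$; slice-wise SVD $\boldsymbol{\bar{A}}^{(i)}=\boldsymbol{\bar{U}}^{(i)}\boldsymbol{\bar{S}}^{(i)}\boldsymbol{\bar{V}}^{(i)*}$ with singular values in decreasing order; inverse FFT), the definition $\mathcal{A}_k=\sum_{s=1}^k\mathcal{U}(:,s,:)*\mathcal{S}(s,s,:)*\mathcal{V}^T(:,s,:)$ becomes, slice by slice, $\sum_{s=1}^{k}\boldsymbol{\bar{U}}^{(i)}(:,s)\,\boldsymbol{\bar{S}}^{(i)}(s,s)\,\boldsymbol{\bar{V}}^{(i)}(:,s)^{*}=[\boldsymbol{\bar{A}}^{(i)}]_k$, so by the Parseval identity $\|\mathcal{A}-\mathcal{A}_k\|_F^2$ equals the lower bound above; and $\mathcal{A}_k=\mathcal{X}*\mathcal{Y}^T$ with $\mathcal{X}=\mathcal{U}(:,1\!:\!k,:)*\mathcal{S}(1\!:\!k,1\!:\!k,:)$ and $\mathcal{Y}=\mathcal{V}(:,1\!:\!k,:)$, so $\mathcal{A}_k\in\mathbb{A}$. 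I expect the only delicate point to be the real-versus-complex bookkeeping: the lower bound is a priori minimized over arbitrary complex rank-$k$ slices, so one should note that it is still achieved within $\mathbb{A}$ precisely because $\mathcal{A}_k$ is built from the genuine (real) $t$-SVD factors of $\mathcal{A}$, whose Fourier slices carry a conjugate-symmetry pattern that is preserved under slice-wise truncation; apart from this, the argument is routine manipulation with FFTs and the block-diagonalization identity already recorded in the preliminaries.
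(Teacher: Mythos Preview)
Your proposal is correct and is essentially the same argument the paper uses: note that Lemma~1 itself is cited from \cite{kilmer2011factorization} without proof, but the paper proves its order-$p$ extension (Lemma~\ref{lemma2}) by exactly this route---pass to the Fourier domain via the Parseval identity $\|\mathcal{A}\|_F^2=\tfrac{1}{\rho}\sum_i\|\boldsymbol{\bar A}^{(i)}\|_F^2$, observe that $\hat{\mathcal{A}}=\mathcal{X}*\mathcal{Y}^T$ gives rank-$\le k$ frontal slices $\boldsymbol{\bar X}^{(i)}\boldsymbol{\bar Y}^{(i)*}$, and invoke Eckart--Young slice by slice. Your write-up is in fact a bit more careful than the paper's: you explicitly verify that $\mathcal{A}_k\in\mathbb{A}$ and you flag the real-versus-complex issue (conjugate symmetry of the Fourier slices ensuring the minimizer is realized by a real tensor), both of which the paper's proof leaves implicit.
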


	The extension of above result to general order-$p$ tensors is presented as follows, and the proof will be given in Section VI.
\begin{lemma}[Best tubal-rank-$k$ approximation for order-$p$ tensors ($p>3$)]\label{lemma2}
	Let the t-SVD of $\mathbf{\mathcal{A}} \in \mathbb{R}^{n_{1} \times \cdots \times n_{p}}$ be $
	\mathbf{\mathcal{A}}=\mathbf{\mathcal{U}} * \mathbf{\mathcal{S}} * \mathbf{\mathcal{V}}^{T}
	$. For a given positive
	integer $k$, define $\mathbf{\mathcal{A}}_{k}=\sum_{s=1}^{k} \mathbf{\mathcal{U}}(:, s,:,\ldots,:) * \mathbf{\mathcal{S}}(s, s,:,\ldots,:) * \mathbf{\mathcal{V}}^{T}(:, s,:,\ldots,:)$. Then $\mathbf{\mathcal{A}}_{k}=\underset{\mathbf{\hat{\mathcal{A}}} \in \mathbb{A}}{\arg \min }\|\mathbf{\mathcal{A}}-\mathbf{\hat{\mathcal{A}}}\|_{F},$ where $\mathbb{A}=\left\{\mathcal{X} * \mathcal{Y}^{T} | \mathcal{X} \in\mathbb{R}^{n_{1} \times k \times n_{3} \times \cdots \times n_{p}}, \mathcal{Y} \in \mathbb{R}^{n_{2} \times k \times n_{3} \times \cdots \times n_{p}}\right\}$. This means that $\mathbf{\mathcal{A}}_{k}$ is the approximation of $\mathbf{\mathcal{A}}$ with the tubal rank at most $k$.
\end{lemma}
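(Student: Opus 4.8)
The plan is to reduce the order-$p$ statement to the matrix case in the Fourier domain, exactly mirroring the computational recipe for the t-SVD listed right after its definition. First I would recall that applying FFTs along modes $3,\dots,p$ corresponds, on the block level, to the unitary change of basis $\boldsymbol{\bar{A}}=(\boldsymbol{\tilde{F}}\otimes\boldsymbol{I}_{n_1})\cdot\boldsymbol{\tilde{A}}\cdot(\boldsymbol{\tilde{F}}^{-1}\otimes\boldsymbol{I}_{n_2})$ from equation~(\ref{orthop}), where $(\boldsymbol{\tilde{F}}\otimes\boldsymbol{I}_{n_1})/\sqrt{\rho}$ is orthogonal (the same fact used to justify~(\ref{simprop})). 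Consequently the t-product $\mathcal{A}*\mathcal{B}$ corresponds to the ordinary block-diagonal matrix product $\boldsymbol{\bar{A}}\,\boldsymbol{\bar{B}}$, and $\|\mathcal{A}\|_F^2=\frac{1}{\rho}\|\boldsymbol{\bar{A}}\|_F^2=\frac{1}{\rho}\sum_{i=1}^{\rho}\|\boldsymbol{\bar{A}}^{(i)}\|_F^2$ by Parseval/unitary invariance of the Frobenius norm. So any member $\mathcal{X}*\mathcal{Y}^T\in\mathbb{A}$ with $\mathcal{X}\in\mathbb{R}^{n_1\times k\times n_3\times\cdots\times n_p}$, $\mathcal{Y}\in\mathbb{R}^{n_2\times k\times n_3\times\cdots\times n_p}$ becomes, slice by slice in the Fourier domain, a product $\boldsymbol{\bar{X}}^{(i)}(\boldsymbol{\bar{Y}}^{(i)})^{*}$ of an $n_1\times k$ by a $k\times n_2$ matrix, hence a matrix of rank at most $k$.

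Next I would invoke the classical Eckart–Young–Mirsky theorem on each frontal slice: for every $i$, $\|\boldsymbol{\bar{A}}^{(i)}-\boldsymbol{M}_i\|_F$ is minimized over all rank-$\le k$ matrices $\boldsymbol{M}_i$ by the truncated matrix SVD $\boldsymbol{\bar{U}}^{(i)}_k\boldsymbol{\bar{S}}^{(i)}_k(\boldsymbol{\bar{V}}^{(i)}_k)^{*}$, which is precisely what step~3 of the t-SVD computation produces and what, after the inverse FFTs in step~4, assembles into $\mathcal{A}_k=\sum_{s=1}^{k}\mathcal{U}(:,s,:,\dots,:)*\mathcal{S}(s,s,:,\dots,:)*\mathcal{V}^T(:,s,:,\dots,:)$. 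Summing the per-slice optimality and using $\|\mathcal{A}-\mathcal{X}*\mathcal{Y}^T\|_F^2=\frac{1}{\rho}\sum_{i=1}^{\rho}\|\boldsymbol{\bar{A}}^{(i)}-\boldsymbol{\bar{X}}^{(i)}(\boldsymbol{\bar{Y}}^{(i)})^{*}\|_F^2$ shows $\|\mathcal{A}-\mathcal{A}_k\|_F\le\|\mathcal{A}-\mathcal{X}*\mathcal{Y}^T\|_F$ for all admissible $\mathcal{X},\mathcal{Y}$, with $\mathcal{A}_k$ achieving the bound. That $\mathcal{A}_k$ lies in $\mathbb{A}$ and has tubal rank at most $k$ follows by taking $\mathcal{X}=\mathcal{U}(:,1\!:\!k,:,\dots,:)*\mathcal{S}(1\!:\!k,1\!:\!k,:,\dots,:)$ and $\mathcal{Y}=\mathcal{V}(:,1\!:\!k,:,\dots,:)$, together with the f-diagonal structure of $\mathcal{S}$.

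There are two bookkeeping points to handle carefully, and the second is the only real obstacle. The first is the conjugate-symmetry constraint: the slices $\boldsymbol{\bar{A}}^{(i)}$ are not arbitrary complex matrices but come in conjugate pairs (since $\mathcal{A}$ is real), so one must check that the slicewise-optimal truncations respect this symmetry and therefore the inverse FFT yields a real tensor $\mathcal{A}_k$; this is standard and identical to the argument already implicit in the third-order lemma of \cite{kilmer2011factorization}. The second and more delicate point is that minimizing $\|\mathcal{A}-\mathcal{X}*\mathcal{Y}^T\|_F$ over the \emph{tensor} pair $(\mathcal{X},\mathcal{Y})$ must genuinely decouple into independent per-slice matrix problems over $(\boldsymbol{\bar{X}}^{(i)},\boldsymbol{\bar{Y}}^{(i)})$ — i.e., the map $(\mathcal{X},\mathcal{Y})\mapsto(\boldsymbol{\bar{X}}^{(i)},\boldsymbol{\bar{Y}}^{(i)})_i$ is onto the product of the relevant matrix spaces subject only to conjugate symmetry. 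I would verify this surjectivity via~(\ref{orthop}) applied to $\mathcal{X}$ and $\mathcal{Y}$ (the recursive base-level block matrices $\boldsymbol{\tilde{X}},\boldsymbol{\tilde{Y}}$ are unconstrained real blocks, so their Fourier transforms range over all conjugate-symmetric families), which licenses the slicewise reduction. Once this is in place, the proof is just Eckart–Young applied $\rho$ times plus Parseval, so I expect it to be short; the induction on $p$ (peeling one mode at a time) is an alternative packaging but the block-diagonal/Fourier argument is cleaner and I would present that.
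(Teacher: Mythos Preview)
Your proposal is correct and follows essentially the same route as the paper: pass to the Fourier domain via the block-diagonal representation, use $\|\mathcal{A}\|_F^2=\frac{1}{\rho}\sum_i\|\boldsymbol{\bar{A}}^{(i)}\|_F^2$, observe that any $\mathcal{X}*\mathcal{Y}^T$ becomes slicewise a rank-$\le k$ product $\boldsymbol{\bar{X}}^{(i)}(\boldsymbol{\bar{Y}}^{(i)})^{*}$, and apply Eckart--Young on each of the $\rho$ frontal slices. The paper's proof is in fact terser than yours---it simply writes the two sums and asserts the slicewise inequality---so your extra care with conjugate symmetry and surjectivity is more than the paper provides, not less.
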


\section{The proposed t-FD algorithm}

In this section, we shall first focus on deriving the algorithmic procedure and conducting the corresponding theoretical analysis for the third-order tensors, and then extend these to more general order-$p$ tensors with $p>3$. 
\subsection{Algorithmic procedure}

Here we first briefly describe the core idea of matrix FD. It receives the input matrix $\boldsymbol{A} \in \mathbb{R}^{n \times d}$ in a streaming fashion, and produces the sketch matrix $\boldsymbol{B} \in \mathbb{R}^{\ell \times d} $ which contains only $\ell \ll n$ rows but still approximates well for the original matrix $\boldsymbol{A}$. Specifically, $\boldsymbol{B}$ is first initialized to an all-zero valued matrix and then receives each row of matrix $\boldsymbol{A}$ one after the other. Once $\boldsymbol{B}$ is full, we orthogonalize $\boldsymbol{B}$ by taking SVD and then implement a shrinking procedure to construct the new sketch matrix, which are repeated throughout the entire streaming data. As shown in~\cite{ghashami2016frequent}, the algorithm guarantees that 
\begin{equation}
\label{eq:fd}
\left\|\boldsymbol{A}^{ T} \boldsymbol{A}-\boldsymbol{B}^{T} \boldsymbol{B}\right\|_{2}
\leq  \frac{   \left\|\boldsymbol{A}-\boldsymbol{A}_{k}\right\|_{F}^{2}}{\ell-k}.
\end{equation}
Note that setting\begin{small} $\ell=\lceil k+1/\varepsilon\rceil$\end{small} yields the error of \begin{small}$\varepsilon \left\|\boldsymbol{A}-\boldsymbol{A}_{k}\right\|_{F}^{2}$\end{small}, that is to say, the sketch matrix $\boldsymbol{B}$ is within $(1+\varepsilon)$ best low-rank approximation. 

For the higher order tensor case, although there has been many explorations focused on the Tucker/CP decomposition dealing with streaming data, the random techniques or complicated optimization strategies are required for getting a good low-rank approximation. Motivated by the matrix FD, we propose a simple and deterministic tensor sketching algorithm (t-FD) as stated in Algorithm $\ref{tensor-FD}$ to get a low-tubal-rank approximation from streaming data.  Our goal is to find a small sketch $\mathcal{B}$ that could derive a similar error bound as (\ref{eq:fd}). Assume we have $n_1$ data samples $\mathcal{A}_1, ..., \mathcal{A}_{n_1}$ of size $n_2 \times n_3$, and they are received sequentially, we arrange them to a third-order tensor $\mathcal{A} \in \mathbb{R}^{n_1 \times n_2 \times n_3}$. By utilizing the well defined algebraic structure of tensor-tensor product, the idea of matrix FD could be extended naturally and used to update the components of sketch $\mathcal{B}$.   
\begin{figure*}
	\centering
	\includegraphics[width=0.68\textwidth]{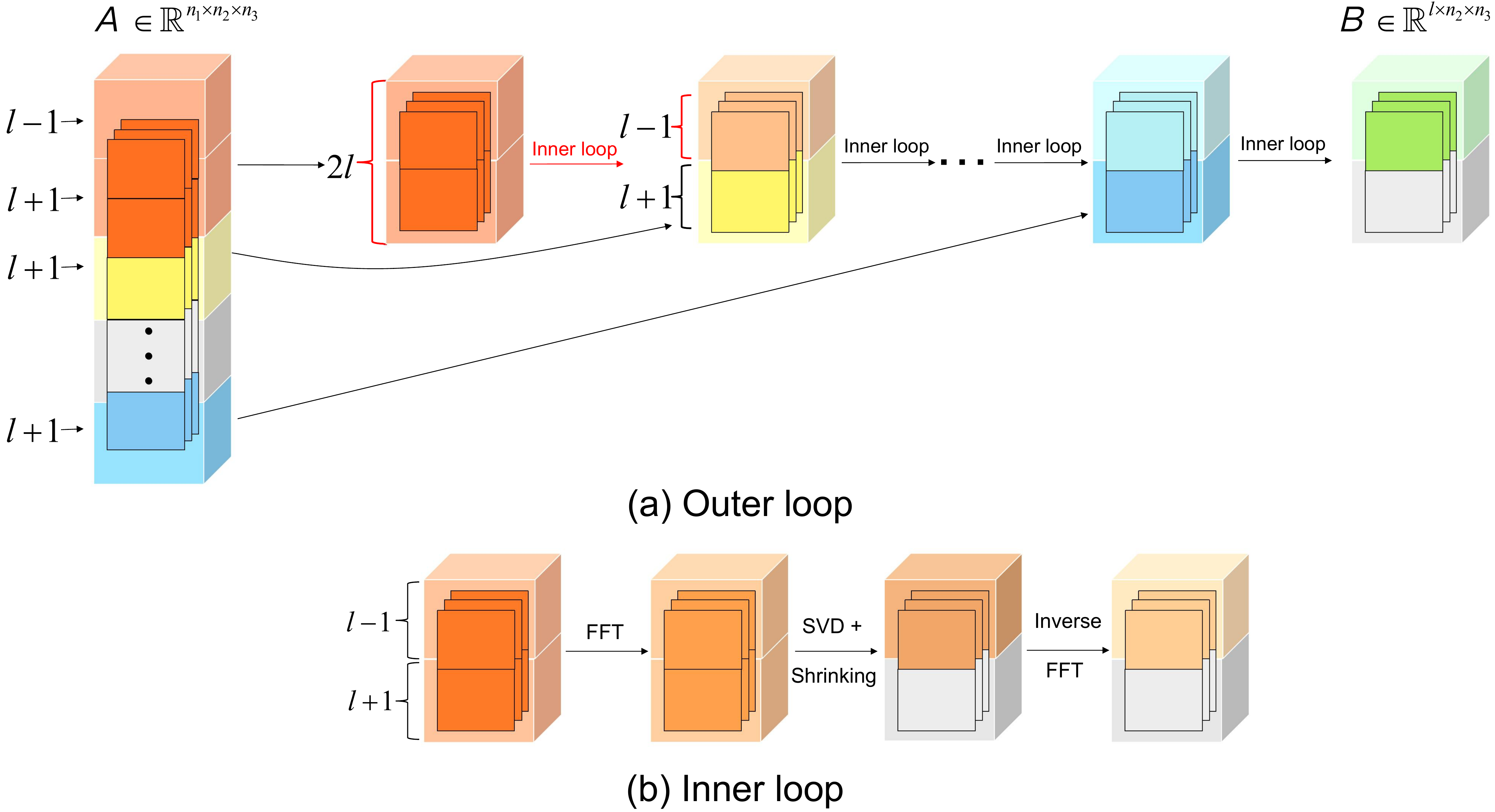} 
	\caption{Illustration of t-FD.}
	\label{figtFD}
\end{figure*}

During the implementation  of Algorithm 1, $\mathbf{\mathcal{B}}$ is first initialized to an all-zero valued tensor, and then the all zero valued horizontal slices in $\mathbf{\mathcal{B}}$ are simply replaced by the horizontal slices from $\mathbf{\mathcal{A}}$. Intuitively speaking, each frontal slice $\boldsymbol{B}^{(i)}\ ( i=1,\ldots,n_{3})$ receives one row of $\boldsymbol{A}^{(i)}$ each time one after the other as shown in step 4. Afterwards, the last $\ell+1$ horizontal slices are nullified by a four-stage process (steps 8-12 in Algorithm 1). First, we use the Fast Fourier Transform (FFT) to get $\mathbf{\bar{\mathcal{B}}}$. Then, each frontal slice $\boldsymbol{\bar{B}}^{(i)}$ is rotated (from the left) using its SVD such that its rows are orthogonal and in descending magnitude order. Further then, the rotated frontal slice is implemented by a shrinking procedure to make the last $\ell+1$ rows be zero. Finally, compute the sketch $\mathbf{\mathcal{B}}$ from $\mathbf{\bar{\mathcal{B}}}$ via the inverse FFT. More detailed procedure is illustrated in Fig. \ref{figtFD}.

\begin{algorithm}[!htb]
	\caption{tensor FD (t-FD) for third-order tensors}
	\label{tensor-FD}
	\begin{algorithmic}[1]
		\REQUIRE $\mathbf{\mathcal{A}} \in \mathbb{R}^{n_{1} \times n_{2} \times n_{3}}, \text { sketch size } \ell$\\
		\ENSURE $\mathbf{\mathcal{B}} \in \mathbb{R}^{\ell \times n_{2} \times n_{3}}$\\
		\STATE Initialize $\mathcal{B} \in \mathbb{R}^{2\ell \times n_2 \times n_3}$ as a tensor with all elements being zero
		\FOR{$j = 1, \ldots, n_{1}$} 
		\STATE Initialize $\delta_{j}^{(i)}=0 , i=1,\ldots,n_3$
		\STATE Insert $\mathcal{A}_j$ into a zero valued horizontal slice of $\mathcal{B}$\\
		\IF{$\mathcal{B}$ is fulfilled}
		\STATE Compute $\mathbf{\bar{\mathcal{B}}}=\mathtt{fft}(\mathbf{\mathcal{B}},[], 3)$\\
		\FOR{$i = 1, \ldots, n_{3}$} 
		\STATE $\left[\boldsymbol{\bar{U}}^{(i)}, \boldsymbol{\bar{S}}^{(i)}, \boldsymbol{\bar{V}}^{(i)}\right] \leftarrow \operatorname{svd}(\boldsymbol{\bar{B}}^{(i)})$\\
		\STATE $\boldsymbol{\bar{C}}^{(i)} \leftarrow \boldsymbol{\bar{S}}^{(i)}\boldsymbol{\bar{V}}^{(i)T}$\\
		\STATE $\delta_{j}^{(i)} \leftarrow s_{\ell}^{(i)2}$\\
		\STATE $\boldsymbol{\bar{B}}^{(i)} \leftarrow \sqrt{\max(\boldsymbol{\bar{S}}^{(i)2}-\delta_{j}^{(i)} \boldsymbol{I}_{2\ell},0)} \cdot \boldsymbol{\bar{V}}^{(i)T}$
		\STATE $\mathbf{\bar{\mathcal{B}}}^{(i)} \leftarrow \boldsymbol{\bar{B}}^{(i)}$\\
		\ENDFOR
		\STATE Compute $\mathbf{\mathcal{B}}=\mathtt{ifft}(\mathbf{\bar{\mathcal{B}}},[], 3)$\\
		\ENDIF
		\ENDFOR	
		\STATE Set $\mathbf{\mathcal{B}}\leftarrow \mathbf{\mathcal{B}}(1 :\ell, :, :)$
	\end{algorithmic}
\end{algorithm}

As for the theoretical analysis of the new algorithm, it becomes more challenging. First, there are several tensor norms which are more complicated, e.g., the tensor spectral norm. Second, since the truncated procedure is implemented in the Fourier domain, the relationship between the original tensor and the sketch tensor is hard to derive explicitly. So the proof techniques cannot directly move from  the matrix FD algorithm to the new t-FD algorithm.

\subsection{Error bounds}
This subsection presents our main theoretical results for Algorithm \ref{tensor-FD}. In the subsequent analysis, we used
two different error metrics to evaluate the distance between original tensor $\mathbf{\mathcal{A}}$ and sketch tensor $\mathbf{\mathcal{B}}$.

The first error metric is the tensor covariance error, $\left\|\mathbf{\mathcal{A}}^{T}*\mathbf{\mathcal{A}}-\mathbf{\mathcal{B}}^{T}*\mathbf{\mathcal{B}}\right\|$, which is used to measure the maximal singular value gap between the block diagonal matrices $\boldsymbol{\bar{A}}$ and $\boldsymbol{\bar{B}}$. This can be easily verified by the definition of tensor spectral norm (Def. \ref{def_tsn}). The tensor covariance error is given by the following theorem.

\begin{theorem}[Tensor covariance error]\label{main1}
	Given $\mathbf{\mathcal{A}} \in \mathbb{R}^{n_{1} \times n_{2} \times n_{3}}$ and the sketch size $\ell$, the sketch $\mathbf{\mathcal{B}} \in \mathbb{R}^{\ell \times n_{2} \times n_{3}}$ is constructed by Algorithm \ref{tensor-FD}, then for any $k<\frac{\ell}{c}$,
	$$
	\left\|\mathbf{\mathcal{A}}^{T}*\mathbf{\mathcal{A}}-\mathbf{\mathcal{B}}^{T}*\mathbf{\mathcal{B}}\right\|\le\frac{\left\|\mathbf{\mathcal{A}}-\mathbf{\mathcal{A}}_{k}\right\|_{F}^{2}}{\frac{\ell}{c}-k},$$
	where $c=\frac{n_{3}\sum_{j=1}^{n_1}\max\limits_{i} \delta_{j}^{(i)} }{\sum_{j=1}^{n_1}\sum_{i=1}^{n_{3}} \delta_{j}^{(i)}}$.
\end{theorem}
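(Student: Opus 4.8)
The plan is to push everything into the Fourier domain, where the tensor covariance error becomes a maximum over frontal slices of ordinary matrix covariance errors, each of which is controlled by the matrix Frequent Directions guarantee, and then to stitch these slice-wise bounds together through the constant $c$.

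\emph{Reduction to the Fourier domain.} By the tensor spectral norm identity of Definition~\ref{def_tsn}, in particular \eqref{simprop}, $\|\mathcal A^{T}\!*\mathcal A-\mathcal B^{T}\!*\mathcal B\|$ equals the spectral norm of the block-diagonal matrix whose $i$-th block is $\boldsymbol{\bar{A}}^{(i)*}\boldsymbol{\bar{A}}^{(i)}-\boldsymbol{\bar{B}}^{(i)*}\boldsymbol{\bar{B}}^{(i)}$: here one uses that in the Fourier domain the $t$-product acts blockwise and the (conjugate) tensor transpose becomes the conjugate transpose of each block, so the block form of $\overline{\mathcal A^{T}*\mathcal A}$ is $\boldsymbol{\bar{A}}^{*}\boldsymbol{\bar{A}}$. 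Since a block-diagonal matrix has spectral norm equal to the largest spectral norm among its blocks, it suffices to bound $\max_{i}\|\boldsymbol{\bar{A}}^{(i)*}\boldsymbol{\bar{A}}^{(i)}-\boldsymbol{\bar{B}}^{(i)*}\boldsymbol{\bar{B}}^{(i)}\|$. Moreover, because the mode-$3$ FFT is linear and does not mix horizontal slices, inserting the horizontal slices of $\mathcal A$ into $\mathcal B$ and then FFT-ing is the same as, for each fixed $i$, feeding the rows of $\boldsymbol{\bar{A}}^{(i)}\in\mathbb C^{n_{1}\times n_{2}}$ into $\boldsymbol{\bar{B}}^{(i)}$; thus every frontal slice executes the (complex) matrix FD algorithm on $\boldsymbol{\bar{A}}^{(i)}$, with the shrink steps triggered simultaneously (whenever $\mathcal B$ is full) but by the slice-dependent amount $\delta_{j}^{(i)}=s_{\ell}^{(i)2}$.

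\emph{Per-slice invariants.} Applying the matrix FD analysis of \cite{ghashami2016frequent,liberty2013simple} to each slice, I would record that $\Delta^{(i)}:=\boldsymbol{\bar{A}}^{(i)*}\boldsymbol{\bar{A}}^{(i)}-\boldsymbol{\bar{B}}^{(i)*}\boldsymbol{\bar{B}}^{(i)}=\sum_{j}\Delta_{j}^{(i)}$ (telescoping over insertions and shrinks), with $0\preceq\Delta_{j}^{(i)}\preceq\delta_{j}^{(i)}\boldsymbol I$ and $\operatorname{tr}(\Delta_{j}^{(i)})\ge\ell\,\delta_{j}^{(i)}$ — the $2\ell$ internal rows guarantee that at least $\ell$ singular values of the full slice are $\ge s_{\ell}^{(i)}$, which is exactly where the factor $\ell$ enters. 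Writing $\alpha_{i}:=\sum_{j}\delta_{j}^{(i)}$, $\alpha:=\sum_{i}\alpha_{i}=\sum_{j}\sum_{i}\delta_{j}^{(i)}$ and $\beta:=\sum_{j}\max_{i}\delta_{j}^{(i)}$, these give $\Delta^{(i)}\succeq0$, $\|\Delta^{(i)}\|\le\alpha_{i}\le\beta$, $\operatorname{tr}(\Delta^{(i)})\ge\ell\alpha_{i}$, and $c=n_{3}\beta/\alpha$.

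\emph{Top-$k$ argument summed over slices.} Let $\boldsymbol V_{k}^{(i)}$ collect the top-$k$ right singular vectors of $\boldsymbol{\bar{A}}^{(i)}$. Expanding the traces and discarding a nonnegative term yields $\operatorname{tr}(\Delta^{(i)})-\operatorname{tr}(\boldsymbol V_{k}^{(i)*}\Delta^{(i)}\boldsymbol V_{k}^{(i)})\le\|\boldsymbol{\bar{A}}^{(i)}-(\boldsymbol{\bar{A}}^{(i)})_{k}\|_{F}^{2}$, while $\operatorname{tr}(\boldsymbol V_{k}^{(i)*}\Delta^{(i)}\boldsymbol V_{k}^{(i)})\le k\|\Delta^{(i)}\|$ because $\Delta^{(i)}\succeq0$. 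Summing over $i$ and using $\sum_{i}\operatorname{tr}(\Delta^{(i)})\ge\ell\alpha$, $\sum_{i}\|\Delta^{(i)}\|\le n_{3}\beta=c\alpha$, together with the Parseval-type identity $\sum_{i}\|\boldsymbol{\bar{A}}^{(i)}-(\boldsymbol{\bar{A}}^{(i)})_{k}\|_{F}^{2}=\|\bar{\mathcal A}-\bar{\mathcal A}_{k}\|_{F}^{2}=n_{3}\|\mathcal A-\mathcal A_{k}\|_{F}^{2}$ (the last equality invoking that the slicewise best rank-$k$ truncation is precisely the best tubal-rank-$k$ truncation), I obtain $(\ell-kc)\alpha\le n_{3}\|\mathcal A-\mathcal A_{k}\|_{F}^{2}$.

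\emph{Assembling the bound, and the main obstacle.} From $\max_{i}\|\Delta^{(i)}\|\le\beta=c\alpha/n_{3}$ and the hypothesis $k<\ell/c$ (which makes both $\ell/c-k$ and $\ell-kc$ positive), $\bigl(\tfrac{\ell}{c}-k\bigr)\max_{i}\|\Delta^{(i)}\|\le\bigl(\tfrac{\ell}{c}-k\bigr)\tfrac{c\alpha}{n_{3}}=\tfrac{(\ell-kc)\alpha}{n_{3}}\le\|\mathcal A-\mathcal A_{k}\|_{F}^{2}$; dividing by $\ell/c-k$ and recalling the first-step reduction finishes the proof. The step I expect to be the main obstacle is making the first two paragraphs fully rigorous: one has to verify that the per-frontal-slice iterations are genuine complex matrix-FD runs even though the shrink steps are forced jointly by heterogeneous amounts $\delta_{j}^{(i)}$, confirm that the $2\ell$-row internal sketch delivers exactly the constant $\ell$ in the trace inequality, and keep careful track that the Frobenius norm picks up the factor $n_{3}$ when moving between $\mathcal A$ and $\bar{\mathcal A}$ whereas the spectral norm does not; once these are settled, the recombination through $c$ is routine.
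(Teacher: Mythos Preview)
Your proposal is correct and reaches exactly the stated bound. The organization differs from the paper's: the paper works at the tensor level throughout, proving Properties~\ref{pro1}--\ref{pro3} about tensor columns $\vec{\boldsymbol x}$ via the $\mathtt{bcirc}$ bridge (Lemma~\ref{lemma:re} turns the covariance norm into $\|\mathcal A*\vec{\boldsymbol x}\|_{2^*}^2-\|\mathcal B*\vec{\boldsymbol x}\|_{2^*}^2$, Property~\ref{pro2} bounds this by $\Delta=\beta$, and Property~\ref{pro3} bounds $\Delta$ through a Frobenius-norm telescoping and a top-$k$ argument using the lateral slices $\vec{\boldsymbol y}_i$ of the t-SVD of $\mathcal A$), whereas you pass immediately to the block-diagonal Fourier form, recognise each frontal slice as an independent complex matrix-FD run, and sum per-slice matrix-FD inequalities. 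The mathematical content coincides: your slice invariants $\|\Delta^{(i)}\|\le\alpha_i\le\beta$ and $\operatorname{tr}(\Delta^{(i)})\ge\ell\alpha_i$ are exactly the slicewise translations of Properties~\ref{pro2} and~\ref{pro3}, and your per-slice top-$k$ directions $\boldsymbol V_k^{(i)}$ are, by construction of the t-SVD, the Fourier slices of the paper's $\vec{\boldsymbol y}_i$. Your route makes the reduction to matrix FD fully explicit and is more transparent about where $c$ enters; the paper's route keeps the tensor algebra intact, which extends more uniformly to the order-$p$ case. As a side remark, in your Step~3 you could have used the sharper $\sum_i\|\Delta^{(i)}\|\le\sum_i\alpha_i=\alpha$ instead of $n_3\beta$; that would yield the tighter bound $\tfrac{c}{\ell-k}\|\mathcal A-\mathcal A_k\|_F^2$, while using $n_3\beta$ simply recovers the theorem as stated.
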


The second error metric is the tensor projection error, $\left\|\mathbf{\mathcal{A}}-\mathbf{\mathcal{A}} * \mathbf{\mathcal{V}}_{k} * \mathbf{\mathcal{V}}_{k}^{T}\right\|_{F}^{2}$, where $\mathbf{\mathcal{A}} * \mathbf{\mathcal{V}}_{k} * \mathbf{\mathcal{V}}_{k}^{T}$ denotes the projection of $\mathbf{\mathcal{A}}$ on the rank $k$ right orthogonal tensor of $\mathbf{\mathcal{B}}$. Intuitively, the tensor projection error measures the deviation generated during the projection process, and further indicates how accurate the choice of subspace is. The detailed analysis is shown in the following theorem.
\begin{theorem}[Tensor projection error]\label{main}
	Given $\mathbf{\mathcal{A}} \in \mathbb{R}^{n_{1} \times n_{2} \times n_{3}}$ and the sketch size $\ell$, the sketch tensor $\mathbf{\mathcal{B}} \in \mathbb{R}^{\ell \times n_{2} \times n_{3}}$ is constructed by Algorithm \ref{tensor-FD}. Let $\mathbf{\mathcal{B}}=\mathbf{\mathcal{U}}*\mathbf{\mathcal{S}}*\mathbf{\mathcal{V}}^{T}$, and the tubal-rank-$k$ approximation is $\mathbf{\mathcal{B}}_{k}=\mathbf{\mathcal{U}}_{k}*\mathbf{\mathcal{S}}_{k}*\mathbf{\mathcal{V}}_{k}^{T}$. For any $k<\frac{\ell}{c}$, we have that
	\begin{align}\label{tpecompare}
	\left\|\mathbf{\mathcal{A}}-\mathbf{\mathcal{A}} * \mathbf{\mathcal{V}}_{k} * \mathbf{\mathcal{V}}_{k}^{T}\right\|_{F}^{2} \le \frac{\ell}{\ell-ck}\left\|\mathbf{\mathcal{A}}-\mathbf{\mathcal{A}}_{k}\right\|_{F}^{2}, 
	\end{align}
	where $c=\frac{n_{3}\sum_{j=1}^{n_1}\max\limits_{i} \delta_{j}^{(i)} }{\sum_{j=1}^{n_1}\sum_{i=1}^{n_{3}} \delta_{j}^{(i)}}$. 
\end{theorem}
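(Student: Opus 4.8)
The plan is to transplant the matrix Frequent Directions projection-error argument into the t-product algebra, carrying out the quantitative estimates slicewise in the Fourier domain and feeding in Theorem~\ref{main1} as the only numerical input. Throughout, write $\Delta:=\|\mathbf{\mathcal{A}}-\mathbf{\mathcal{A}}_k\|_F^2/(\ell/c-k)$, so that Theorem~\ref{main1} reads $\|\mathbf{\mathcal{A}}^{T}*\mathbf{\mathcal{A}}-\mathbf{\mathcal{B}}^{T}*\mathbf{\mathcal{B}}\|\le\Delta$. First I would note that, since $\mathbf{\mathcal{V}}_k$ is partially orthogonal, $\mathbf{\mathcal{P}}:=\mathbf{\mathcal{V}}_k*\mathbf{\mathcal{V}}_k^{T}$ is idempotent and self-transpose, hence an orthogonal projection for the Frobenius inner product; this yields the Pythagorean identity $\|\mathbf{\mathcal{A}}-\mathbf{\mathcal{A}}*\mathbf{\mathcal{V}}_k*\mathbf{\mathcal{V}}_k^{T}\|_F^2=\|\mathbf{\mathcal{A}}\|_F^2-\|\mathbf{\mathcal{A}}*\mathbf{\mathcal{V}}_k\|_F^2$, together with the analogous decompositions $\mathbf{\mathcal{B}}_k=\mathbf{\mathcal{B}}*\mathbf{\mathcal{V}}_k*\mathbf{\mathcal{V}}_k^{T}$ and $\mathbf{\mathcal{A}}_k=\mathbf{\mathcal{A}}*\mathbf{\mathcal{Z}}_k*\mathbf{\mathcal{Z}}_k^{T}$, where $\mathbf{\mathcal{Z}}_k$ denotes the first $k$ lateral slices of the $\mathbf{\mathcal{V}}$-factor in the t-SVD of $\mathbf{\mathcal{A}}$. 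Thus the whole theorem reduces to the single lower bound $\|\mathbf{\mathcal{A}}*\mathbf{\mathcal{V}}_k\|_F^2\ge\|\mathbf{\mathcal{A}}_k\|_F^2-k\Delta$: granting it, the Pythagorean identity gives $\|\mathbf{\mathcal{A}}-\mathbf{\mathcal{A}}*\mathbf{\mathcal{V}}_k*\mathbf{\mathcal{V}}_k^{T}\|_F^2\le\|\mathbf{\mathcal{A}}-\mathbf{\mathcal{A}}_k\|_F^2+k\Delta$, and substituting $\Delta$ and using $1+\tfrac{k}{\ell/c-k}=\tfrac{\ell}{\ell-ck}$ produces \eqref{tpecompare}.

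I would establish the lower bound in two links. \emph{Link 1.} Within the Fourier domain Algorithm~\ref{tensor-FD} performs ordinary matrix FD on each frontal slice $\boldsymbol{\bar{B}}^{(i)}$, so the standard shrinking property gives $\boldsymbol{\bar{A}}^{(i)*}\boldsymbol{\bar{A}}^{(i)}\succeq\boldsymbol{\bar{B}}^{(i)*}\boldsymbol{\bar{B}}^{(i)}$ for every $i$, i.e.\ $\mathbf{\mathcal{A}}^{T}*\mathbf{\mathcal{A}}-\mathbf{\mathcal{B}}^{T}*\mathbf{\mathcal{B}}$ is ``t-positive-semidefinite''. Using the adjoint property of the t-product, $\langle\mathbf{\mathcal{X}}*\mathbf{\mathcal{Y}},\mathbf{\mathcal{Z}}\rangle=\langle\mathbf{\mathcal{Y}},\mathbf{\mathcal{X}}^{T}*\mathbf{\mathcal{Z}}\rangle$, and passing to the Fourier domain (where each FFT rescales $\|\cdot\|_F^2$ by the factor $n_3$), I obtain $\|\mathbf{\mathcal{A}}*\mathbf{\mathcal{V}}_k\|_F^2-\|\mathbf{\mathcal{B}}*\mathbf{\mathcal{V}}_k\|_F^2=\langle\mathbf{\mathcal{V}}_k,(\mathbf{\mathcal{A}}^{T}*\mathbf{\mathcal{A}}-\mathbf{\mathcal{B}}^{T}*\mathbf{\mathcal{B}})*\mathbf{\mathcal{V}}_k\rangle\ge0$, hence $\|\mathbf{\mathcal{A}}*\mathbf{\mathcal{V}}_k\|_F^2\ge\|\mathbf{\mathcal{B}}*\mathbf{\mathcal{V}}_k\|_F^2=\|\mathbf{\mathcal{B}}_k\|_F^2$, the last equality by the Pythagorean identity for $\mathbf{\mathcal{B}}$ together with $\mathbf{\mathcal{B}}_k=\mathbf{\mathcal{B}}*\mathbf{\mathcal{V}}_k*\mathbf{\mathcal{V}}_k^{T}$. \emph{Link 2.} The tensor $\mathbf{\mathcal{B}}*\mathbf{\mathcal{Z}}_k*\mathbf{\mathcal{Z}}_k^{T}$ has tubal rank at most $k$, so the best tubal-rank-$k$ approximation lemma plus the Pythagorean identity for $\mathbf{\mathcal{B}}$ give $\|\mathbf{\mathcal{B}}_k\|_F^2\ge\|\mathbf{\mathcal{B}}*\mathbf{\mathcal{Z}}_k\|_F^2$. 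Expanding $\|\mathbf{\mathcal{B}}*\mathbf{\mathcal{Z}}_k\|_F^2=\|\mathbf{\mathcal{A}}*\mathbf{\mathcal{Z}}_k\|_F^2-\langle\mathbf{\mathcal{Z}}_k,(\mathbf{\mathcal{A}}^{T}*\mathbf{\mathcal{A}}-\mathbf{\mathcal{B}}^{T}*\mathbf{\mathcal{B}})*\mathbf{\mathcal{Z}}_k\rangle$ and using $\|\mathbf{\mathcal{A}}*\mathbf{\mathcal{Z}}_k\|_F^2=\|\mathbf{\mathcal{A}}_k\|_F^2$ reduces Link 2 to bounding the quadratic form: since each Fourier slice of $\mathbf{\mathcal{A}}^{T}*\mathbf{\mathcal{A}}-\mathbf{\mathcal{B}}^{T}*\mathbf{\mathcal{B}}$ is positive semidefinite with spectral norm at most $\|\mathbf{\mathcal{A}}^{T}*\mathbf{\mathcal{A}}-\mathbf{\mathcal{B}}^{T}*\mathbf{\mathcal{B}}\|\le\Delta$ (using equation~\eqref{simprop}) and $\mathbf{\mathcal{Z}}_k$ contributes $k$ orthonormal columns in each slice, that form is at most $k\Delta$. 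Chaining the two links, $\|\mathbf{\mathcal{A}}*\mathbf{\mathcal{V}}_k\|_F^2\ge\|\mathbf{\mathcal{B}}_k\|_F^2\ge\|\mathbf{\mathcal{A}}_k\|_F^2-k\Delta$, which is exactly the bound needed above; assembling everything and simplifying the arithmetic completes the proof.

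\textbf{Main obstacle.} The conceptual skeleton is short --- it mirrors the matrix FD argument behind \eqref{eq:fd} --- and the genuine work lies in the bookkeeping between the t-product world and its block-diagonal Fourier image. I expect the delicate points to be: (i) verifying that the per-slice shrinking in Algorithm~\ref{tensor-FD} really produces the ordering $\boldsymbol{\bar{A}}^{(i)*}\boldsymbol{\bar{A}}^{(i)}\succeq\boldsymbol{\bar{B}}^{(i)*}\boldsymbol{\bar{B}}^{(i)}$, and that the leading $k$ lateral slices of the right t-SVD factors of $\mathbf{\mathcal{B}}$ and $\mathbf{\mathcal{A}}$ coincide slicewise with the top-$k$ right singular subspaces of $\boldsymbol{\bar{B}}^{(i)}$ and $\boldsymbol{\bar{A}}^{(i)}$ (immediate from the computation of the t-SVD in Steps~1--5 above); (ii) tracking the $1/n_3$ factors so that the identities $\|\mathbf{\mathcal{B}}*\mathbf{\mathcal{V}}_k\|_F^2=\|\mathbf{\mathcal{B}}_k\|_F^2$, $\|\mathbf{\mathcal{A}}*\mathbf{\mathcal{Z}}_k\|_F^2=\|\mathbf{\mathcal{A}}_k\|_F^2$ and the adjoint relation survive the FFT; and (iii) the elementary lemma that $\langle\mathbf{\mathcal{Z}}_k,\mathbf{\mathcal{M}}*\mathbf{\mathcal{Z}}_k\rangle\le k\|\mathbf{\mathcal{M}}\|$ whenever $\mathbf{\mathcal{M}}$ is t-positive-semidefinite and $\mathbf{\mathcal{Z}}_k$ is a partial isometry with $k$ lateral slices. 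Once these slicewise facts are in place, all remaining inequalities reduce to routine matrix facts (monotonicity of eigenvalues, the min--max principle, optimality of the truncated SVD).
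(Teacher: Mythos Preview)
Your proposal is correct and follows essentially the same route as the paper. The paper works lateral-slice-by-lateral-slice with the $\ell_{2^*}$ norm, invoking Properties~\ref{pro1}--\ref{pro3} in sequence to obtain the chain $\|\mathcal{A}*\mathcal{V}_k\|_F^2\ge\sum_i\|\mathcal{B}*\vec{\boldsymbol{v}}_i\|_{2^*}^2\ge\sum_i\|\mathcal{B}*\vec{\boldsymbol{y}}_i\|_{2^*}^2\ge\|\mathcal{A}_k\|_F^2-k\Delta$, whereas you package the same three steps at the full-tensor level via Frobenius inner products in the Fourier domain and cite Theorem~\ref{main1} as a black box for the numerical input; the two are the same argument in different notation, and your ``Link~1'' is exactly the content of Property~\ref{pro1}.
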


\begin{remark} If we set $\ell=c\lceil k+k / \varepsilon\rceil$, we can get the standard $(1+\epsilon)$ bound form as $$     \left\|\mathbf{\mathcal{A}}-\mathbf{\mathcal{A}} * \mathbf{\mathcal{V}}_{k} * \mathbf{\mathcal{V}}_{k}^{T}\right\|_{F}^{2} \le (1+\varepsilon)\left\|\mathbf{\mathcal{A}}-\mathbf{\mathcal{A}}_{k}\right\|_{F}^{2}.$$ Note that when the rank $k$ and parameter $c$ are fixed, as the sketch size increases, $ \varepsilon$ would decrease linearly, thus we could achieve nearly optimal deterministic error bound. Moreover, when the third dimension $n_3$ is $1$, our algorithm reduce to the matrix FD, that is to say, matrix FD is a special case of t-FD. Remember that we could identify the parameter $c = 1$ when $n_3 = 1$, then the theoretical guarantee of matrix FD is a special case of our Theorems 1 and 2. \end{remark}

\begin{algorithm}[!htb]
	\caption{tensor FD (t-FD) for order-$p$ tensors ($p > 3$)}
	\label{tensor-FDp}
	\begin{algorithmic}[1]
		\REQUIRE $\mathbf{\mathcal{A}} \in \mathbb{R}^{n_{1} \times n_{2} \times \cdots\times n_{p}}, \text { sketch size } \ell$\\
		\ENSURE $\mathbf{\mathcal{B}} \in \mathbb{R}^{\ell \times n_{2} \times \cdots\times n_{p}}$\\
		\STATE Initialize $\mathcal{B} \in \mathbb{R}^{2\ell \times n_{2} \times \cdots\times n_{p}}$ as a tensor with all elements being zero
		\FOR{$j=1, \ldots, n_{1}$} 
		\STATE Initialize $\delta_{j}^{(i)}=0 , i=1,\ldots,\rho$
		\STATE Insert $\mathcal{A}_j$ into a zero valued horizontal tensor of $\mathcal{B}$, where $\mathbf{\mathcal{A}} =[\mathbf{\mathcal{A}}_1, ..., \mathbf{\mathcal{A}}_{n_1} ]$ and $\mathcal{A}_j \in \mathbb{R}^{1 \times n_2  \times\cdots \times n_{p}}$\\
		\IF{$\mathcal{B}$ is fulfilled}
		\FOR{$i=3,\ldots,p$}
		\STATE Compute $\mathbf{\mathcal{B}}=\operatorname{fft}(\mathbf{\mathcal{B}},[], i)$\\
		\ENDFOR
		\STATE Set $\mathbf{\bar{\mathcal{B}}}=\mathcal{B}$
		\FOR{$i=1,\ldots,\rho$}
		\STATE $\left[\boldsymbol{\bar{U}}^{(i)}, \boldsymbol{\bar{S}}^{(i)}, \boldsymbol{\bar{V}}^{(i)}\right] \leftarrow \operatorname{svd}(\boldsymbol{\bar{B}}^{(i)})$\\
		\STATE $\boldsymbol{\bar{C}}^{(i)} \leftarrow \boldsymbol{\bar{S}}^{(i)}\boldsymbol{\bar{V}}^{(i)T}$\\
		\STATE $\delta_{j}^{(i)} \leftarrow s_{\ell}^{(i)2}$\\
		\STATE $\boldsymbol{\bar{B}}^{(i)} \leftarrow \sqrt{\max(\boldsymbol{\bar{S}}^{(i)2}-\delta_{j}^{(i)} \boldsymbol{I}_{2\ell},0)} \cdot \boldsymbol{\bar{V}}^{(i)T}$
		\STATE $\mathbf{\bar{\mathcal{B}}}^{(i)} \leftarrow \boldsymbol{\bar{B}}^{(i)}$\\
		\ENDFOR
		\FOR{$i=3,\ldots,p$}
		\STATE Compute $\mathbf{\bar{\mathcal{B}}}=\operatorname{ifft}(\mathbf{\bar{\mathcal{B}}},[], i)$\\
		\ENDFOR
		\STATE Set $\mathcal{B}=\mathbf{\bar{\mathcal{B}}}$
		\ENDIF
		\ENDFOR	
		\STATE Return $\mathbf{\mathcal{B}}$
	\end{algorithmic}
\end{algorithm}

\begin{remark}It should be pointed out, as detailedly shown in Section VI, our proof techniques are different from the matrix FD case in two aspects. Firstly, t-FD algorithm based on t-SVD computes on the Fourier domain, which leads us to use tensor  operators such as $\mathtt{bcirc}$ and $\mathtt{unfold}$ as a bridge to find the relationship between original and Fourier domains. Secondly, to bound the information loss in each iteration requires us to utilize the relationship among frontal slices.
	
\end{remark}

\subsection{Extension to order-$p$ tensors ($p > 3$)}

	We now state the algorithmic procedure for the general case of order-$p$ tensors ($p >3$) in Algorithm \ref{tensor-FDp}. Even though the above results are focused on the third-order tensors,  these can be fairly easy to generalize for higher order tensors, by combing the well-defined algebraic framework of order-$p$ t-SVD \cite{martin2013order} ($p > 3$) along with the properties of the discrete Fourier transform (DFT) matrix. It is worth mentioning that for the original tensor $\mathbf{\mathcal{A}} \in \mathbb{R}^{n_{1} \times n_{2} \times \cdots\times n_{p}}$ and the sketch tensor $\mathbf{\mathcal{B}} \in \mathbb{R}^{\ell \times n_{2} \times \cdots\times n_{p}}$, the tensor covariance and projection errors are similar to the third-order case, with the only difference being that $c$ changes from $\frac{n_{3}\sum_{j=1}^{n_1}\max\limits_{i} \delta_{j}^{(i)} }{\sum_{j=1}^{n_1}\sum_{i=1}^{n_{3}} \delta_{j}^{(i)}}$ to $\frac{\rho\sum_{j=1}^{n_1}\max\limits_{i} \delta_{j}^{(i)} }{\sum_{j=1}^{n_1}\sum_{i=1}^{\rho} \delta_{j}^{(i)}}$. The details of such error bounds are shown below. 
\begin{theorem}[Tensor covariance error for order-$p$ tensors ($p > 3$)]\label{tce-p-order}
	Given $\mathbf{\mathcal{A}} \in \mathbb{R}^{n_{1} \times n_{2} \times \cdots\times n_{p}}$ and the sketch size $\ell$, the sketch $\mathbf{\mathcal{B}} \in \mathbb{R}^{\ell \times n_{2} \times \cdots\times n_{p}}$ is constructed by Algorithm \ref{tensor-FDp}, then for any $k<\frac{\ell}{c}$,
	$$
	\left\|\mathbf{\mathcal{A}}^{T}*\mathbf{\mathcal{A}}-\mathbf{\mathcal{B}}^{T}*\mathbf{\mathcal{B}}\right\|\le\frac{\left\|\mathbf{\mathcal{A}}-\mathbf{\mathcal{A}}_{k}\right\|_{F}^{2}}{\frac{\ell}{c}-k},$$
	where $c=\frac{\rho\sum_{j=1}^{n_1}\max\limits_{i} \delta_{j}^{(i)} }{\sum_{j=1}^{n_1}\sum_{i=1}^{\rho} \delta_{j}^{(i)}}$ and $\rho=n_{3}n_{4}\ldots n_p$.
\end{theorem}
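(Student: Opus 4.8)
The plan is to reduce the order-$p$ case to the analysis already developed for the third-order case in Theorem \ref{main1} by exploiting the recursive structure of the order-$p$ t-SVD. The key observation is that the tensor spectral norm on the left-hand side, by the chain of equalities in Definition \ref{def_tsn}, equals $\|\boldsymbol{\bar{A}}^{T}\boldsymbol{\bar{A}} - \boldsymbol{\bar{B}}^{T}\boldsymbol{\bar{B}}\|$, where $\boldsymbol{\bar{A}}$ and $\boldsymbol{\bar{B}}$ are the block-diagonal matrices whose blocks are the $\rho = n_3 n_4 \cdots n_p$ frontal slices in the Fourier domain. Thus the quantity factors across the $\rho$ frontal slices exactly as it did across $n_3$ slices when $p=3$, and the combinatorial structure of Algorithm \ref{tensor-FDp} (insert horizontal tensors, FFT along all modes $3,\dots,p$, per-slice SVD and shrink, inverse FFT) is identical to Algorithm \ref{tensor-FD} once one replaces the single FFT along mode $3$ by the composite transform $\boldsymbol{\tilde F} = \boldsymbol{F}_{n_p}\otimes\cdots\otimes\boldsymbol{F}_{n_3}$ appearing in \eqref{orthop}.

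First I would record the norm-translation lemma: using \eqref{orthop}, \eqref{simprop}, and the fact that $(\boldsymbol{\tilde F}\otimes\boldsymbol{I})/\sqrt{\rho}$ is orthogonal, establish that $\|\mathcal{A}\|_F^2 = \frac{1}{\rho}\|\boldsymbol{\bar A}\|_F^2 = \frac{1}{\rho}\sum_{i=1}^{\rho}\|\boldsymbol{\bar A}^{(i)}\|_F^2$ and likewise that $\|\mathcal{A}^T*\mathcal{A} - \mathcal{B}^T*\mathcal{B}\| = \max_i \|\boldsymbol{\bar A}^{(i)T}\boldsymbol{\bar A}^{(i)} - \boldsymbol{\bar B}^{(i)T}\boldsymbol{\bar B}^{(i)}\|$; this is the exact analogue of the bridging step flagged in Remark 2. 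Next I would track, slice by slice in the Fourier domain, the per-iteration information loss: when $\mathcal{B}$ is full at step $j$, slice $i$ loses exactly $\delta_j^{(i)}$ from each of its $2\ell$ squared singular values in the shrink step (steps 13--14 of Algorithm \ref{tensor-FDp}), so the matrix-FD argument of \cite{ghashami2016frequent} applied to block $i$ yields $\|\boldsymbol{\bar A}^{(i)T}\boldsymbol{\bar A}^{(i)} - \boldsymbol{\bar B}^{(i)T}\boldsymbol{\bar B}^{(i)}\| \le \sum_{j=1}^{n_1}\delta_j^{(i)}$, together with the rank-accounting inequality $\ell\,\delta^{(i)}_{\mathrm{tot}} \le \|\boldsymbol{\bar A}^{(i)} - \boldsymbol{\bar A}^{(i)}_k\|_F^2 + k\,\delta^{(i)}_{\mathrm{tot}}$ where $\delta^{(i)}_{\mathrm{tot}} = \sum_j \delta_j^{(i)}$. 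Summing the projection-type inequality over $i$, using the Fourier Frobenius identity from the first step to identify $\frac{1}{\rho}\sum_i \|\boldsymbol{\bar A}^{(i)} - \boldsymbol{\bar A}^{(i)}_k\|_F^2$ with $\|\mathcal{A} - \mathcal{A}_k\|_F^2$, and then bounding the maximum over $i$ of $\delta^{(i)}_{\mathrm{tot}}$ by $c/\rho$ times the average $\frac{1}{\rho}\sum_i \delta^{(i)}_{\mathrm{tot}}$ — which is precisely the definition of $c = \rho\sum_j\max_i\delta_j^{(i)} / \sum_j\sum_i\delta_j^{(i)}$ after interchanging the max and the sum appropriately — delivers the stated bound $\|\mathcal{A}^T*\mathcal{A}-\mathcal{B}^T*\mathcal{B}\| \le \|\mathcal{A}-\mathcal{A}_k\|_F^2 / (\ell/c - k)$.

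The main obstacle is the bookkeeping around the constant $c$: the per-slice bounds naturally produce $\max_i \sum_j \delta_j^{(i)}$, whereas $c$ is defined through $\sum_j \max_i \delta_j^{(i)}$, so one must carefully argue that replacing $\max_i \delta^{(i)}_{\mathrm{tot}}$ by $\sum_j \max_i \delta_j^{(i)}$ (which is valid since $\max_i \sum_j \delta_j^{(i)} \le \sum_j \max_i \delta_j^{(i)}$) only loosens the bound in the safe direction, and that the denominator $\ell/c - k$ is positive under the hypothesis $k < \ell/c$. A secondary technical point is that the shrink amounts $\delta_j^{(i)}$ depend on the whole history of the stream, so the rank-accounting inequality must be summed over iterations with care to ensure the telescoping used in the matrix-FD proof still closes; but since the shrink operates independently on each Fourier slice and the FFT/iFFT are linear, this goes through mutatis mutandis. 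Everything else — the t-product/$\mathtt{bcirc}$ identities, the f-diagonal structure of $\mathcal{S}$, and the optimality of $\mathcal{A}_k$ from Lemma \ref{lemma2} — is available directly from the preliminaries, so no genuinely new estimate beyond the Fourier-domain translation is required.
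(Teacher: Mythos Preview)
Your proposal is correct and takes a genuinely different route from the paper's own proof. The paper proceeds entirely at the tensor level: it uses Lemma~\ref{lemma:re} to rewrite the covariance error as $\|\mathcal{A}*\vec{\boldsymbol{x}}\|_{2^*}^2-\|\mathcal{B}*\vec{\boldsymbol{x}}\|_{2^*}^2$ for a suitable tensor column $\vec{\boldsymbol{x}}$, then invokes the order-$p$ analogues of Properties~\ref{pro2} and~\ref{pro3} to bound this by $\Delta=\sum_j\max_i\delta_j^{(i)}$ and subsequently by $\|\mathcal{A}-\mathcal{A}_k\|_F^2/(\ell/c-k)$, with the Frobenius-norm identity $\|\mathcal{C}_j\|_F^2=\frac{1}{\rho}\|\boldsymbol{\bar C}_j\|_F^2$ being the only place where $\rho$ replaces $n_3$. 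You instead work slice-by-slice in the Fourier domain, applying the matrix-FD bound of \cite{ghashami2016frequent} directly to each $\boldsymbol{\bar A}^{(i)}$ to obtain $(\ell-k)\,\delta^{(i)}_{\mathrm{tot}}\le\|\boldsymbol{\bar A}^{(i)}-\boldsymbol{\bar A}^{(i)}_k\|_F^2$, then sum over $i$ and use the identity $\frac{1}{\rho}\sum_i\|\boldsymbol{\bar A}^{(i)}-\boldsymbol{\bar A}^{(i)}_k\|_F^2=\|\mathcal{A}-\mathcal{A}_k\|_F^2$. Your route is more elementary (it reduces cleanly to the matrix result) and in fact delivers the sharper intermediate bound $\Delta\le\frac{c}{\ell-k}\|\mathcal{A}-\mathcal{A}_k\|_F^2$, which implies the stated $\frac{c}{\ell-ck}$ since $c\ge 1$; the paper's tensor-level argument loses this factor because it applies the global loss $\Delta$ (rather than the per-slice $\delta^{(i)}_{\mathrm{tot}}$) to each of the $k$ singular directions in Property~\ref{pro3}. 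Two minor points: your phrase ``$c/\rho$ times the average'' should read ``$c$ times the average'' (since $\Delta=\frac{c}{\rho}\sum_i\delta^{(i)}_{\mathrm{tot}}=c\cdot\text{average}$), and you should state explicitly that $c\ge 1$ so that your sharper bound implies the theorem as written.
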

\begin{theorem}[Tensor projection error for order-$p$ tensors ($p > 3$)]\label{tpe-p-order}
	Given $\mathbf{\mathcal{A}} \in \mathbb{R}^{n_{1} \times n_{2} \times \cdots\times n_{p}}$ and the sketch size $\ell$, the sketch tensor $\mathbf{\mathcal{B}} \in \mathbb{R}^{\ell \times n_{2} \times \cdots\times n_{p}}$ is constructed by Algorithm \ref{tensor-FDp}. Let $\mathbf{\mathcal{B}}=\mathbf{\mathcal{U}}*\mathbf{\mathcal{S}}*\mathbf{\mathcal{V}}^{T}$, and the tubal-rank-$k$ approximation is $\mathbf{\mathcal{B}}_{k}=\mathbf{\mathcal{U}}_{k}*\mathbf{\mathcal{S}}_{k}*\mathbf{\mathcal{V}}_{k}^{T}$. For any $k<\frac{\ell}{c}$, we have that
	\begin{align}
	\left\|\mathbf{\mathcal{A}}-\mathbf{\mathcal{A}} * \mathbf{\mathcal{V}}_{k} * \mathbf{\mathcal{V}}_{k}^{T}\right\|_{F}^{2} \le \frac{\ell}{\ell-ck}\left\|\mathbf{\mathcal{A}}-\mathbf{\mathcal{A}}_{k}\right\|_{F}^{2}, \label{thm4111}
	\end{align}
	where $c=\frac{\rho\sum_{j=1}^{n_1}\max\limits_{i} \delta_{j}^{(i)} }{\sum_{j=1}^{n_1}\sum_{i=1}^{\rho} \delta_{j}^{(i)}}$ and $\rho=n_{3}n_{4}\ldots n_p$. 
\end{theorem}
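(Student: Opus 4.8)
The plan is to reduce the order-$p$ statement to a slicewise application of the matrix Frequent Directions projection bound in the Fourier domain, mirroring the proof of Theorem \ref{main} with $n_3$ replaced by $\rho=n_3n_4\cdots n_p$, and then to chain the resulting estimate with the order-$p$ covariance bound of Theorem \ref{tce-p-order}.

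First I would set up the order-$p$ Fourier-domain dictionary. Using (\ref{orthop}) and the fact that $(\boldsymbol{\tilde F}\otimes\boldsymbol{I}_{n_1})/\sqrt{\rho}$ is orthogonal (Parseval applied once per mode $3,\dots,p$), one gets: (i) $\|\mathcal{X}\|_F^2=\tfrac1\rho\|\boldsymbol{\bar X}\|_F^2=\tfrac1\rho\sum_{i=1}^{\rho}\|\boldsymbol{\bar X}^{(i)}\|_F^2$ for every tensor $\mathcal{X}$; (ii) the t-product corresponds to block-diagonal matrix multiplication, so the Fourier blocks of $\mathcal{A}*\mathcal{V}_k*\mathcal{V}_k^{T}$ are $\boldsymbol{\bar A}^{(i)}\boldsymbol{\bar P}_k^{(i)}$, where $\boldsymbol{\bar P}_k^{(i)}$ is the orthogonal projector onto the top-$k$ right singular subspace of $\boldsymbol{\bar B}^{(i)}$; (iii) by the order-$p$ t-SVD computation (FFT along modes $3,\dots,p$, then slicewise matrix SVD) together with the order-$p$ version of Lemma \ref{lemma2}, $\|\mathcal{A}-\mathcal{A}_k\|_F^2=\tfrac1\rho\sum_{i=1}^{\rho}\|\boldsymbol{\bar A}^{(i)}-[\boldsymbol{\bar A}^{(i)}]_k\|_F^2$, where $[\,\cdot\,]_k$ is the best rank-$k$ matrix approximation. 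I would also note that, since the multi-mode FFT in Algorithm \ref{tensor-FDp} is linear, inserting a horizontal tensor $\mathcal{A}_j$ into $\mathcal{B}$ and transforming is equivalent to appending one row to each block $\boldsymbol{\bar B}^{(i)}$, and since a shrink step zeroes rows $\ell,\dots,2\ell$ of every $\boldsymbol{\bar B}^{(i)}$ simultaneously, the ``$\mathcal{B}$ is fulfilled'' trigger is synchronized across blocks. Hence each $\boldsymbol{\bar B}^{(i)}$ is exactly the matrix FD sketch of $\boldsymbol{\bar A}^{(i)}$ run with its own shrink sequence $\{\delta_j^{(i)}\}_j$; in particular $\boldsymbol{\bar A}^{(i)T}\boldsymbol{\bar A}^{(i)}-\boldsymbol{\bar B}^{(i)T}\boldsymbol{\bar B}^{(i)}\succeq\boldsymbol 0$.

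Next I would invoke the standard matrix-FD projection inequality on each block: $\|\boldsymbol{\bar A}^{(i)}-\boldsymbol{\bar A}^{(i)}\boldsymbol{\bar P}_k^{(i)}\|_F^2=\|\boldsymbol{\bar A}^{(i)}\|_F^2-\|\boldsymbol{\bar A}^{(i)}\boldsymbol{\bar P}_k^{(i)}\|_F^2\le\|\boldsymbol{\bar A}^{(i)}-[\boldsymbol{\bar A}^{(i)}]_k\|_F^2+k\,\|\boldsymbol{\bar A}^{(i)T}\boldsymbol{\bar A}^{(i)}-\boldsymbol{\bar B}^{(i)T}\boldsymbol{\bar B}^{(i)}\|_2$, which comes from the PSD property (giving $\|\boldsymbol{\bar A}^{(i)}\boldsymbol{\bar P}_k^{(i)}\|_F^2\ge\|[\boldsymbol{\bar B}^{(i)}]_k\|_F^2$) and Weyl's inequality (giving $\|[\boldsymbol{\bar A}^{(i)}]_k\|_F^2-\|[\boldsymbol{\bar B}^{(i)}]_k\|_F^2\le k\|\boldsymbol{\bar A}^{(i)T}\boldsymbol{\bar A}^{(i)}-\boldsymbol{\bar B}^{(i)T}\boldsymbol{\bar B}^{(i)}\|_2$). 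By Definition \ref{def_tsn} every block covariance error is dominated by $\|\mathcal{A}^{T}*\mathcal{A}-\mathcal{B}^{T}*\mathcal{B}\|=\max_{i}\|\boldsymbol{\bar A}^{(i)T}\boldsymbol{\bar A}^{(i)}-\boldsymbol{\bar B}^{(i)T}\boldsymbol{\bar B}^{(i)}\|_2$. Averaging the per-block bound over $i=1,\dots,\rho$ and using (i) and (iii) yields
\[
\|\mathcal{A}-\mathcal{A}*\mathcal{V}_k*\mathcal{V}_k^{T}\|_F^2\ \le\ \|\mathcal{A}-\mathcal{A}_k\|_F^2+k\,\|\mathcal{A}^{T}*\mathcal{A}-\mathcal{B}^{T}*\mathcal{B}\|.
\]
Finally I would plug in the order-$p$ covariance bound of Theorem \ref{tce-p-order}, $\|\mathcal{A}^{T}*\mathcal{A}-\mathcal{B}^{T}*\mathcal{B}\|\le\|\mathcal{A}-\mathcal{A}_k\|_F^2/(\ell/c-k)$ with $c=\big(\rho\sum_j\max_i\delta_j^{(i)}\big)/\big(\sum_j\sum_i\delta_j^{(i)}\big)$, and simplify $1+k/(\ell/c-k)=(\ell/c)/(\ell/c-k)=\ell/(\ell-ck)$ to obtain (\ref{thm4111}); the hypothesis $k<\ell/c$ keeps all denominators positive.

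The step I expect to be the main obstacle is the bookkeeping of the second paragraph: verifying, for arbitrary $p$, that the Kronecker-product transform $\boldsymbol{\tilde F}=\boldsymbol{F}_{n_p}\otimes\cdots\otimes\boldsymbol{F}_{n_3}$ block-diagonalizes the t-product and rescales $\|\cdot\|_F^2$ by exactly $1/\rho$, so that the $\rho$ frontal blocks decouple into independent matrix problems, and that the slicewise operations of Algorithm \ref{tensor-FDp} genuinely realize independent matrix-FD runs with a synchronized shrink schedule; the order-$p$ generalization of Lemma \ref{lemma2} (also proved in this section) is what legitimizes identity (iii). Once this reduction is in place, the remaining steps are a routine transcription of the third-order projection-error argument together with a one-line chaining with the covariance bound.
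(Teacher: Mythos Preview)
Your argument is correct and reaches exactly (\ref{thm4111}), but the route differs somewhat from the paper's. The paper works entirely in tensor language rather than slicewise: starting from the Pythagorean identity $\|\mathcal{A}-\mathcal{A}*\mathcal{V}_k*\mathcal{V}_k^{T}\|_F^2=\|\mathcal{A}\|_F^2-\sum_{i=1}^{k}\|\mathcal{A}*\vec{\boldsymbol v}_i\|_{2^*}^2$, it uses the order-$p$ analog of Property~\ref{pro1} to pass from $\|\mathcal{A}*\vec{\boldsymbol v}_i\|_{2^*}^2$ to $\|\mathcal{B}*\vec{\boldsymbol v}_i\|_{2^*}^2$, then the optimality of $\mathcal{V}_k$ for $\mathcal{B}$ to swap $\vec{\boldsymbol v}_i$ for $\vec{\boldsymbol y}_i$, then the order-$p$ analog of Property~\ref{pro2} to pick up a $+k\Delta$, and finally the bound $\Delta\le\|\mathcal{A}-\mathcal{A}_k\|_F^2/(\ell/c-k)$ from (\ref{combine2}). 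You instead unfold into the $\rho$ Fourier blocks, run a per-block matrix-FD argument with Weyl's inequality, bound each block's covariance error by the tensor spectral norm $\|\mathcal{A}^{T}*\mathcal{A}-\mathcal{B}^{T}*\mathcal{B}\|=\max_i\|\boldsymbol{\bar A}^{(i)T}\boldsymbol{\bar A}^{(i)}-\boldsymbol{\bar B}^{(i)T}\boldsymbol{\bar B}^{(i)}\|_2$, and then invoke Theorem~\ref{tce-p-order} as a black box. The underlying ingredients are the same (the PSD property of the block covariance differences and the $\Delta$ bound), so both approaches are equivalent in strength; your version is more modular (it reuses Theorem~\ref{tce-p-order} wholesale and never needs to restate Property~\ref{pro2}), while the paper's is more self-contained and avoids Weyl's inequality altogether.
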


\subsection{Comparison to matricization FD}
A naive approach for tackling  tensor data is the so-called matricization technique, that is, vectorizing the horizontal slices separately and then regarding it as a matrix. We thus set up the following comparison algorithm. For the upcoming data sample $\mathcal{A}_{i}$, we convert it to
the mode-1 unfolding matrix and then utilize it to update the sketch matrix $\boldsymbol{B}$. Lastly, a folding procedure is used to obtain the tensor $\mathbf{\mathcal{B}}$. One can see  Algorithm $\ref{Matricization-tensorFD}$ for more details.

\begin{algorithm}[!htb]
	\caption{Matricization-tensor-FD (MtFD)}
	\label{Matricization-tensorFD}
	\begin{algorithmic}[1]
		\REQUIRE $\mathbf{\mathcal{A}} \in \mathbb{R}^{n_{1} \times n_{2} \times\cdots\times n_{p}}, \text { sketch size } \ell$\\
		\ENSURE $\mathbf{\mathcal{B}} \in \mathbb{R}^{\ell \times n_{2}\times\cdots \times n_{p}}$\\
		\STATE Get the mode-1 unfolding matrix $\boldsymbol{A}_{(1)}\in \mathbb{R}^{n_{1} \times n_{2}  \rho}$
		\STATE Compute $\boldsymbol{B}_{(1)}\in \mathbb{R}^{\ell \times n_{2}\rho}$ = FD($\boldsymbol{A}_{(1)}$)
		\STATE Get the mode-1 folding tensor $\mathbf{\mathcal{B}} \in \mathbb{R}^{\ell \times n_{2} \times\cdots\times n_{p}}$
	\end{algorithmic}
\end{algorithm}

For this matricization algorithm, we can also derive a simple covariance error bound.
\begin{theorem}\label{thm-norm2-matrization}
		Given $\mathbf{\mathcal{A}} \in \mathbb{R}^{n_{1} \times n_{2} \times\cdots\times n_{p}}$ and sketch size $\ell$, the sketch $\mathbf{\mathcal{B}} \in \mathbb{R}^{\ell \times n_{2} \times \cdots \times n_{p}}$ is constructed by Algorithm \ref{Matricization-tensorFD}, then 
		$$
		\left\|\mathbf{\mathcal{A}}^{T}*\mathbf{\mathcal{A}}-\mathbf{\mathcal{B}}^{T}*\mathbf{\mathcal{B}}\right\| \le \frac{\rho}{\ell-k}\left\|\mathbf{\mathcal{A}}\right\|_{F}^{2}.
		$$
\end{theorem}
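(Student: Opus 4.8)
The plan is to deduce this bound directly from the matrix Frequent Directions guarantee \eqref{eq:fd} applied to the mode-$1$ unfolding, and then transfer the resulting \emph{matrix} covariance bound into the t-product covariance, paying a factor $\rho$ that is incurred only when we pass to the Fourier domain. First, since Algorithm~\ref{Matricization-tensorFD} sets $\boldsymbol{B}_{(1)}=\mathrm{FD}(\boldsymbol{A}_{(1)})$, the guarantee \eqref{eq:fd} with sketch size $\ell$ gives, for any positive integer $k<\ell$,
$$
\left\|\boldsymbol{A}_{(1)}^{T}\boldsymbol{A}_{(1)}-\boldsymbol{B}_{(1)}^{T}\boldsymbol{B}_{(1)}\right\|_{2}\le\frac{\left\|\boldsymbol{A}_{(1)}-(\boldsymbol{A}_{(1)})_{k}\right\|_{F}^{2}}{\ell-k}\le\frac{\left\|\boldsymbol{A}_{(1)}\right\|_{F}^{2}}{\ell-k}=\frac{\left\|\mathbf{\mathcal{A}}\right\|_{F}^{2}}{\ell-k},
$$
where the middle inequality is because truncation does not increase the Frobenius norm, and the last equality is because mode-$1$ unfolding preserves $\|\cdot\|_F$.

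Next I would move to the Fourier domain. By Definition~\ref{def_tsn}, $\left\|\mathbf{\mathcal{A}}^{T}*\mathbf{\mathcal{A}}-\mathbf{\mathcal{B}}^{T}*\mathbf{\mathcal{B}}\right\|=\left\|\boldsymbol{\bar{A}}^{*}\boldsymbol{\bar{A}}-\boldsymbol{\bar{B}}^{*}\boldsymbol{\bar{B}}\right\|=\max_{1\le i\le\rho}\left\|(\boldsymbol{\bar{A}}^{(i)})^{*}\boldsymbol{\bar{A}}^{(i)}-(\boldsymbol{\bar{B}}^{(i)})^{*}\boldsymbol{\bar{B}}^{(i)}\right\|_{2}$, where I use that the block-diagonal matrix associated with $\overline{\mathbf{\mathcal{A}}^{T}*\mathbf{\mathcal{A}}}$ is exactly $\boldsymbol{\bar{A}}^{*}\boldsymbol{\bar{A}}$; this follows from \eqref{orthop} together with $\mathtt{bcirc}(\mathbf{\mathcal{A}}*\mathbf{\mathcal{B}})=\mathtt{bcirc}(\mathbf{\mathcal{A}})\,\mathtt{bcirc}(\mathbf{\mathcal{B}})$ and $\mathtt{bcirc}(\mathbf{\mathcal{A}}^{T})=\mathtt{bcirc}(\mathbf{\mathcal{A}})^{T}$ (and their order-$p$ analogues from \cite{martin2013order}). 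Now let $\boldsymbol{f}_{i}\in\mathbb{C}^{\rho}$ be the transpose of the $i$-th row of $\boldsymbol{\tilde{F}}=\boldsymbol{F}_{n_{p}}\otimes\cdots\otimes\boldsymbol{F}_{n_{3}}$; its entries are products of roots of unity, hence all of modulus one, so $\|\boldsymbol{f}_{i}\|_{2}^{2}=\rho$. Since applying FFTs along modes $3,\ldots,p$ acts on the frontal slices by $\boldsymbol{\bar{A}}^{(i)}=\sum_{j=1}^{\rho}(\boldsymbol{f}_{i})_{j}\boldsymbol{A}^{(j)}=\boldsymbol{A}_{(1)}\bigl(\boldsymbol{f}_{i}\otimes\boldsymbol{I}_{n_{2}}\bigr)$, and likewise $\boldsymbol{\bar{B}}^{(i)}=\boldsymbol{B}_{(1)}\bigl(\boldsymbol{f}_{i}\otimes\boldsymbol{I}_{n_{2}}\bigr)$ because $\mathbf{\mathcal{B}}$ is the mode-$1$ fold of $\boldsymbol{B}_{(1)}$, and since $\boldsymbol{A}_{(1)},\boldsymbol{B}_{(1)}$ are real, we get
$$
(\boldsymbol{\bar{A}}^{(i)})^{*}\boldsymbol{\bar{A}}^{(i)}-(\boldsymbol{\bar{B}}^{(i)})^{*}\boldsymbol{\bar{B}}^{(i)}=\bigl(\boldsymbol{f}_{i}\otimes\boldsymbol{I}_{n_{2}}\bigr)^{*}\Bigl(\boldsymbol{A}_{(1)}^{T}\boldsymbol{A}_{(1)}-\boldsymbol{B}_{(1)}^{T}\boldsymbol{B}_{(1)}\Bigr)\bigl(\boldsymbol{f}_{i}\otimes\boldsymbol{I}_{n_{2}}\bigr).
$$
Using $\|\boldsymbol{f}_{i}\otimes\boldsymbol{I}_{n_{2}}\|_{2}=\|\boldsymbol{f}_{i}\|_{2}=\sqrt{\rho}$ and submultiplicativity of the spectral norm, each slice is at most $\rho\,\|\boldsymbol{A}_{(1)}^{T}\boldsymbol{A}_{(1)}-\boldsymbol{B}_{(1)}^{T}\boldsymbol{B}_{(1)}\|_{2}$.

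Finally, chaining these together gives $\left\|\mathbf{\mathcal{A}}^{T}*\mathbf{\mathcal{A}}-\mathbf{\mathcal{B}}^{T}*\mathbf{\mathcal{B}}\right\|\le\rho\,\|\boldsymbol{A}_{(1)}^{T}\boldsymbol{A}_{(1)}-\boldsymbol{B}_{(1)}^{T}\boldsymbol{B}_{(1)}\|_{2}\le\frac{\rho}{\ell-k}\left\|\mathbf{\mathcal{A}}\right\|_{F}^{2}$, which is the claim. The only non-routine step is the middle one: one must verify carefully that each Fourier-domain frontal slice is exactly $\boldsymbol{A}_{(1)}(\boldsymbol{f}_{i}\otimes\boldsymbol{I}_{n_{2}})$ for a vector $\boldsymbol{f}_{i}$ with unit-modulus entries and $\|\boldsymbol{f}_{i}\|_{2}^{2}=\rho$ — which requires tracking the linear-indexing convention for frontal slices of order-$p$ tensors — and keep the conjugate transposes straight so that $(\boldsymbol{\bar{A}}^{(i)})^{*}\boldsymbol{\bar{A}}^{(i)}$ genuinely collapses to a congruence transform of the \emph{real} matrix covariance $\boldsymbol{A}_{(1)}^{T}\boldsymbol{A}_{(1)}$. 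Once that identity is secured, the factor $\rho$ is just $\|\boldsymbol{f}_{i}\|_{2}^{2}$, and, unlike Theorems~\ref{main1}--\ref{main}, no control of the shrinkage amounts $\delta_{j}^{(i)}$ enters the argument.
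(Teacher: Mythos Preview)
Your argument is correct and reaches the same bound, but it takes a genuinely different route from the paper. The paper stays in the original domain: it writes $\|\mathcal{A}^{T}*\mathcal{A}-\mathcal{B}^{T}*\mathcal{B}\|=\|\mathtt{bcirc}(\mathcal{A})\boldsymbol{x}\|^{2}-\|\mathtt{bcirc}(\mathcal{B})\boldsymbol{x}\|^{2}$ for the maximizing unit vector $\boldsymbol{x}$, then uses the combinatorial fact that each block-row of $\mathtt{bcirc}(\mathcal{A})$ is a column-permuted copy of $\boldsymbol{A}_{(1)}$, so $\|\mathtt{bcirc}(\mathcal{A})\boldsymbol{x}\|^{2}=\sum_{i=1}^{n_{3}}\|\boldsymbol{A}_{(1)}\boldsymbol{x}_{i}\|^{2}$ for $n_{3}$ permuted unit vectors $\boldsymbol{x}_{i}$; the FD bound \eqref{eq:fd} is then applied termwise and \emph{summed} $n_{3}$ times. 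You instead go to the Fourier domain, identify each slice as $\boldsymbol{\bar{A}}^{(i)}=\boldsymbol{A}_{(1)}(\boldsymbol{f}_{i}\otimes\boldsymbol{I}_{n_{2}})$ with $\|\boldsymbol{f}_{i}\|_{2}^{2}=\rho$, and obtain the factor $\rho$ from a single application of submultiplicativity together with a \emph{maximum} over blocks. Your route is cleaner for the general order-$p$ case (the paper only writes out $p=3$ explicitly and asserts the extension), and it makes transparent where the $\rho$ comes from; the paper's route is more elementary in that it avoids Kronecker products and the Fourier machinery. Both arrive at the identical constant because the per-term (respectively per-block) bound is the same.
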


Since the unfolding operation destroys the correlations among frontal slices, we couldn't derive a tighter bound in the form of $\left\|\mathbf{\mathcal{A}}-\mathbf{\mathcal{A}}_{k}\right\|_{F}^{2}$. However, through the inequality $\left\|\mathbf{\mathcal{A}}-\mathbf{\mathcal{A}}_{k}\right\|_{F}^{2} \leq \left\|\mathbf{\mathcal{A}}\right\|_{F}^{2}$, we are able to obtain a weaker error bound for Theorem \ref{tce-p-order} as $\left\|\mathbf{\mathcal{A}}\right\|_{F}^{2} /( {\frac{\ell}{c}-k})$.  In this case, when $\ell \ge \left(1+\frac{1-1/c}{1/c-1/\rho}\right)k$, we achieve a smaller covariance  error bound. Even though the parameter $c$ could not be explicitly calculated, the extensive numerical experiments would later demonstrate that $c$ is much smaller than $\rho$ generally. We also notice that when the tensor $\mathcal{A}$ satisfies the low-tubal-rank assumption,  $\left\|\mathbf{\mathcal{A}}-\mathbf{\mathcal{A}}_{k}\right\|_{F}^{2} $ would be much smaller than $\left\|\mathbf{\mathcal{A}}\right\|_{F}^{2}$, which reveals the superiority of t-FD compared with MtFD.

Here, we only give the tensor covariance error bound of MtFD. The following theorem gives the potential relationship between tensor covariance error and projection error. After that, the projection error can be obtained immediately.
\begin{theorem}
	\label{rem: rel}
	For any tensor $\mathcal{A} \in \mathbb{R}^{n_1 \times n_2\times\cdots \times n_p}$, we have the following relationship between the projection error and the covariance error bounds:
	$$\small
	\left\|\mathbf{\mathcal{A}}-\mathbf{\mathcal{A}} * \mathbf{\mathcal{V}}_{k} * \mathbf{\mathcal{V}}_{k}^{T}\right\|_{F}^{2}\le \left\|\mathbf{\mathcal{A}}-\mathbf{\mathcal{A}}_{k}\right\|_{F}^{2}+2k\left\|\mathbf{\mathcal{A}}^{T}*\mathbf{\mathcal{A}}-\mathbf{\mathcal{B}}^{T}*\mathbf{\mathcal{B}}\right\|.$$
\end{theorem}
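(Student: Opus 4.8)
The plan is to lift the classical matrix Frequent Directions projection bound to the tensor setting, applied frontal-slice-by-frontal-slice in the Fourier domain, and then reassemble. I would first record the ``dictionary'' facts that translate the tensor quantities into block-diagonal matrix quantities: for any conformable tensor $\mathcal{X}$ one has $\|\mathcal{X}\|_F^2=\tfrac1\rho\|\bar{\boldsymbol X}\|_F^2=\tfrac1\rho\sum_{i=1}^{\rho}\|\bar{\boldsymbol X}^{(i)}\|_F^2$ (Parseval for the unnormalized DFT, with $\rho=n_3n_4\cdots n_p$ and $\bar{\boldsymbol X}=\mathtt{bdiag}(\bar{\mathcal{X}})$ as in \eqref{orthop}); the t-product becomes block-diagonal matrix multiplication, so $\overline{\mathcal{A}*\mathcal{V}_k}$ has frontal slices $\bar{\boldsymbol A}^{(i)}\bar{\boldsymbol V}_k^{(i)}$; and, by Definition \ref{def_tsn} together with \eqref{simprop}, $\|\mathcal{A}^{T}*\mathcal{A}-\mathcal{B}^{T}*\mathcal{B}\|=\|\bar{\boldsymbol A}^{*}\bar{\boldsymbol A}-\bar{\boldsymbol B}^{*}\bar{\boldsymbol B}\|=\max_{i}\|\bar{\boldsymbol A}^{(i)*}\bar{\boldsymbol A}^{(i)}-\bar{\boldsymbol B}^{(i)*}\bar{\boldsymbol B}^{(i)}\|$. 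I would also note that, from the slice-wise description of the t-SVD computation, $\bar{\boldsymbol V}_k^{(i)}$ is exactly the matrix of the top-$k$ right singular vectors of $\bar{\boldsymbol B}^{(i)}$, and that (via Lemma \ref{lemma2} and the slice-wise SVD picture) the block-diagonal matrix of the slice-wise best rank-$k$ approximations of $\bar{\boldsymbol A}^{(i)}$ equals $\bar{\boldsymbol A}_k$.

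Next I would reduce the left-hand side. Using $\mathcal{V}_k^{T}*\mathcal{V}_k=\mathcal{I}$ and the compatibility of the Frobenius inner product with the t-product, $\mathcal{A}*\mathcal{V}_k*\mathcal{V}_k^{T}$ is orthogonal to $\mathcal{A}-\mathcal{A}*\mathcal{V}_k*\mathcal{V}_k^{T}$ and $\|\mathcal{A}*\mathcal{V}_k*\mathcal{V}_k^{T}\|_F=\|\mathcal{A}*\mathcal{V}_k\|_F$, so the Pythagorean identity gives $\|\mathcal{A}-\mathcal{A}*\mathcal{V}_k*\mathcal{V}_k^{T}\|_F^2=\|\mathcal{A}\|_F^2-\|\mathcal{A}*\mathcal{V}_k\|_F^2$. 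Passing to the Fourier domain with the $1/\rho$ factor, this becomes $\tfrac1\rho\sum_{i=1}^{\rho}\bigl(\|\bar{\boldsymbol A}^{(i)}\|_F^2-\|\bar{\boldsymbol A}^{(i)}\bar{\boldsymbol V}_k^{(i)}\|_F^2\bigr)$.

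For each fixed $i$ I would then invoke the standard matrix estimate. Writing $\Delta_i:=\|\bar{\boldsymbol A}^{(i)*}\bar{\boldsymbol A}^{(i)}-\bar{\boldsymbol B}^{(i)*}\bar{\boldsymbol B}^{(i)}\|$ and $\bar{\boldsymbol V}_k^{(i)}=[\boldsymbol v_1,\dots,\boldsymbol v_k]$: since the $\boldsymbol v_s$ diagonalize $\bar{\boldsymbol B}^{(i)*}\bar{\boldsymbol B}^{(i)}$ and $|\boldsymbol v_s^{*}(\bar{\boldsymbol A}^{(i)*}\bar{\boldsymbol A}^{(i)}-\bar{\boldsymbol B}^{(i)*}\bar{\boldsymbol B}^{(i)})\boldsymbol v_s|\le\Delta_i$, we get $\|\bar{\boldsymbol A}^{(i)}\bar{\boldsymbol V}_k^{(i)}\|_F^2=\sum_{s=1}^{k}\boldsymbol v_s^{*}\bar{\boldsymbol A}^{(i)*}\bar{\boldsymbol A}^{(i)}\boldsymbol v_s\ge\sum_{s=1}^{k}\sigma_s(\bar{\boldsymbol B}^{(i)})^2-k\Delta_i$; and by Weyl's inequality $\sum_{s=1}^{k}\sigma_s(\bar{\boldsymbol B}^{(i)})^2\ge\sum_{s=1}^{k}\sigma_s(\bar{\boldsymbol A}^{(i)})^2-k\Delta_i$. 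Combining and using $\|\bar{\boldsymbol A}^{(i)}\|_F^2-\sum_{s=1}^{k}\sigma_s(\bar{\boldsymbol A}^{(i)})^2=\|\bar{\boldsymbol A}^{(i)}-\bar{\boldsymbol A}_k^{(i)}\|_F^2$ yields $\|\bar{\boldsymbol A}^{(i)}\|_F^2-\|\bar{\boldsymbol A}^{(i)}\bar{\boldsymbol V}_k^{(i)}\|_F^2\le\|\bar{\boldsymbol A}^{(i)}-\bar{\boldsymbol A}_k^{(i)}\|_F^2+2k\Delta_i$.

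Finally I would sum over $i$ and divide by $\rho$: the first terms reassemble into $\tfrac1\rho\|\bar{\boldsymbol A}-\bar{\boldsymbol A}_k\|_F^2=\|\mathcal{A}-\mathcal{A}_k\|_F^2$, while $\tfrac1\rho\sum_{i=1}^{\rho}2k\Delta_i\le 2k\max_i\Delta_i=2k\|\mathcal{A}^{T}*\mathcal{A}-\mathcal{B}^{T}*\mathcal{B}\|$, which is precisely the claimed bound; the order-$p$ case is verbatim the same with $\rho=n_3n_4\cdots n_p$. I expect the only real friction to be bookkeeping rather than anything deep: keeping the $1/\rho$ Parseval normalization consistent across all four norms, confirming that the t-SVD truncation of $\mathcal{B}$ really does hand us the slice-wise top-$k$ right singular subspaces of $\bar{\boldsymbol B}^{(i)}$ (so that $\mathcal{V}_k$ is exactly what the matrix argument needs), and recognizing that the passage $\sum_i\Delta_i\le\rho\max_i\Delta_i$ is what matches the spectral norm of the block-diagonal difference. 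A careless proof would most likely slip on one of these identifications, so I would state each explicitly before running the per-slice estimate.
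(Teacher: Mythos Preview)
Your argument is correct. It is, however, organized differently from the paper's own proof.

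The paper stays at the tensor level throughout: it first records the bridge inequality
\[
\bigl|\|\mathcal{A}*\vec{\boldsymbol x}\|_{2^*}^2-\|\mathcal{B}*\vec{\boldsymbol x}\|_{2^*}^2\bigr|\le \|\mathcal{A}^{T}*\mathcal{A}-\mathcal{B}^{T}*\mathcal{B}\|
\]
for any unit tensor column $\vec{\boldsymbol x}$ (obtained via $\mathtt{bcirc}$ and Definition~\ref{def_tsn}), and then runs the swap argument on lateral slices: replace $\|\mathcal{A}*\vec{\boldsymbol v}_i\|_{2^*}^2$ by $\|\mathcal{B}*\vec{\boldsymbol v}_i\|_{2^*}^2$ at cost $k$ times the covariance norm, use the variational property $\sum_i\|\mathcal{B}*\vec{\boldsymbol v}_i\|_{2^*}^2\ge\sum_i\|\mathcal{B}*\vec{\boldsymbol y}_i\|_{2^*}^2$, and swap back from $\mathcal{B}$ to $\mathcal{A}$ at the same cost. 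No Weyl inequality and no explicit per-slice summation appear.

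You instead pass immediately to the Fourier block-diagonal picture, run a purely matrix argument on each frontal slice (with Weyl's inequality replacing the $\vec{\boldsymbol v}_i\to\vec{\boldsymbol y}_i$ swap), and then aggregate via $\tfrac1\rho\sum_i\Delta_i\le\max_i\Delta_i$. This buys you a proof that is entirely elementary---it reduces to textbook matrix facts and makes transparent exactly where the $\max_i$ (i.e.\ the block-diagonal spectral norm) enters. The paper's version, on the other hand, reuses the $\ell_{2^*}$/$\mathtt{bcirc}$ machinery already set up for Theorems~\ref{main1} and~\ref{main}, so it is shorter in context and keeps the argument native to the t-product algebra. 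Both routes are equally valid; yours is arguably more self-contained, while theirs better mirrors the rest of the paper's proof style.
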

\begin{remark}
	By combining Theorems \ref{thm-norm2-matrization} and \ref{rem: rel}, we can derive the projection error bound for MtFD, i.e.,
		\begin{align}\label{tpe}
			\left\|\mathbf{\mathcal{A}}-\mathbf{\mathcal{A}} * \mathbf{\mathcal{V}}_{k} * \mathbf{\mathcal{V}}_{k}^{T}\right\|_{F}^{2}\le \left\|\mathbf{\mathcal{A}}-\mathbf{\mathcal{A}}_{k}\right\|_{F}^{2}+\frac{2k\rho}{\ell-k}\left\|\mathbf{\mathcal{A}}\right\|_{F}^{2}.
	\end{align}Due to $\rho$ is larger than $\ell$ and $c$ is usually small (the effect of $c$ is explained in detail in Section V), the right-hand side in (\ref{thm4111}) is much smaller than the second term of the right-hand side in (\ref{tpe}). This indicates the consistently better theoretical guarantee for t-FD compared with MtFD.
\end{remark}

\subsection{Complexity analysis}
{
	For the proposed t-FD algorithm, we only need to store the sketch  $\mathcal{B} \in \mathbb{R}^{2 \ell \times n_2 \times\cdots\times n_p}$, and the memory requirement is $O(\ell n_2 \rho)$ with  $\rho=n_3n_4\ldots n_p$, which is substantially smaller than loading an entire tensor into memory. In each update, one fast Fourier Transform and one inverse Fourier Transform are required, which costs  $O\left(\ell n_{2}\rho\log \rho\right)$ time. Computing the Singular Value Decomposition for all frontal slices takes $O\left(\ell^{2}n_{2}\rho\right)$. Since the total iterations are at most $\lceil \frac{n_{1}-\ell+1}{\ell+1} \rceil $, the computational costs are bounded by $O\left(n_{1}n_{2}\rho\left(\log \rho+\ell\right)\right)$. In experiments, we assume $\frac{n_{1}-\ell+1}{\ell+1}$ is an integer, otherwise we can append zero horizontal slices  to ensure it is an integer. Specifically, for the third-order case, the computational cost is {$O\left(n_{1}n_{2}n_3\left(\log n_3+\ell\right)\right)$ due to $\rho=n_3$.
}
\section{Experiments}
In this section, we compare the efficiency and effectiveness of our proposed t-FD with other three streaming algorithms on both synthetic and real-world tensor data. All these algorithms are implemented in MATLAB R2020a and conducted on a Dual Intel(R) Xeon(R) Gold 5120 CPU @ 2.20GHz with 256 GB memory.  In each setting, we run each algorithm 10 times and report the average results. The detailed information of such compared algorithms are listed in the following.
\begin{itemize}
	\item[1.] MtFD: As we briefly illustrated in Algorithm \ref{Matricization-tensorFD}, the direct way of implementing FD for the order-$p$ tensor data is unfolding the tensor firstly, then updating a sketch matrix $\boldsymbol{B} \in \mathbb{R}^{\ell \times n_2 \ldots n_p}$ by FD algorithm, and thus the sketch tensor $\mathcal{B}$ is obtained by folding the matrix $\boldsymbol{B}$.
	
	\item[2.] srt-SVD: We adopt the rt-SVD algorithm proposed in \cite{zhang2018randomized} to a single pass randomized algorithm. The sketch $\mathcal{B}$ is calculated from $\mathcal{Q} * \mathcal{A}$, where $\mathcal{Q} \in \mathbb{R}^{\ell \times n_1 \times n_3  \cdots n_p}$ is a random Gaussian tensor such that the first frontal slice $\mathbf{Q}^{(1)} \sim \frac{\mathcal{N}(0,1)}{\sqrt{\ell}}$, and other slices are all zeros. The t-product between $\mathcal{A}$ and $\mathcal{Q}$ can be easily derived in a streaming way.
	\item[3.] Norm Sampling (NormSamp for short) \cite{drineas2006fast}, \cite{holodnak2015randomized}: We adopt a well-known random sampling method to deal with the tensor data. Precisely, the sketch is formulated by sampling $\ell$ horizontal slices independently from the $n_1$ horizontal slices of $\mathcal{A}$ and rescaling. The $i$-th slice is chosen with probability $p_i = \|\mathcal{A}_i\|_F^2/ \|\mathcal{A}\|_F^2$ and rescaled to $\mathcal{A}_i / \sqrt{\ell p_i}$. Since the value of $\|\mathcal{A}\|_F$ is unknown, we implement this method through two passes over the data by  calculating  the norm and sampling separately.
\end{itemize}

We consider three measures to compare the performance: 
\begin{itemize}
	\item The projection error: $\frac{\left\|\mathbf{\mathcal{A}}-\mathbf{\mathcal{A}} * \mathbf{\mathcal{V}}_{k} * \mathbf{\mathcal{V}}_{k}^{T}\right\|_{F}^{2}}{ \left\|\mathbf{\mathcal{A}}-\mathbf{\mathcal{A}}_{k}\right\|_{F}^{2}}$
	\item The covariance error: $\frac{\left\|\mathbf{\mathcal{A}}^{T}*\mathbf{\mathcal{A}}-\mathbf{\mathcal{B}}^{T}*\mathbf{\mathcal{B}}\right\|}{\left\|\mathbf{\mathcal{A}}-\mathbf{\mathcal{A}}_{k}\right\|_{F}^{2}}$
	\item Running time in seconds
\end{itemize}
\subsection{Synthetic data examples}
In this set of experiments, we consider to use third and fourth order synthetic tensor data for illustration. Inspired by the data generation process for testing FD in \cite{liberty2013simple}, we construct the third-order tensor data using $\mathcal{A} = \mathcal{S} *  \hat{\mathcal{D}} * \mathcal{U} + \mathcal{N}/\eta$, where $\mathcal{S} \in \mathbb{R}^{n_1 \times k \times n_3}$  is generated by $\mathcal{ S }_{ijk} \sim \mathcal{N}(0,1)$ i.i.d. and $\mathcal{U} \in \mathbb{R}^{n_2 \times k \times n_3}$ is a partially orthogonal tensor. $ \hat{\mathcal{D}} $ is an f-diagonal tensor, in which the elements represent the tensor singular values. Here we consider the singular values in the Fourier domain having different decaying spectrums  including linearly, polynomially and exponentially decaying spectrums. For each slice, we randomly choose one of the three types. $\mathcal{N} \in \mathbb{R}^{n_1 \times n_2 \times n_3}$ represents the Gaussian noise and $\eta$ decides the noise level.  We also adopt this procedure to generate fourth-order tensors. 

In this experiment, we mainly measure the approximation power with different rank settings. We consider the cases of $\mathcal{A} \in \mathbb{R}^{10000 \times 1000 \times 10}$ and $\mathcal{A} \in \mathbb{R}^{10000 \times 1000 \times 10 \times 3}$, the noise level $\eta = 10$ and the true rank $k \in \{10, 20, 50\}$.  For each setting, we generate three tensors and report the average performance.

\begin{table*}[ht!]
	\centering
	
	\begin{adjustbox}{center}
		\begin{tabular}{|c|c|c|c|c|c|}
			\hline
			\rotatebox{90}{\begin{tiny}$\mathsf{Projection\ error}$\end{tiny}} & 
			\raisebox{-1mm}{\includegraphics[scale = 0.22]{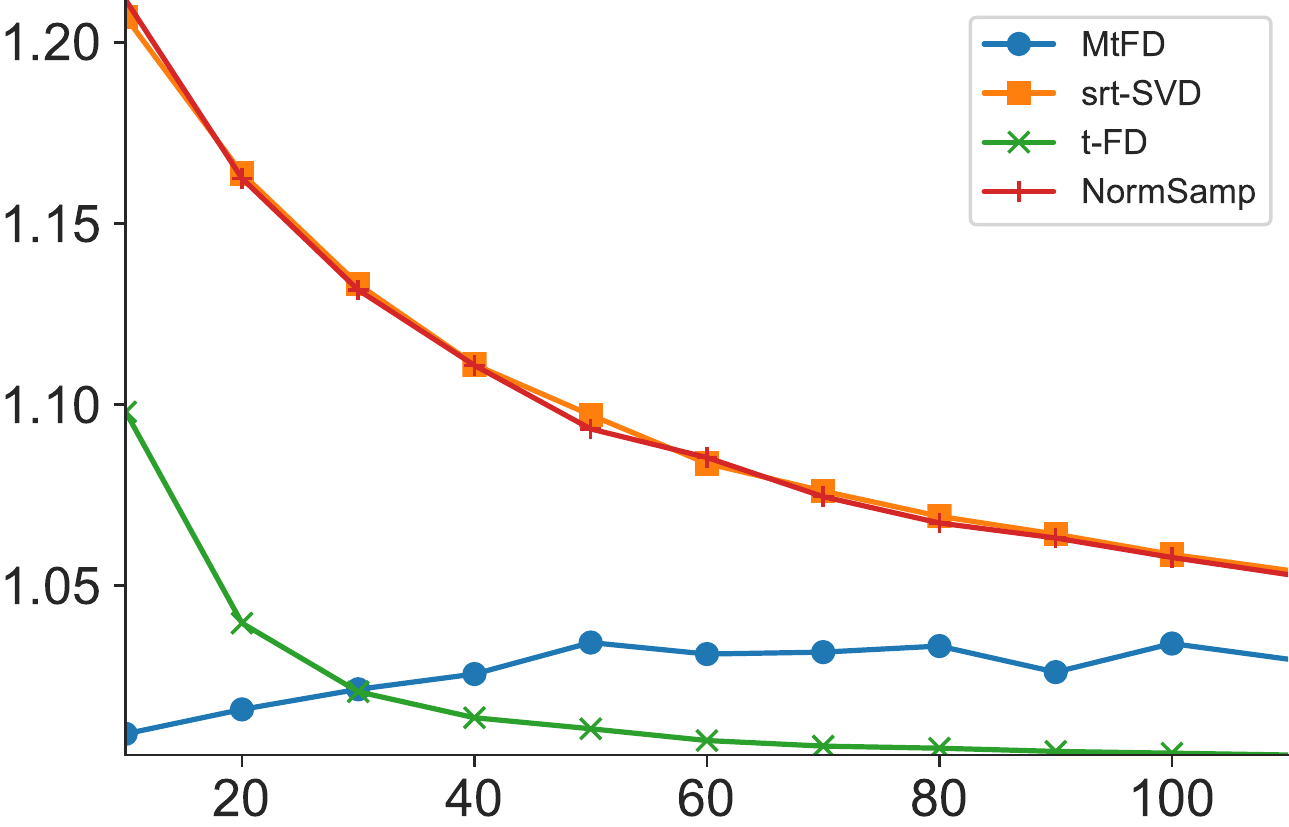}}
			&  \raisebox{-1mm}{\includegraphics[scale = 0.22]{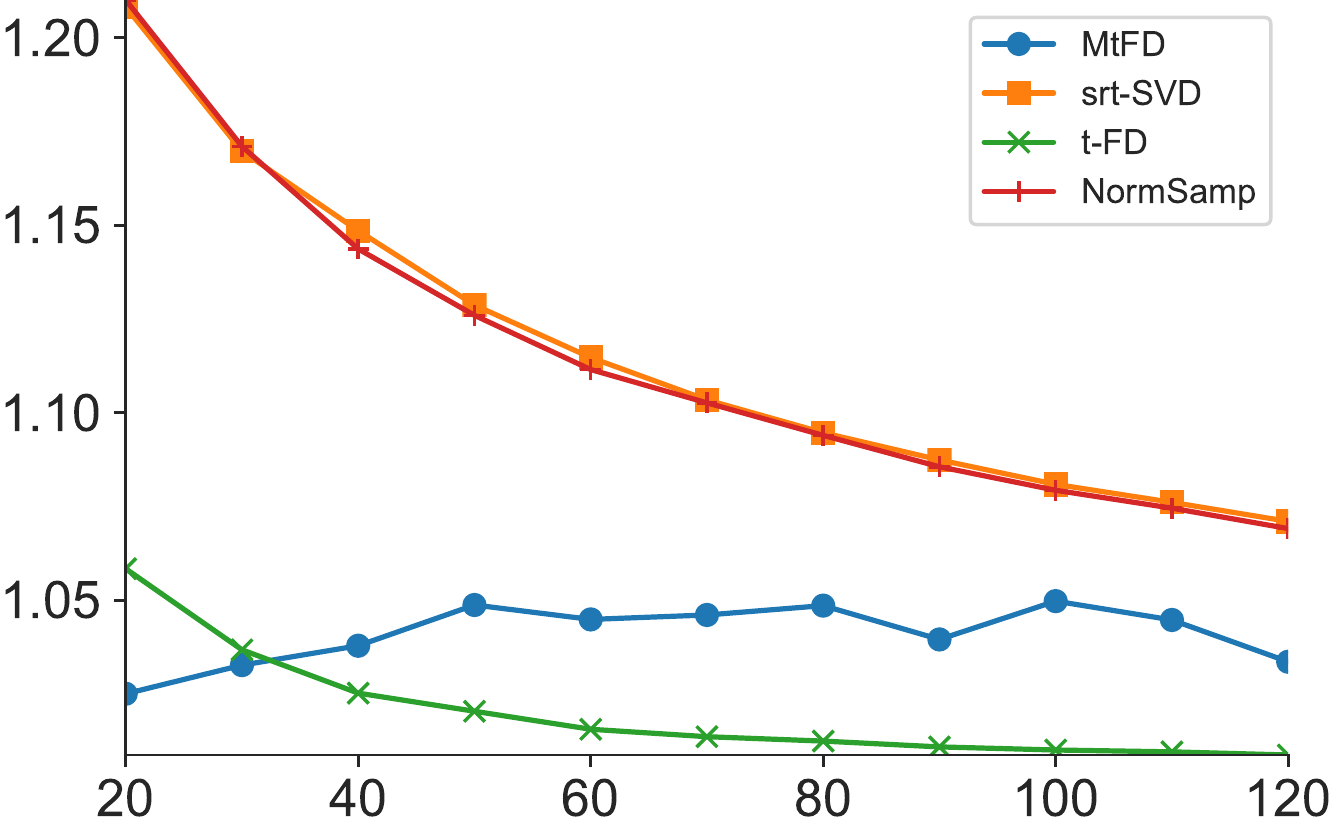}} &  \raisebox{-1mm}{\includegraphics[scale = 0.23]{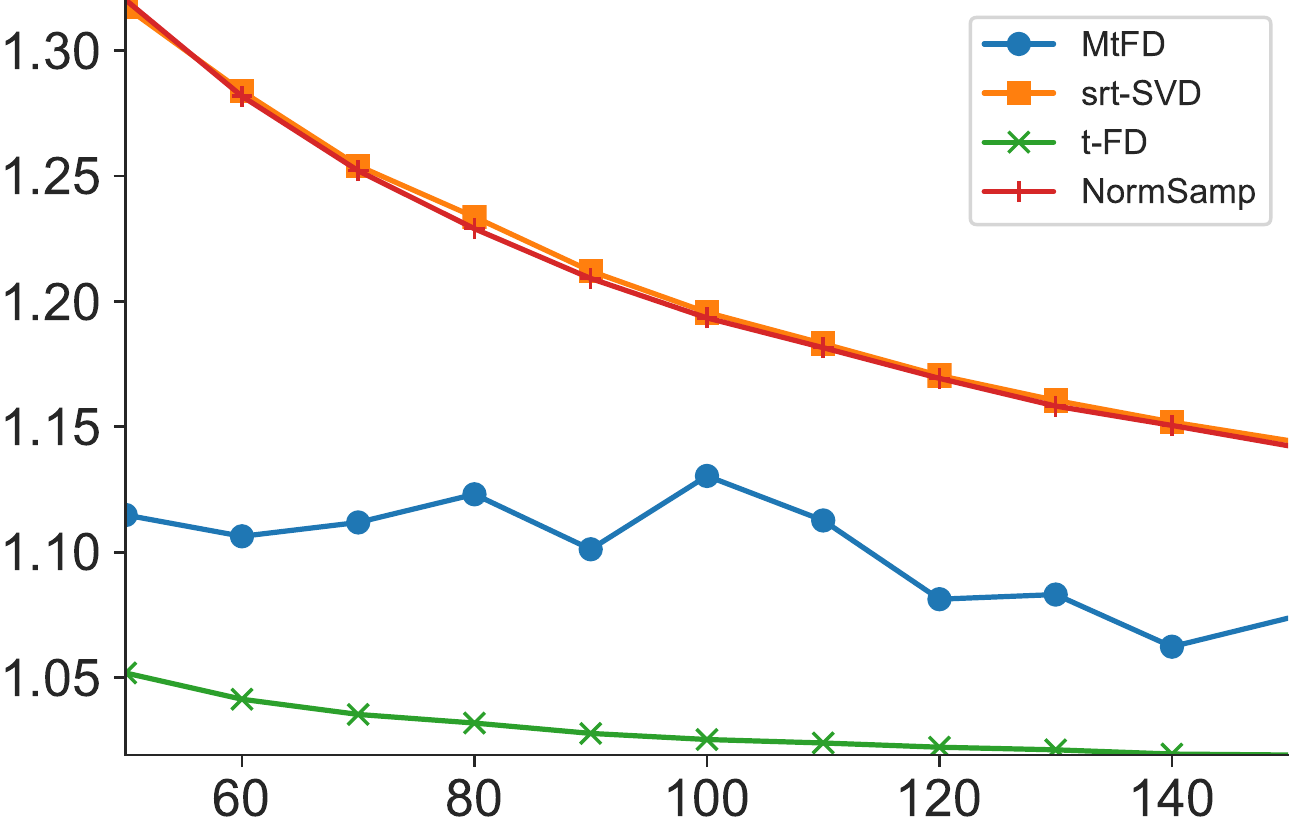}} &  \raisebox{-1mm}{\includegraphics[scale = 0.22]{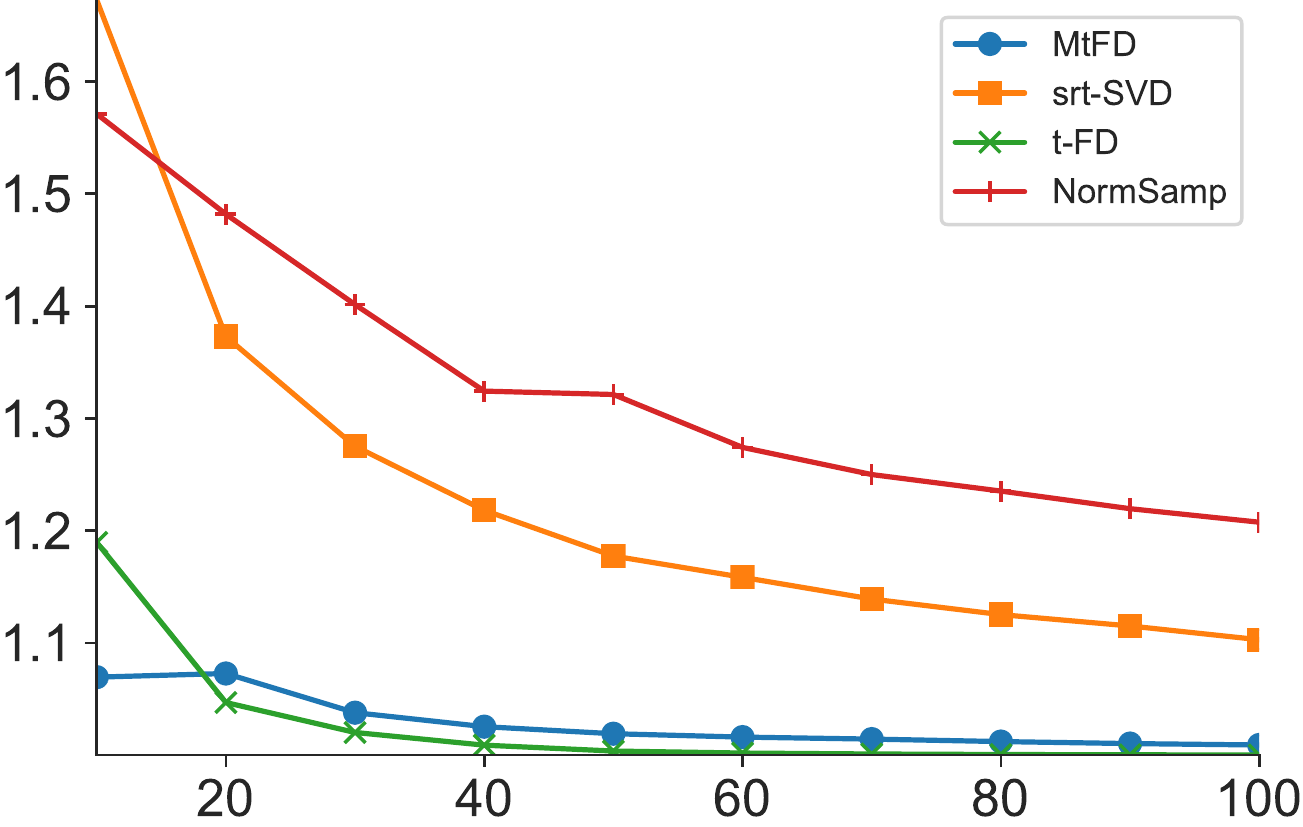}} & \raisebox{-1mm}{\includegraphics[scale = 0.23]{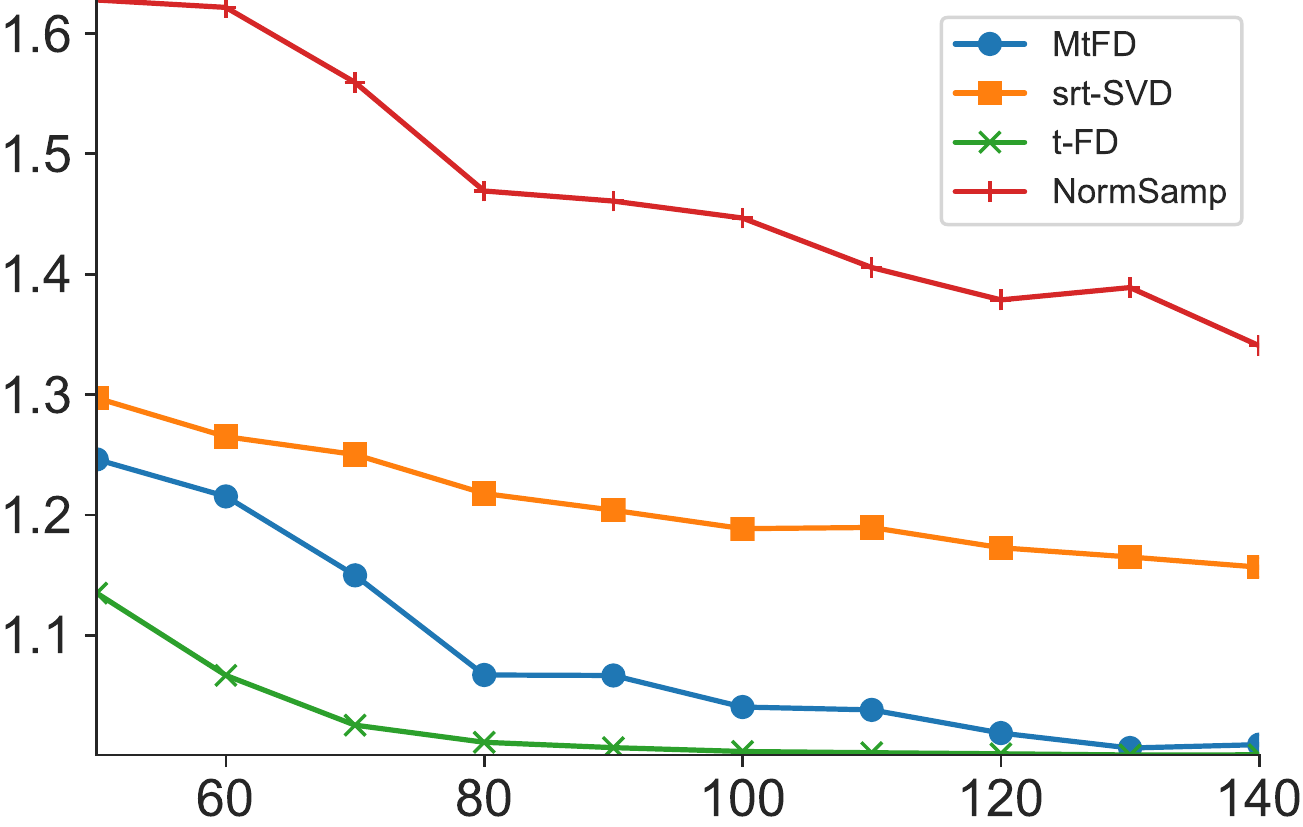}}\\
			\hline
			\rotatebox{90}{\begin{tiny}$\mathsf{Covariance\ error}$\end{tiny}} & 
			\raisebox{-1mm}{\includegraphics[scale = 0.22]{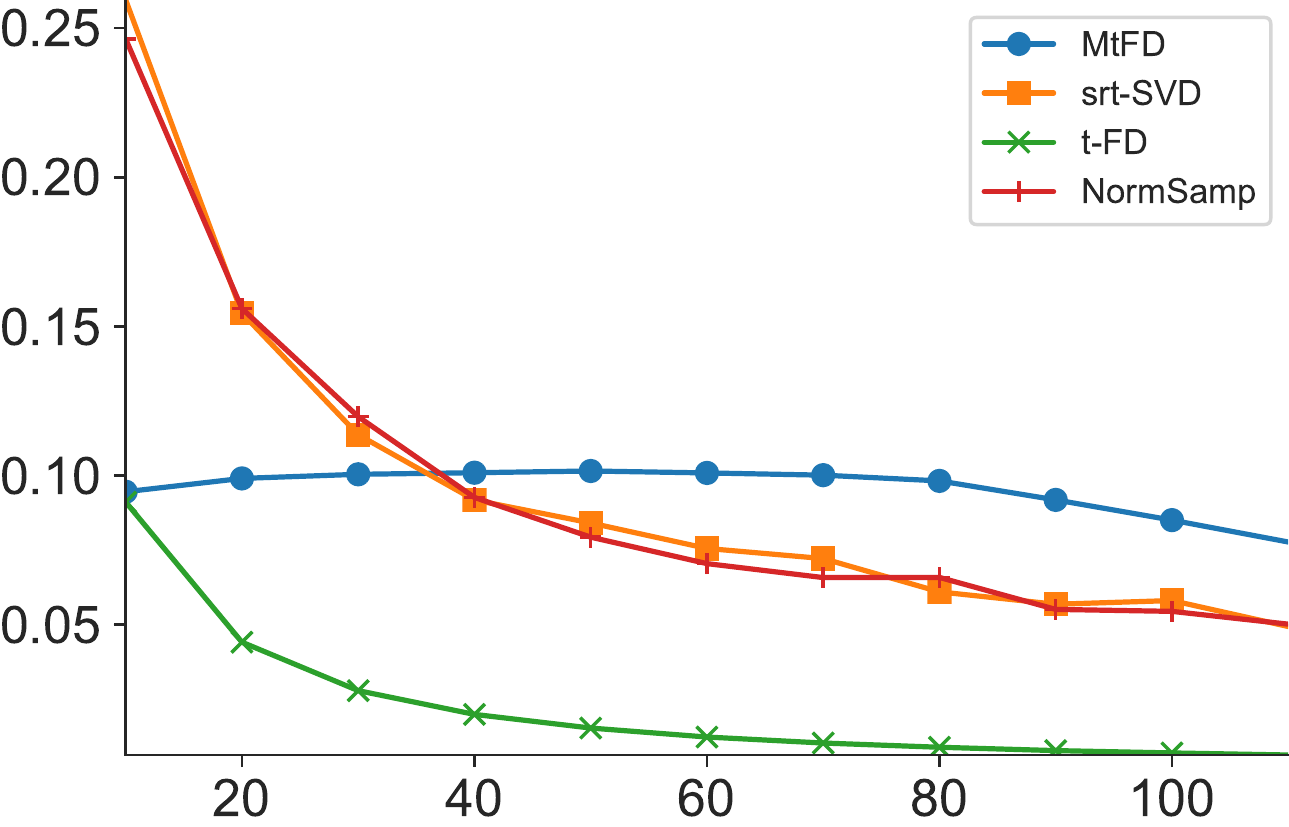}} &  \raisebox{-1mm}{\includegraphics[scale = 0.23]{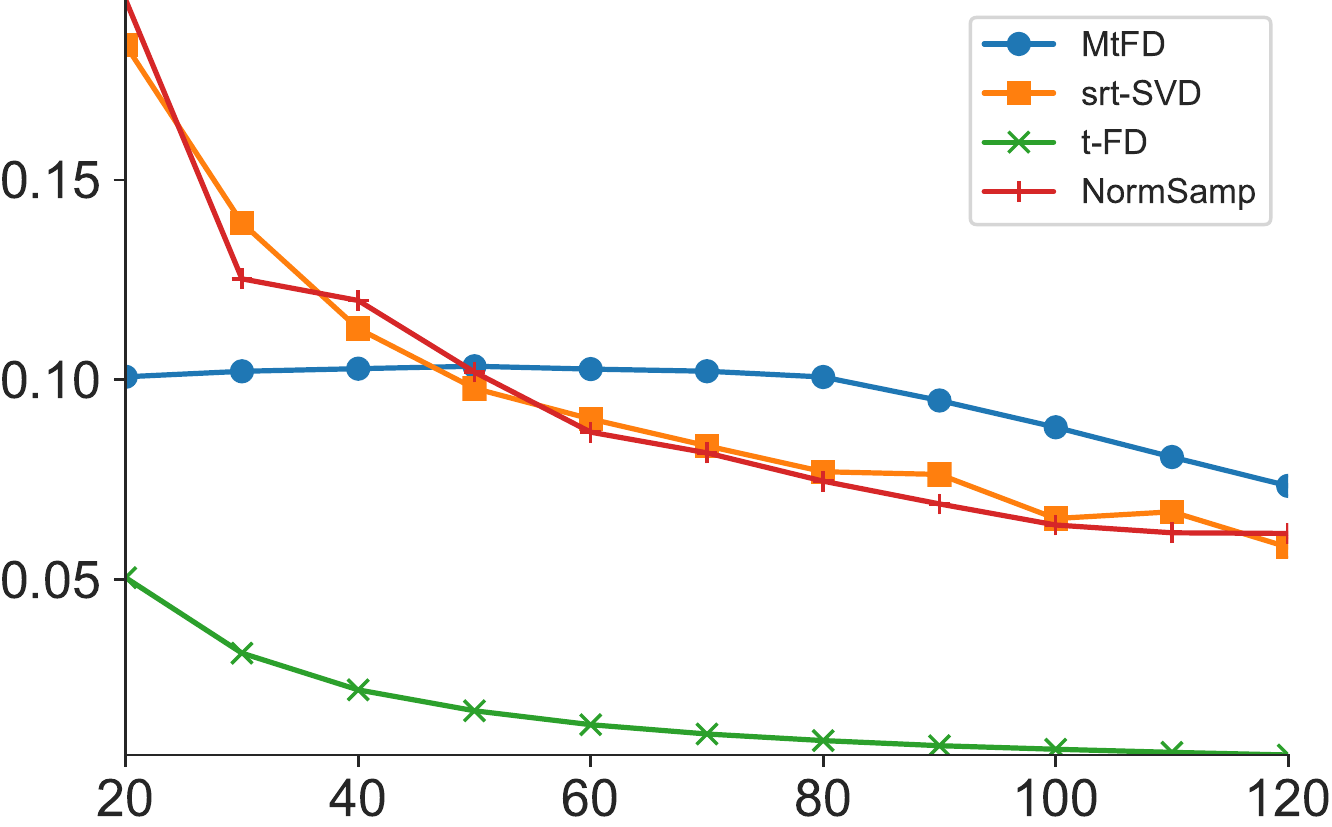}} &  \raisebox{-1mm}{\includegraphics[scale = 0.22]{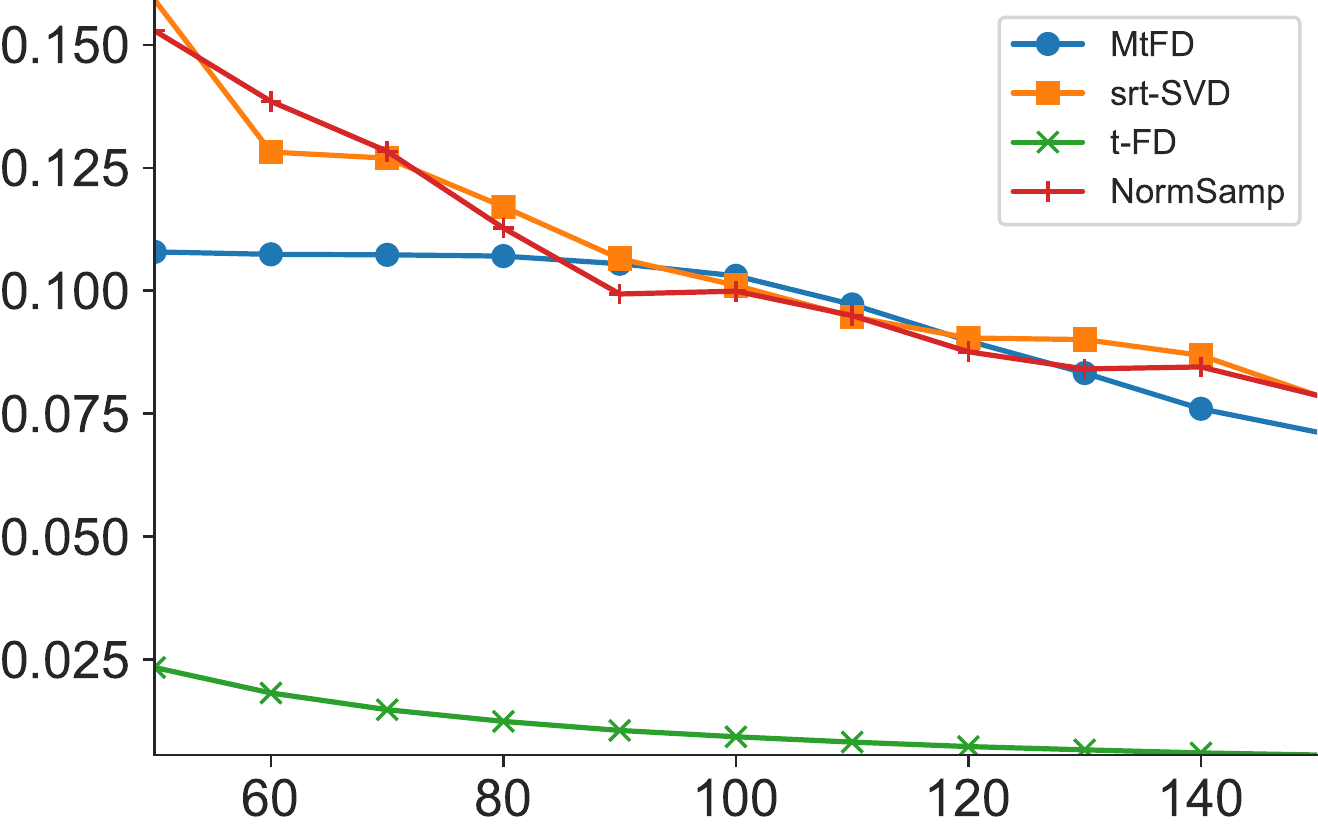}} &  \raisebox{-1mm}{\includegraphics[scale = 0.23]{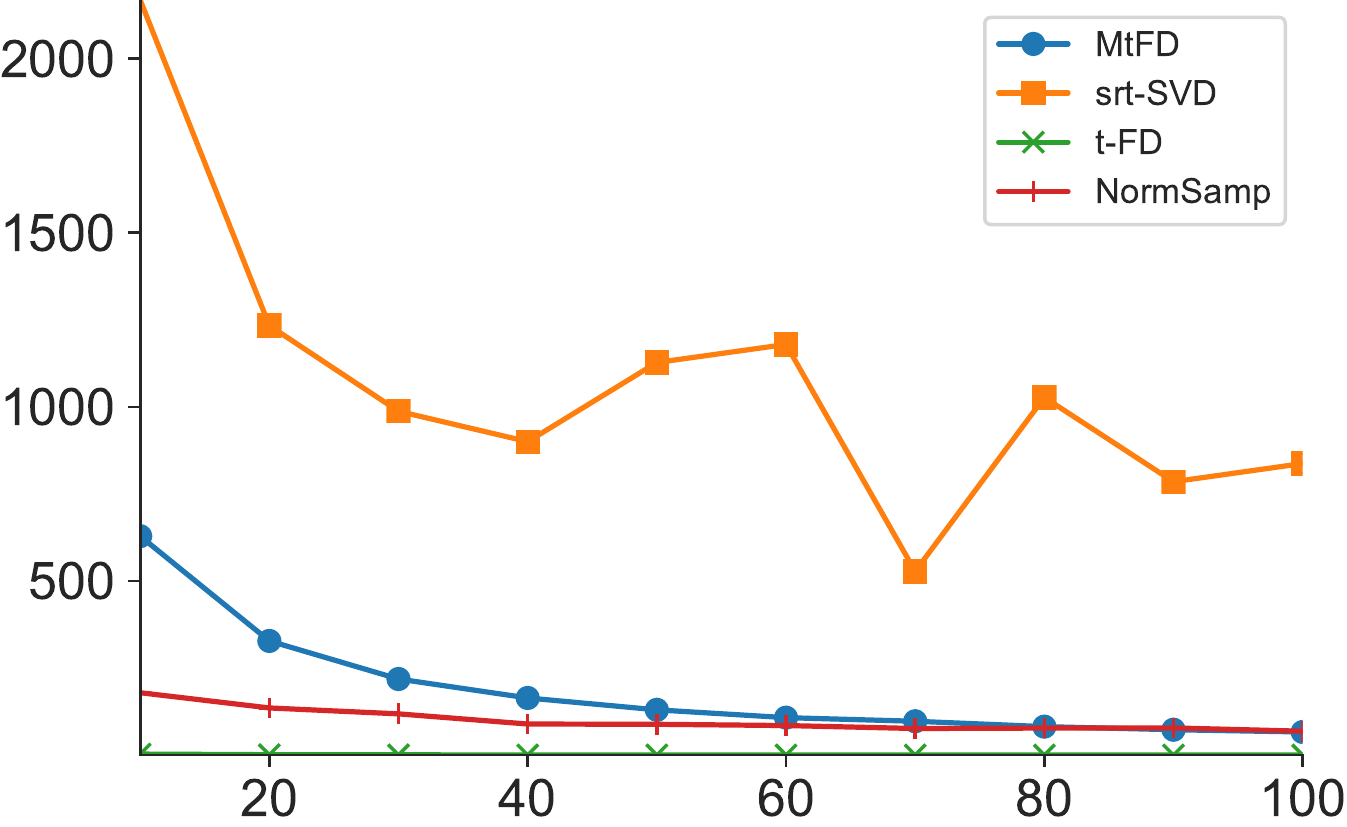}} &  \raisebox{-1mm}{\includegraphics[scale = 0.23]{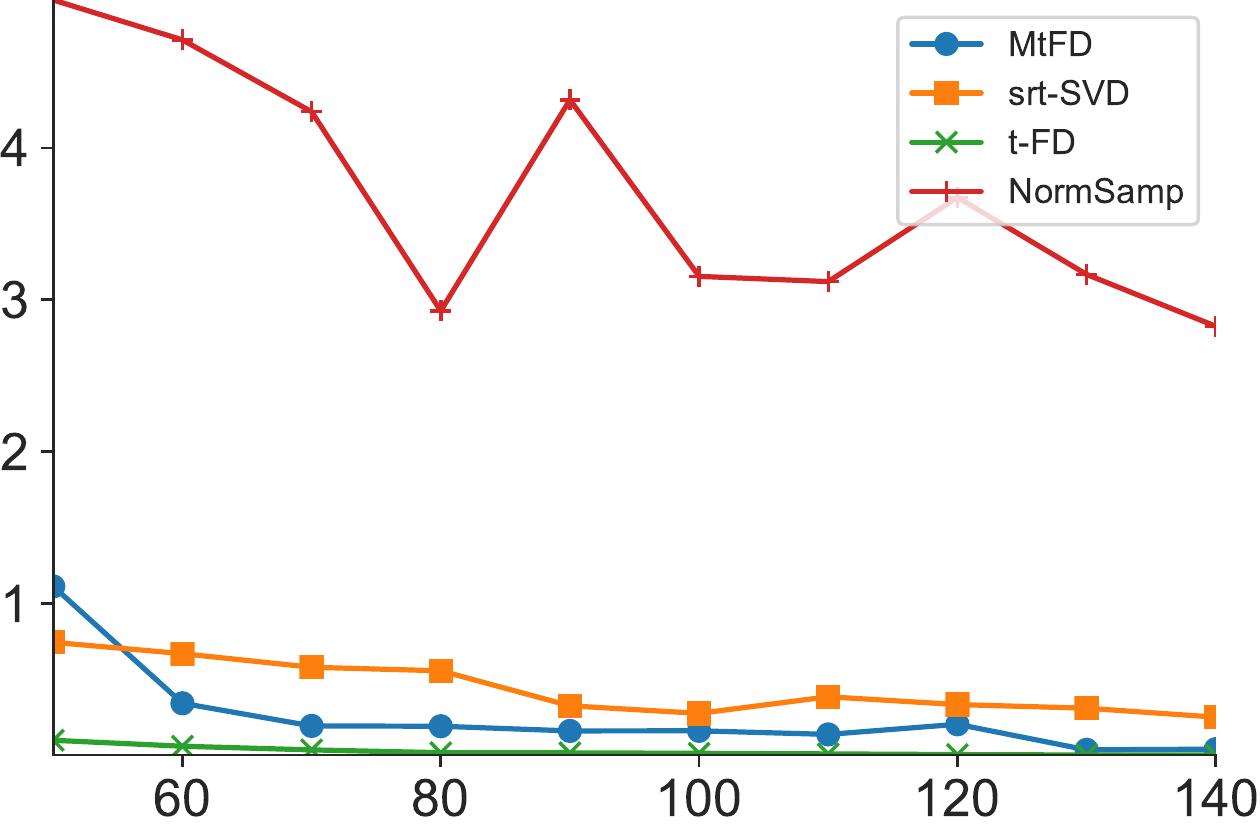}}\\
			\hline
			\rotatebox{90}{\begin{tiny}$\mathsf{Running\ time}$\end{tiny}} & 
			\raisebox{-1mm}{\includegraphics[scale = 0.22]{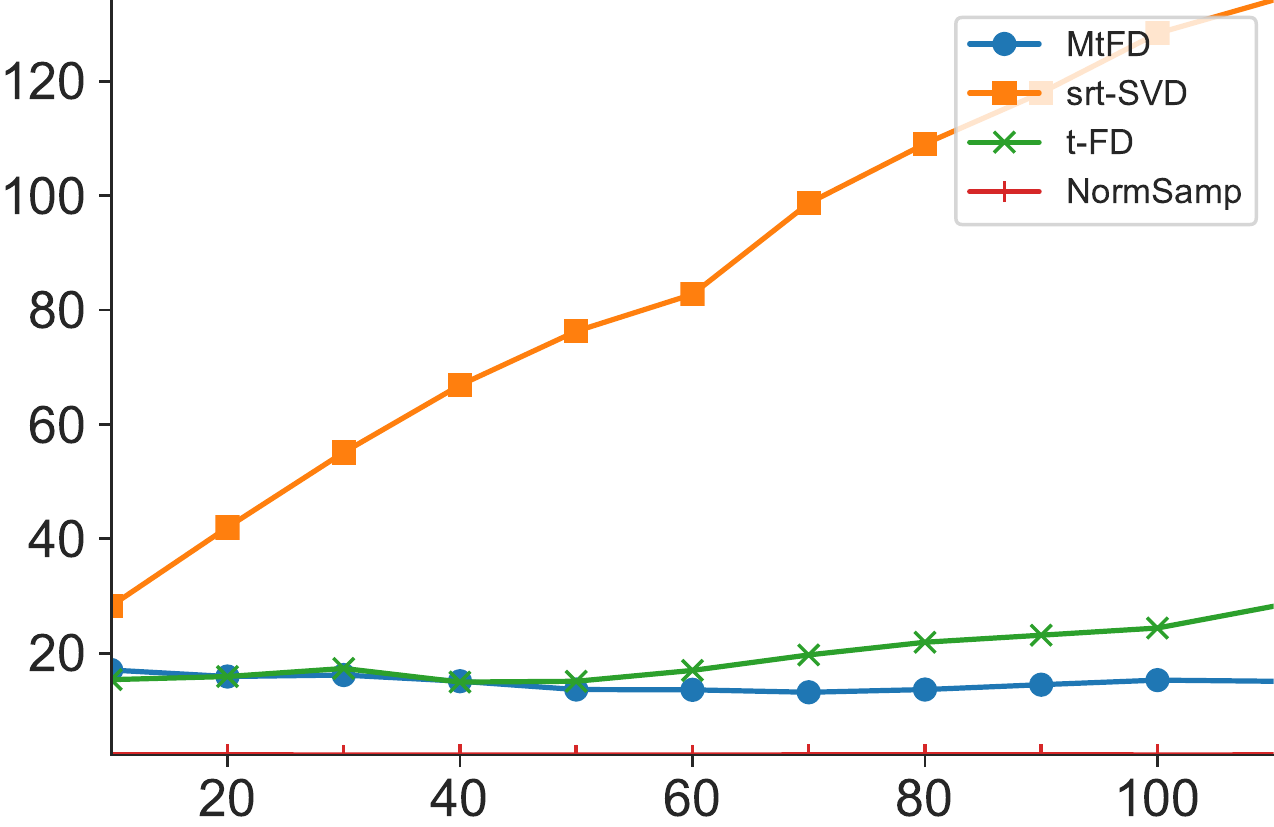}} &  \raisebox{-1mm}{\includegraphics[scale = 0.23]{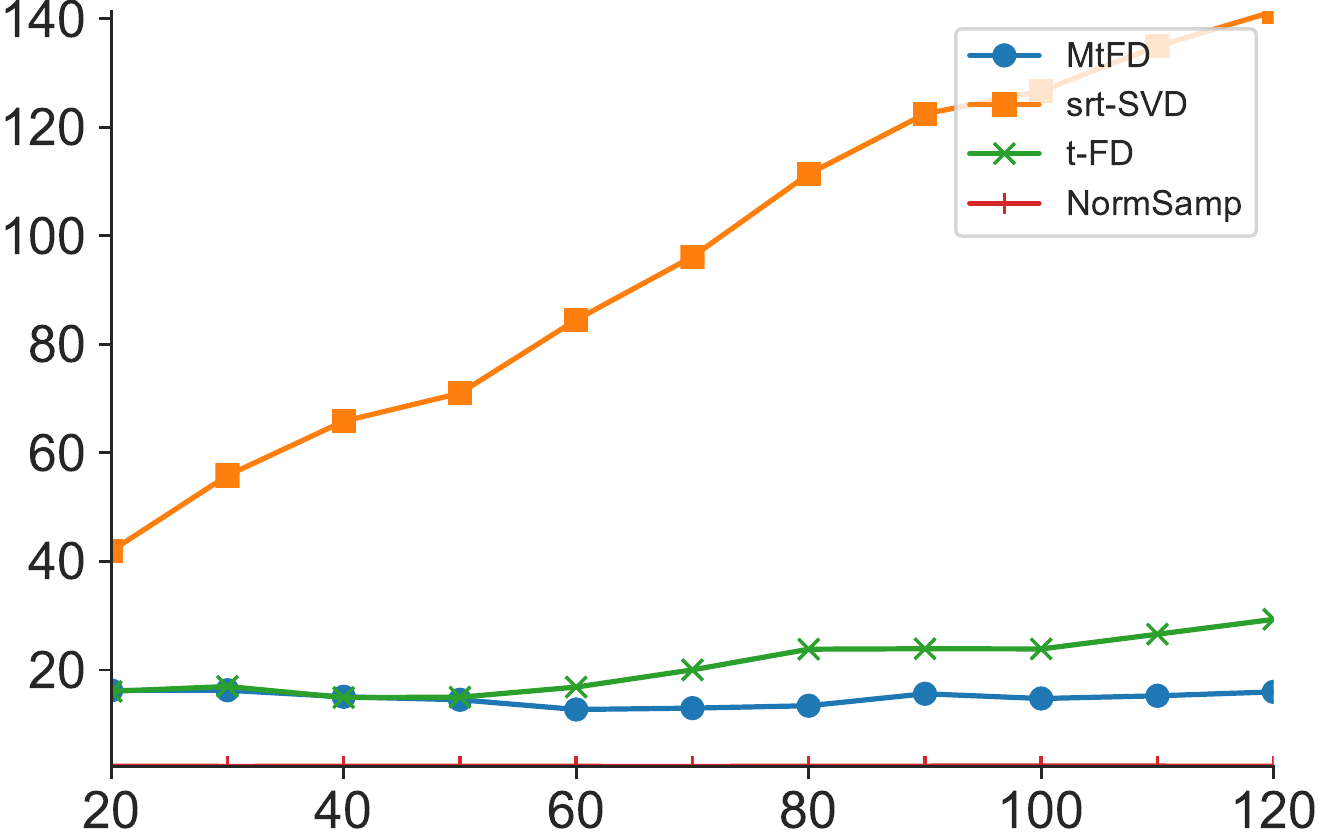}} &  \raisebox{-1mm}{\includegraphics[scale = 0.22]{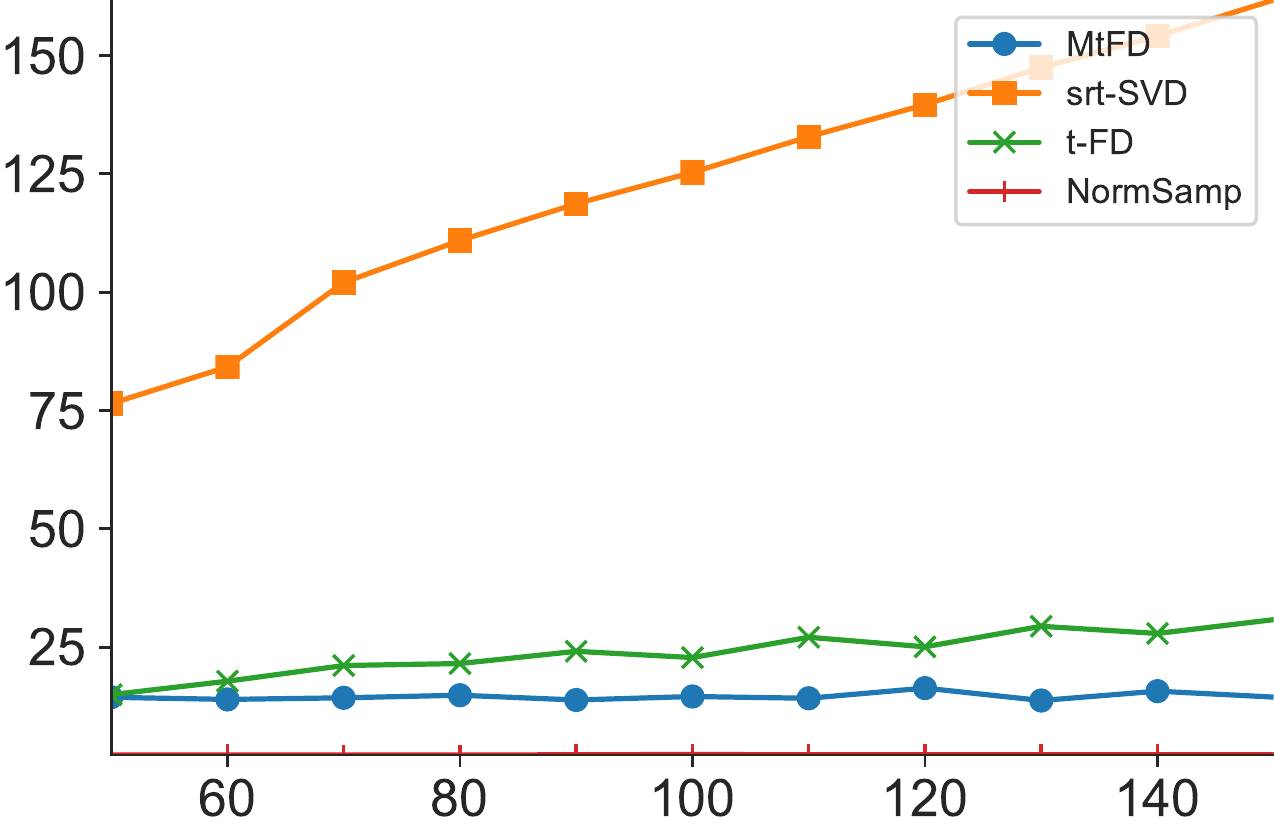}} &  \raisebox{-1mm}{\includegraphics[scale = 0.23]{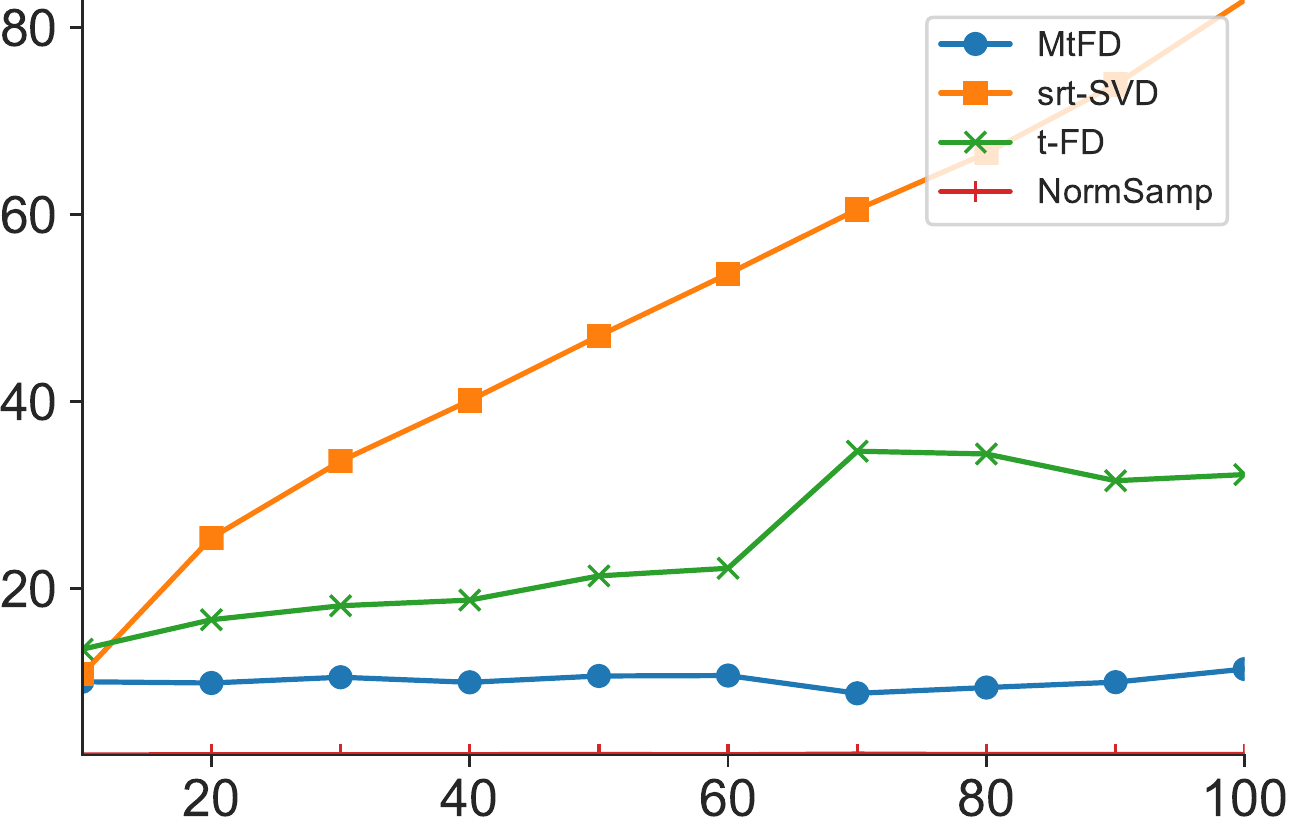}} &  \raisebox{-1mm}{\includegraphics[scale = 0.22]{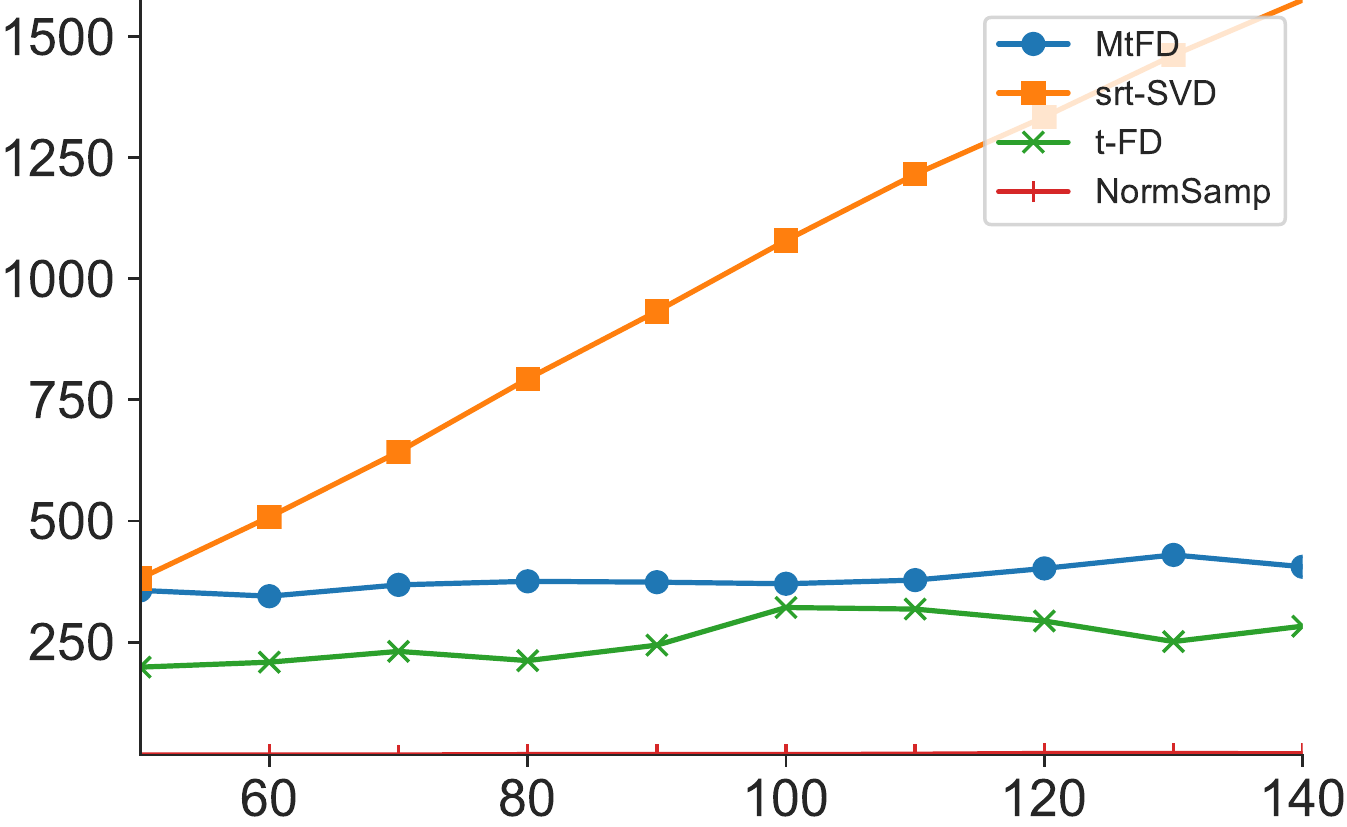}}\\
			\hline
			\rotatebox{90}{\begin{tiny}$\mathsf{The\  value \ of \ c}$\end{tiny}} & 
			\raisebox{-2mm}{\includegraphics[scale = 0.21]{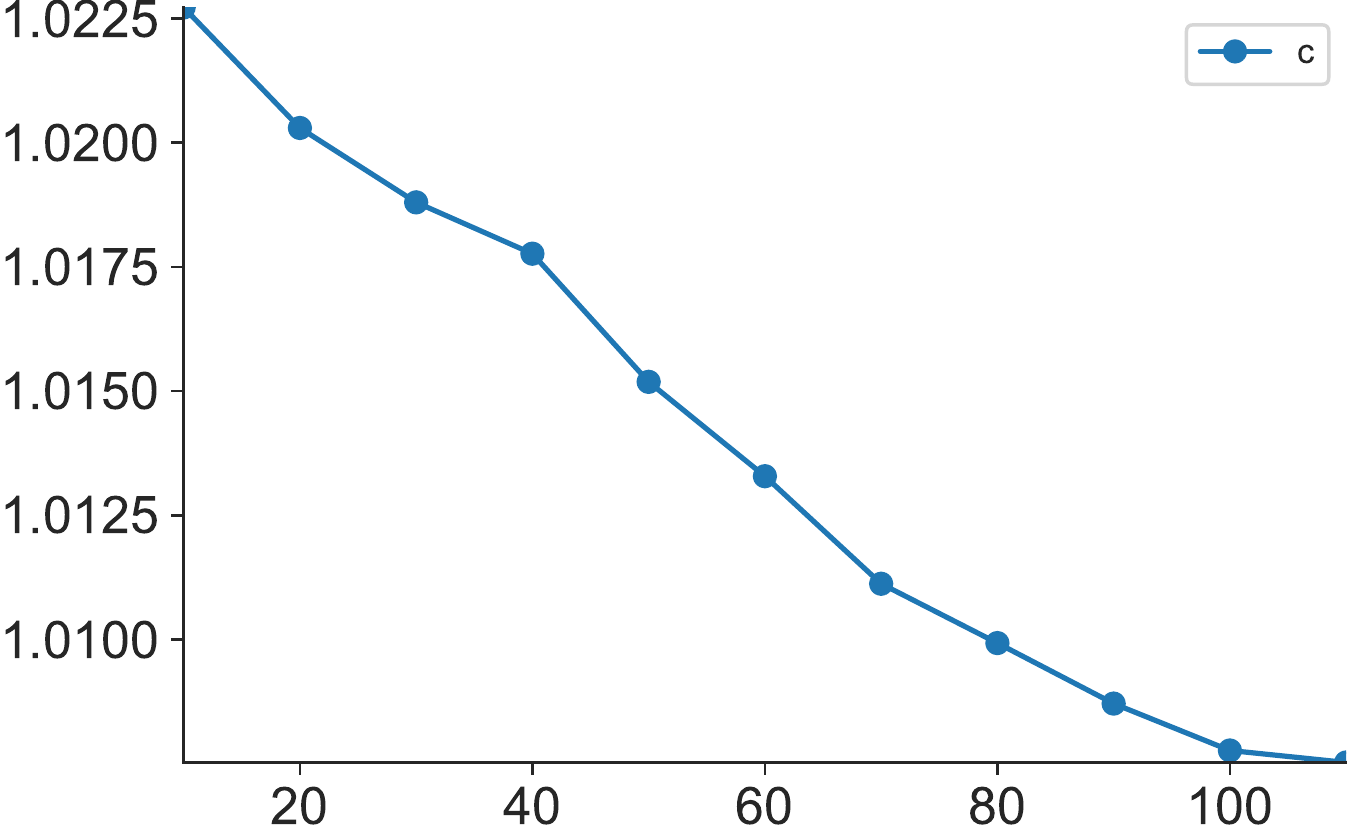}} &  \raisebox{-2mm}{\includegraphics[scale = 0.21]{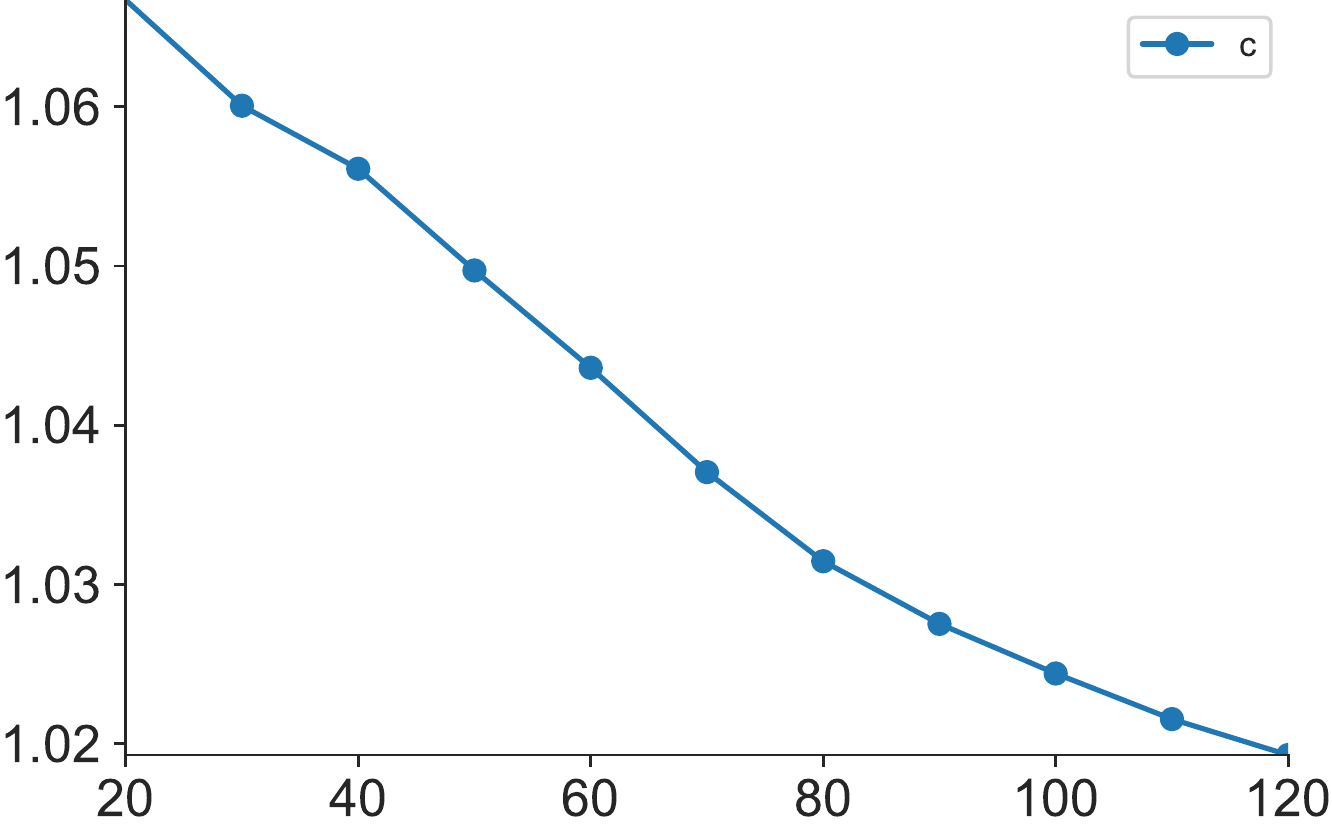}} &  \raisebox{-2mm}{\includegraphics[scale = 0.21]{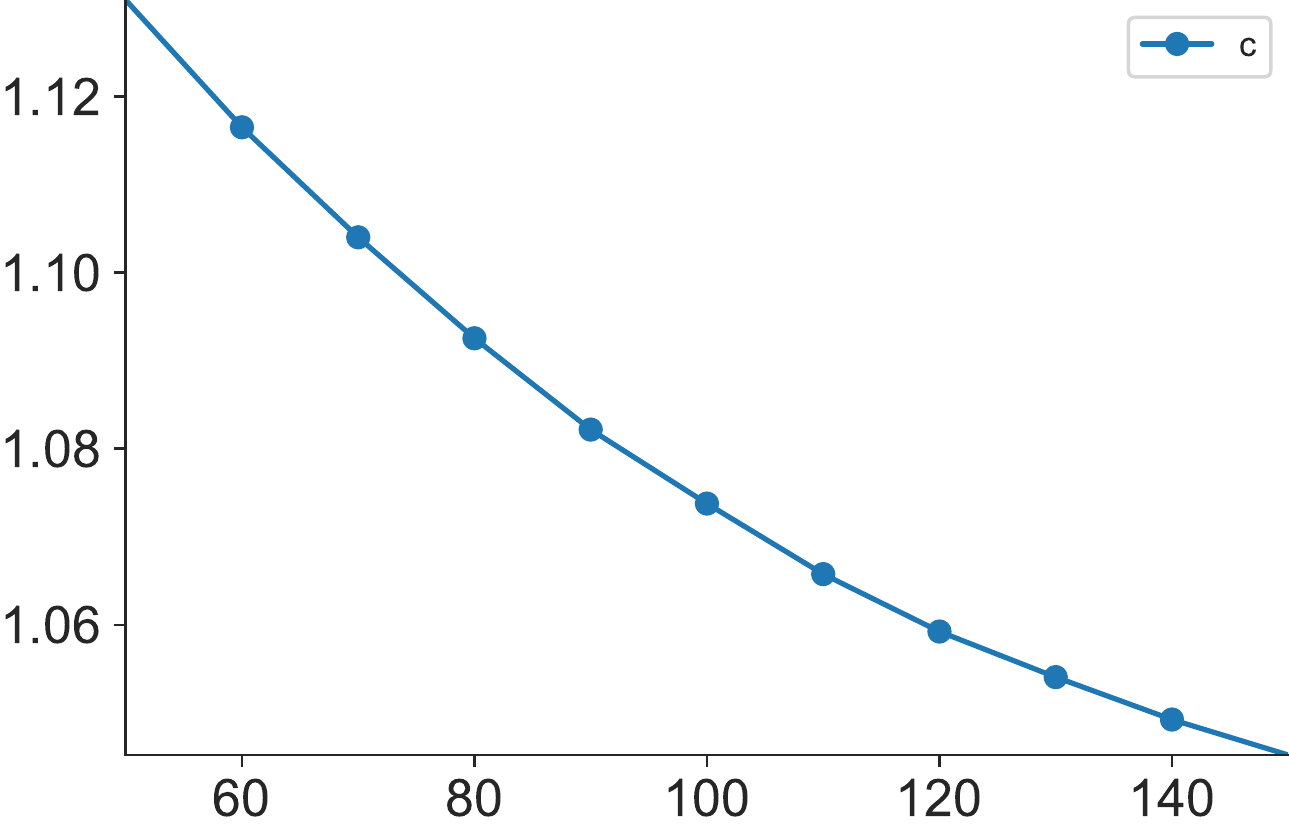}} &  \raisebox{-2mm}{\includegraphics[scale = 0.21]{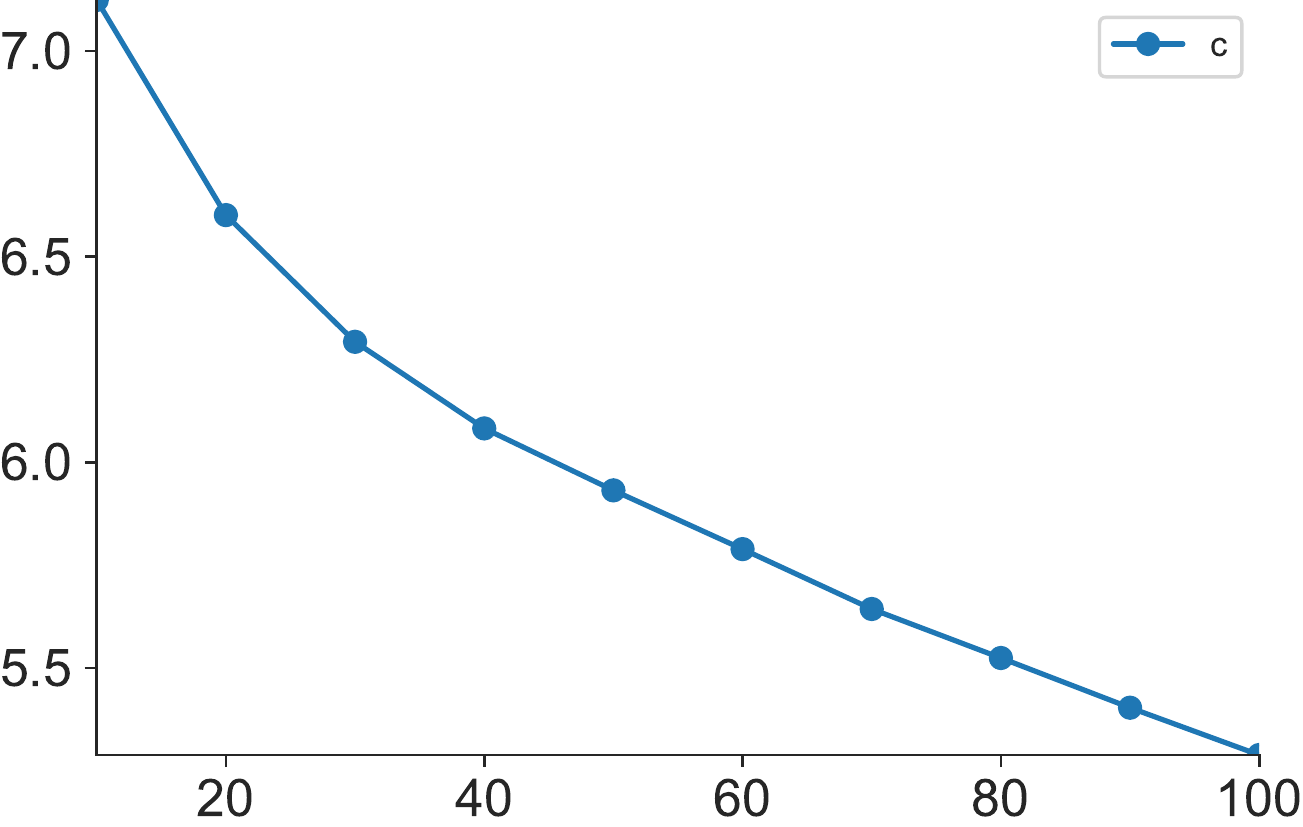}} &  \raisebox{-2mm}{\includegraphics[scale = 0.21]{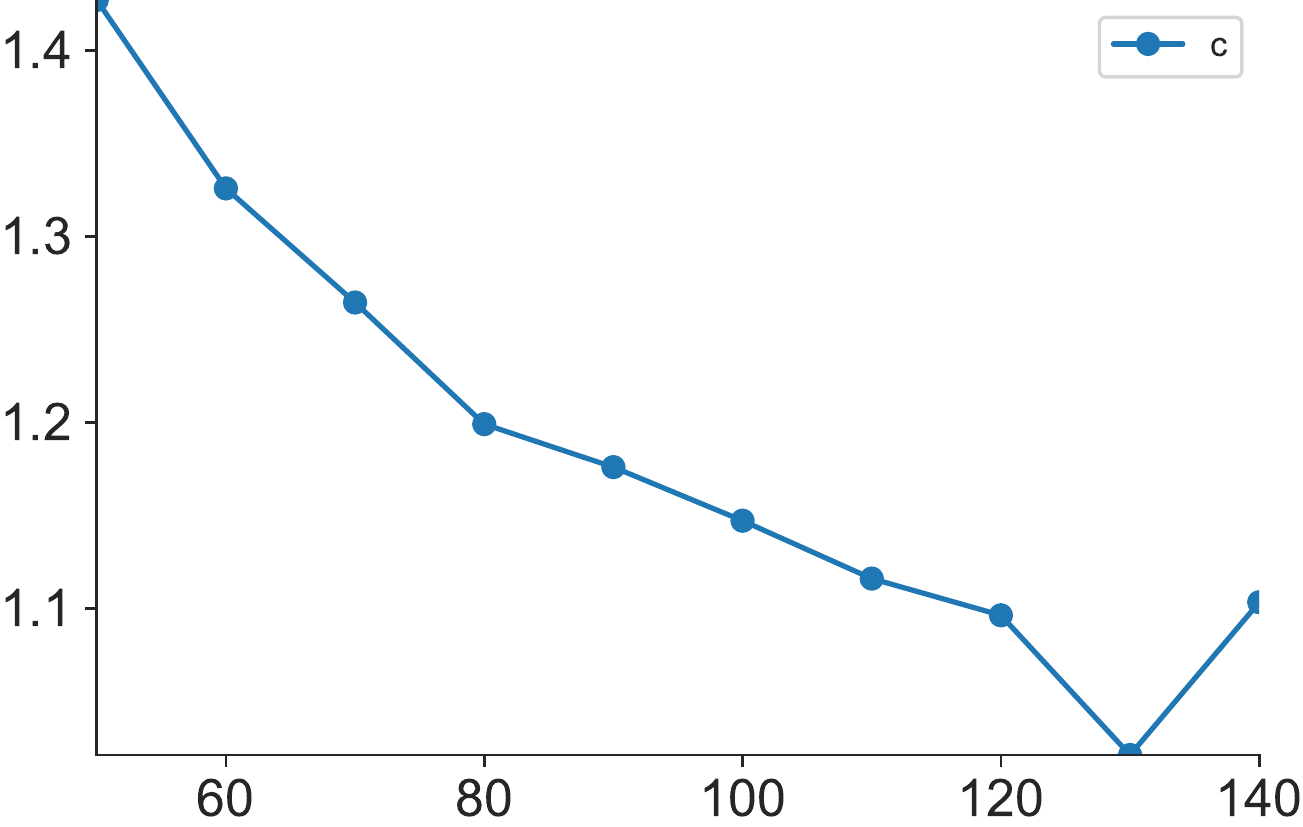}}\\
			\hline
			& $\mathsf{k = 10}$ & $\mathsf{k = 20}$ & $\mathsf{k = 50}$ & $\mathsf{highway}$ & $\mathsf{Uber}$\\
			\hline
		\end{tabular}
	\end{adjustbox}
	\vspace{10pt}
	\caption{Experimental results for third-order synthetic and real datasets} \label{tab:stimuli}
\end{table*}

\begin{table*}[ht!]
	\centering
	\begin{adjustbox}{center}
		\begin{tabular}{|c|c|c|c|c|c|}
			\hline
			\rotatebox{90}{\begin{tiny}$\mathsf{Projection\ error}$\end{tiny}} & 
			\raisebox{-1mm}{\includegraphics[scale = 0.22]{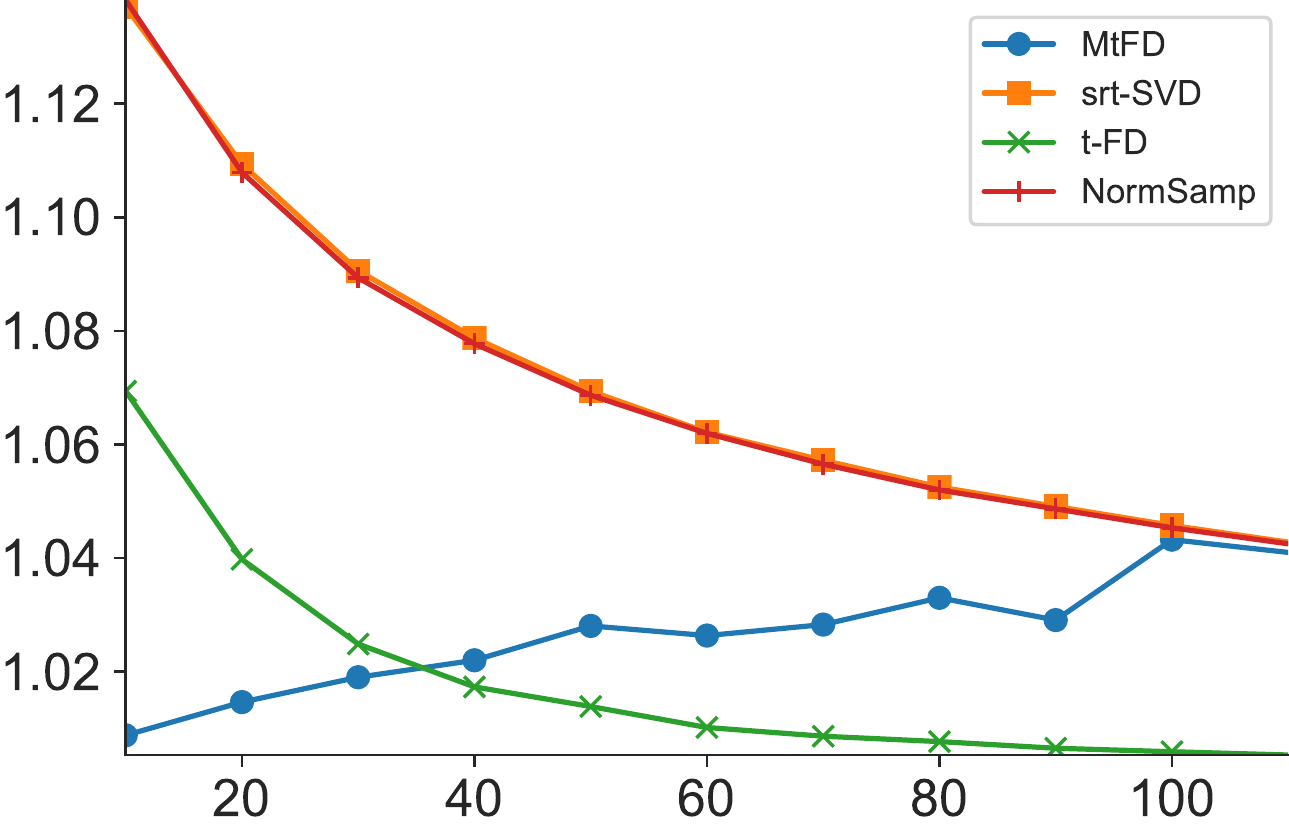}}
			&  \raisebox{-1mm}{\includegraphics[scale = 0.22]{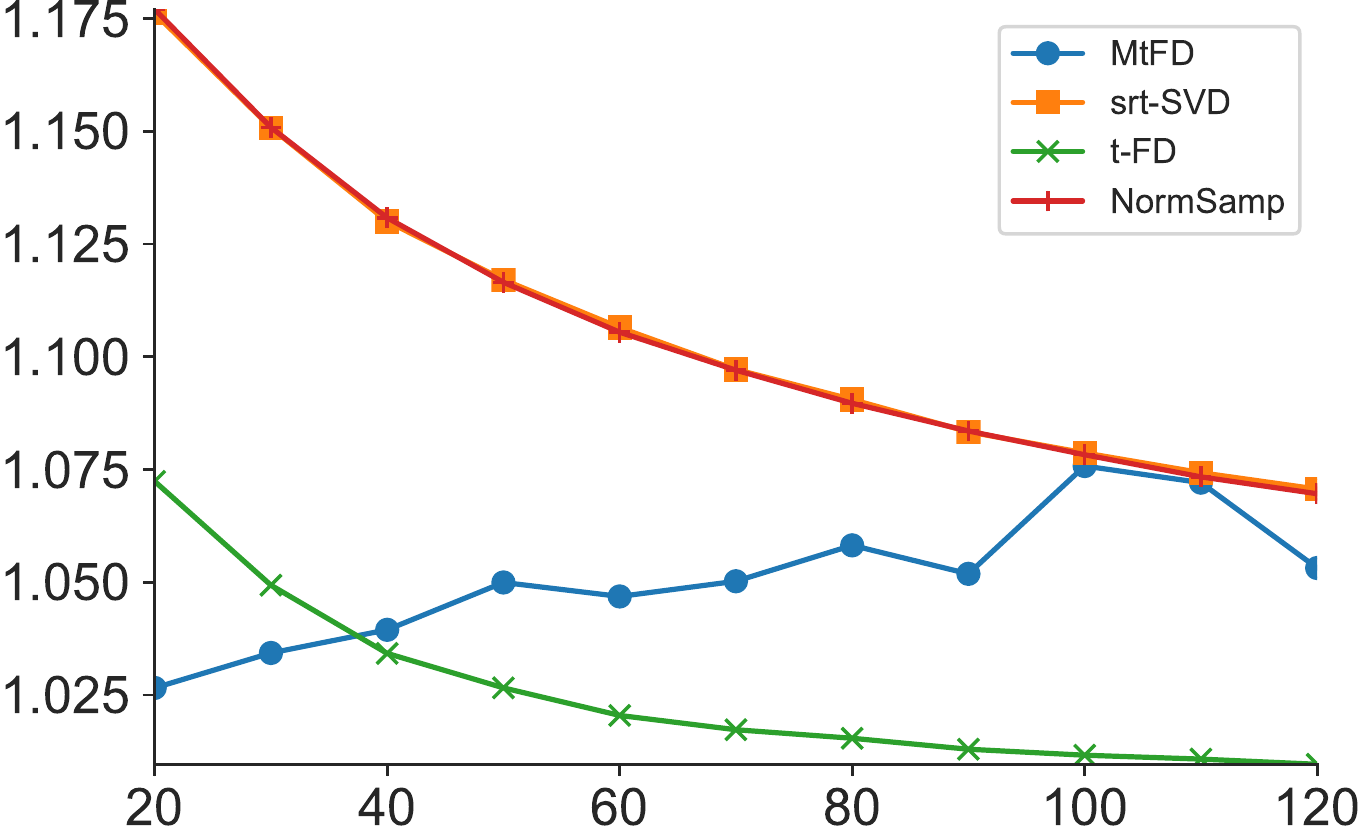}} &  \raisebox{-1mm}{\includegraphics[scale = 0.23]{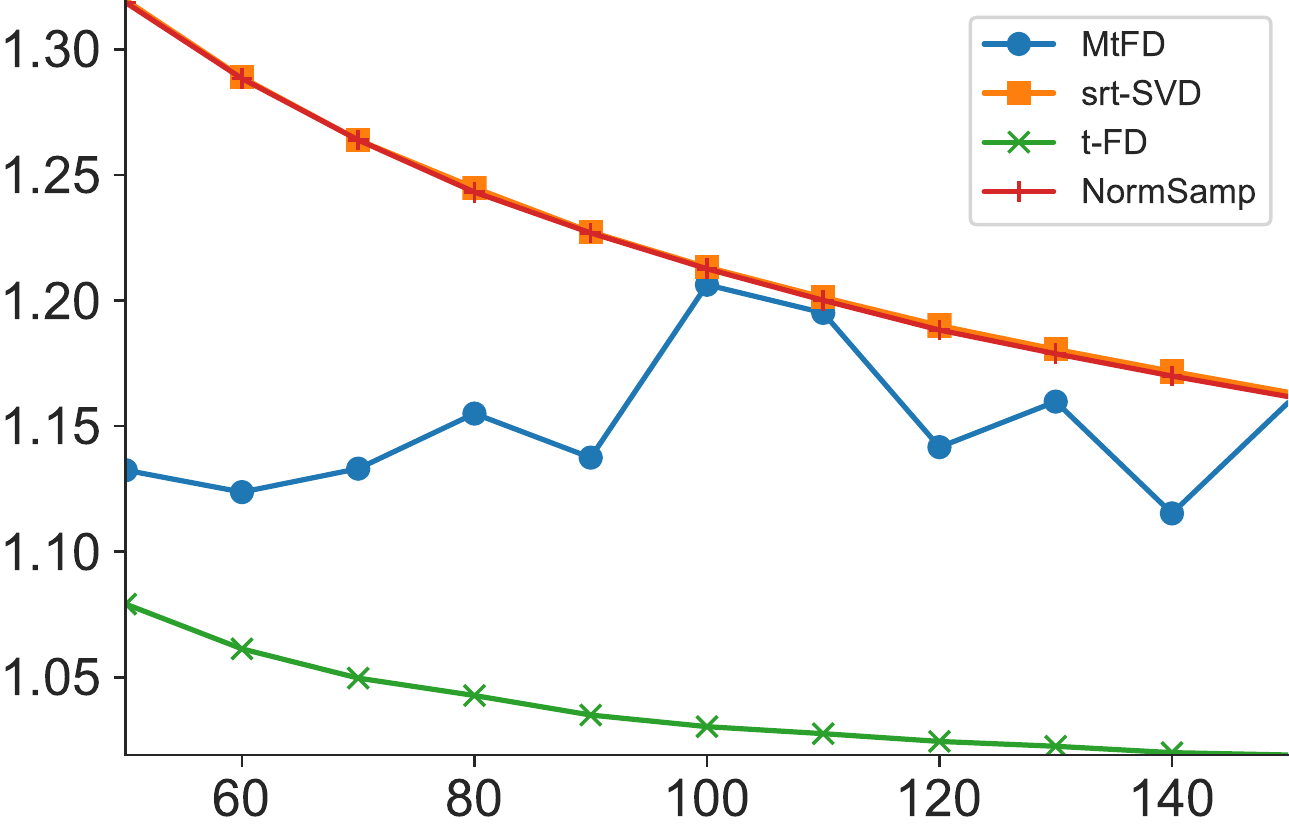}} &  \raisebox{-1mm}{\includegraphics[scale = 0.22]{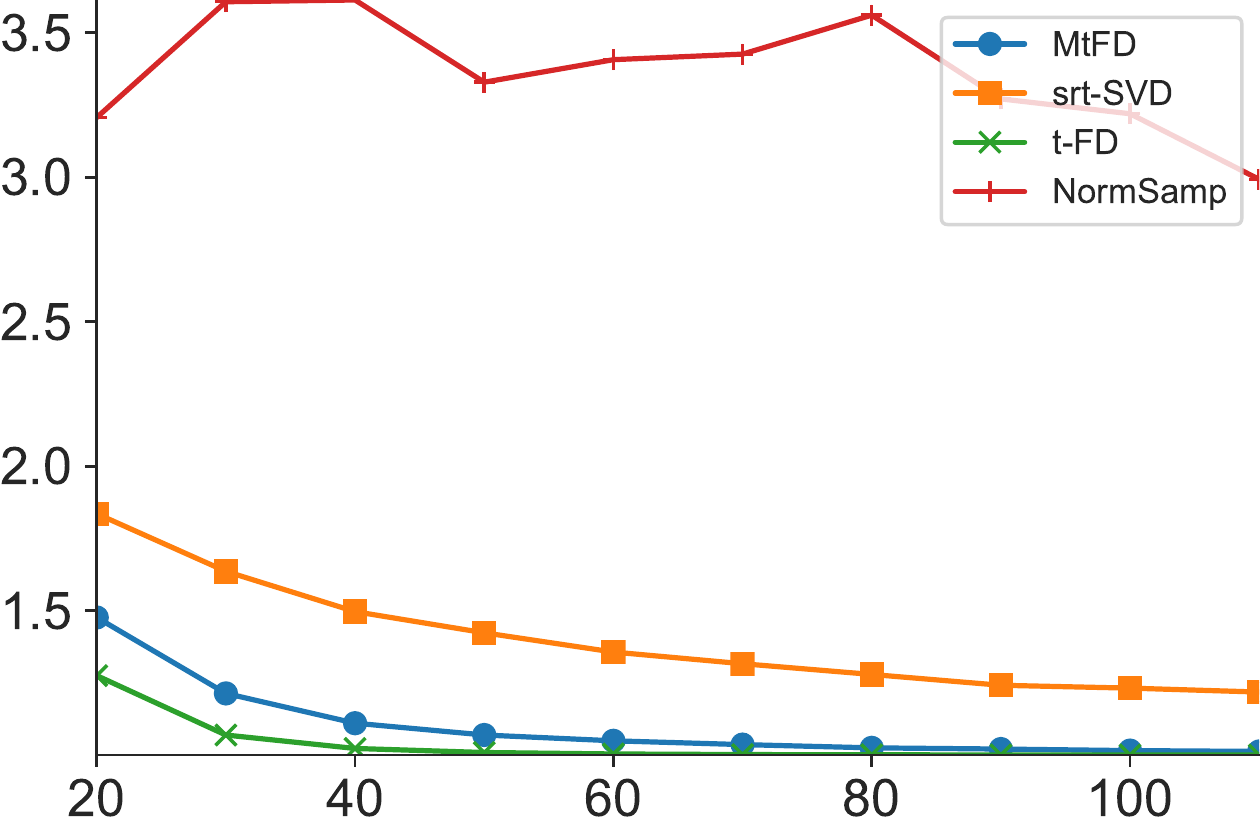}} & \raisebox{-1mm}{\includegraphics[scale = 0.23]{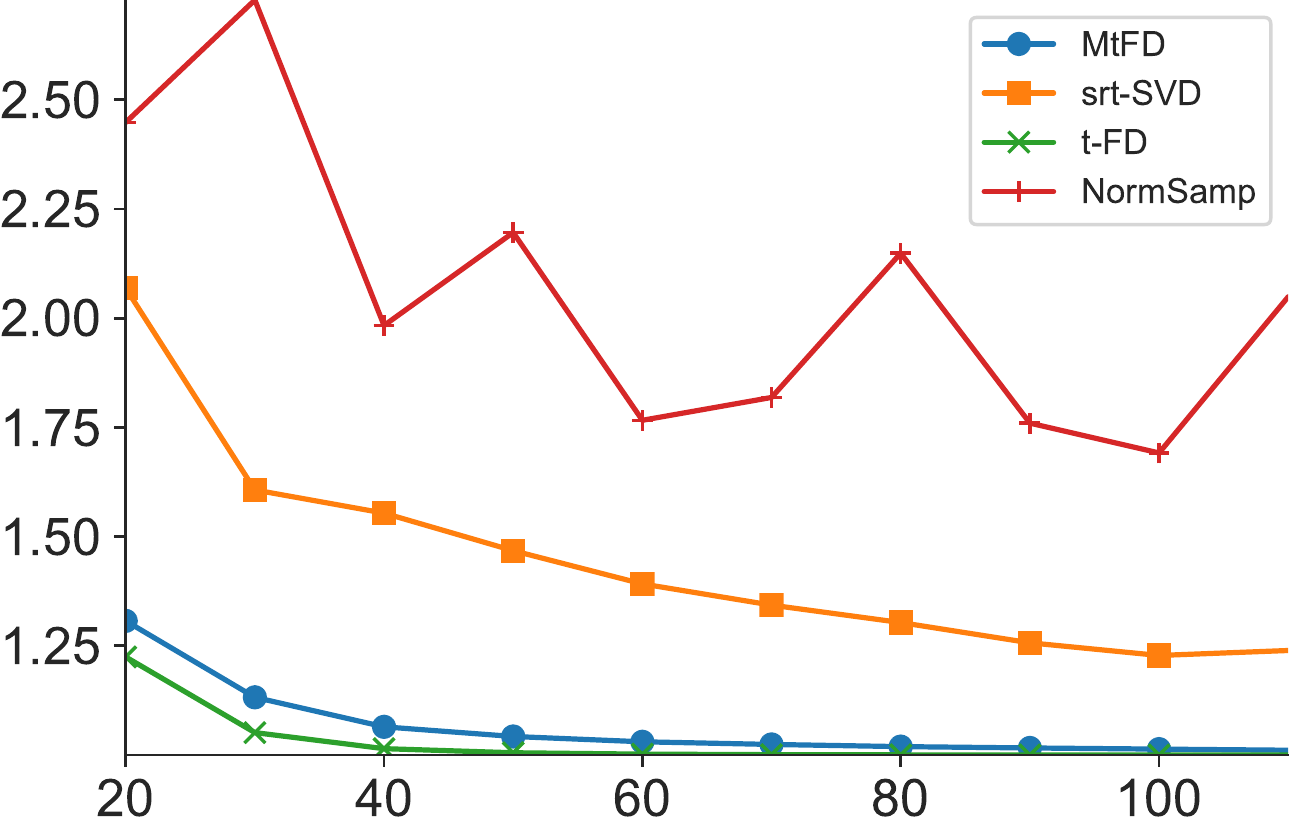}}\\
			\hline
			\rotatebox{90}{\begin{tiny}$\mathsf{Covariance\ error}$\end{tiny}} & 
			\raisebox{-1mm}{\includegraphics[scale = 0.22]{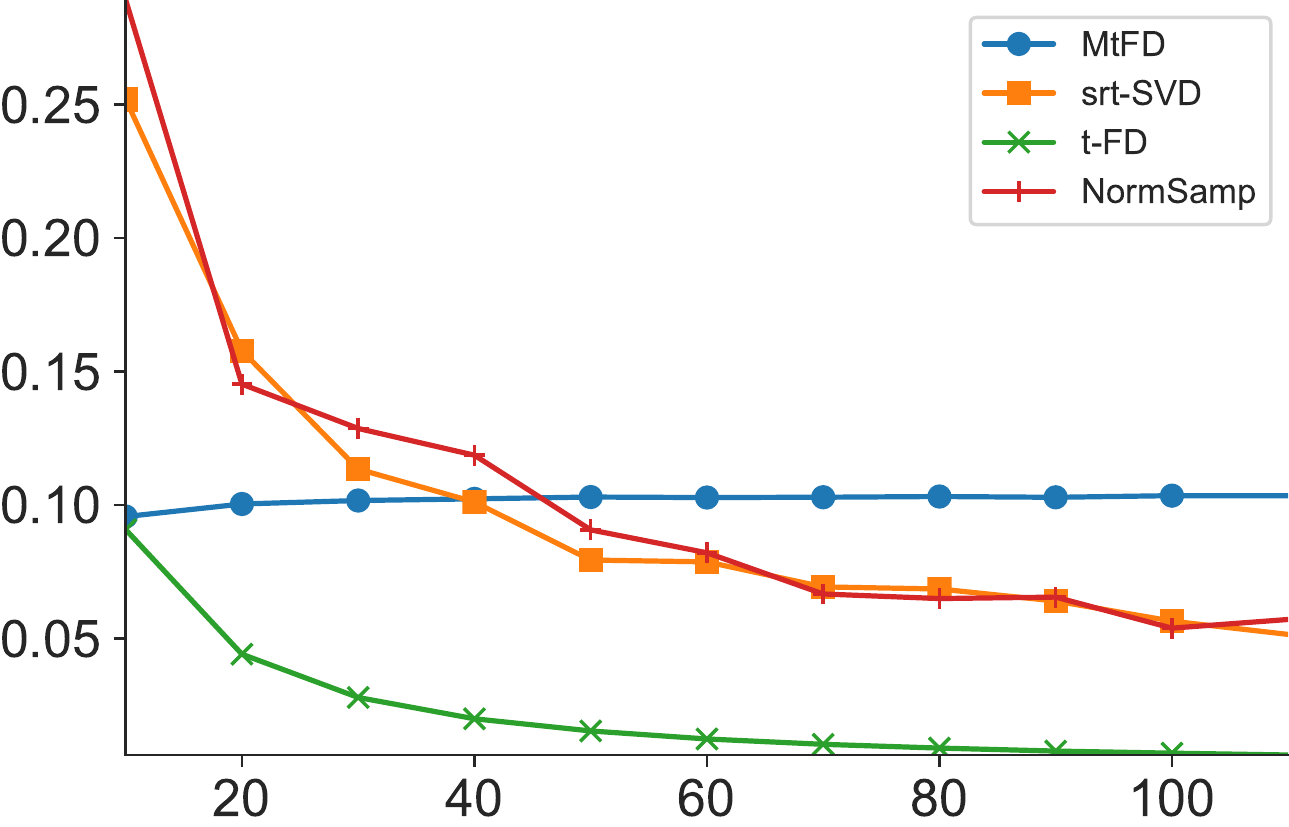}} &  \raisebox{-1mm}{\includegraphics[scale = 0.23]{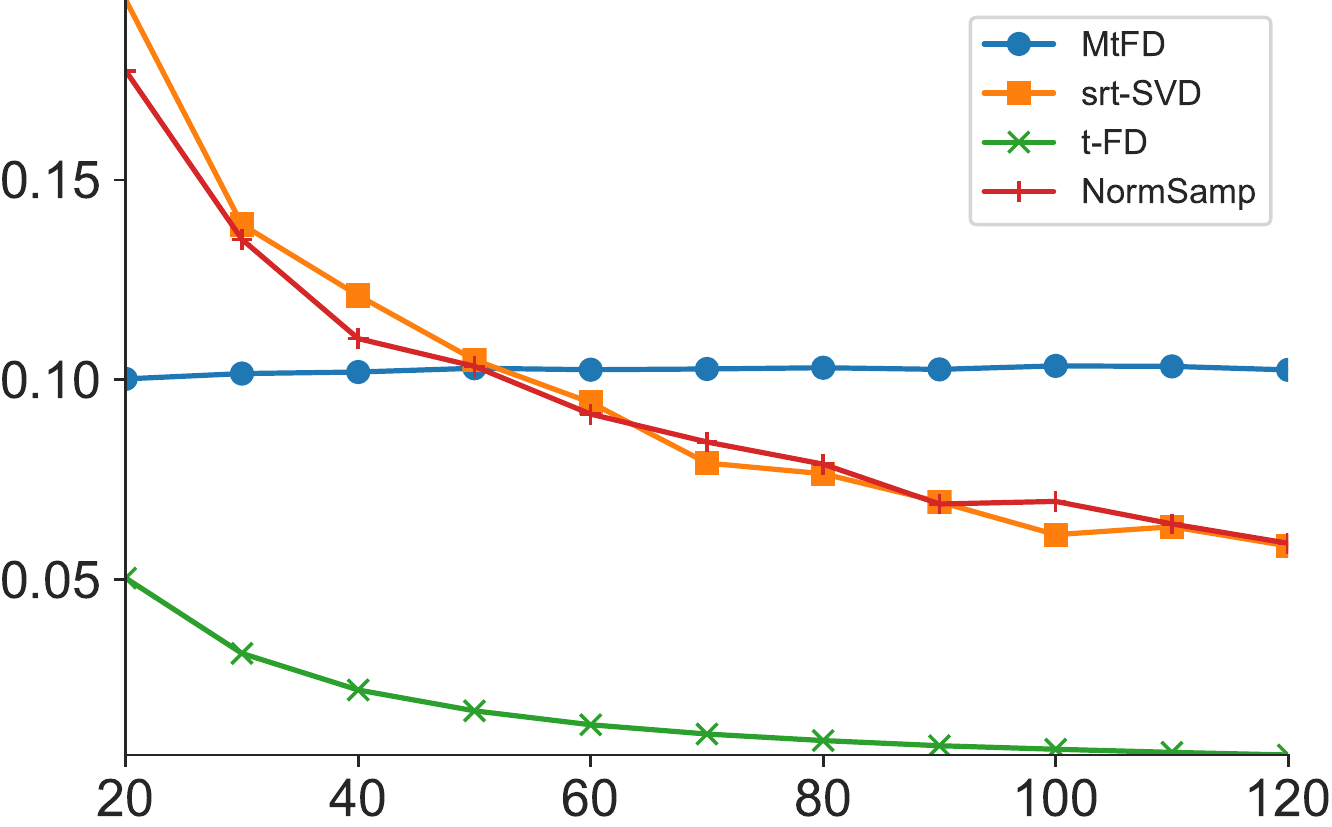}} &  \raisebox{-1mm}{\includegraphics[scale = 0.22]{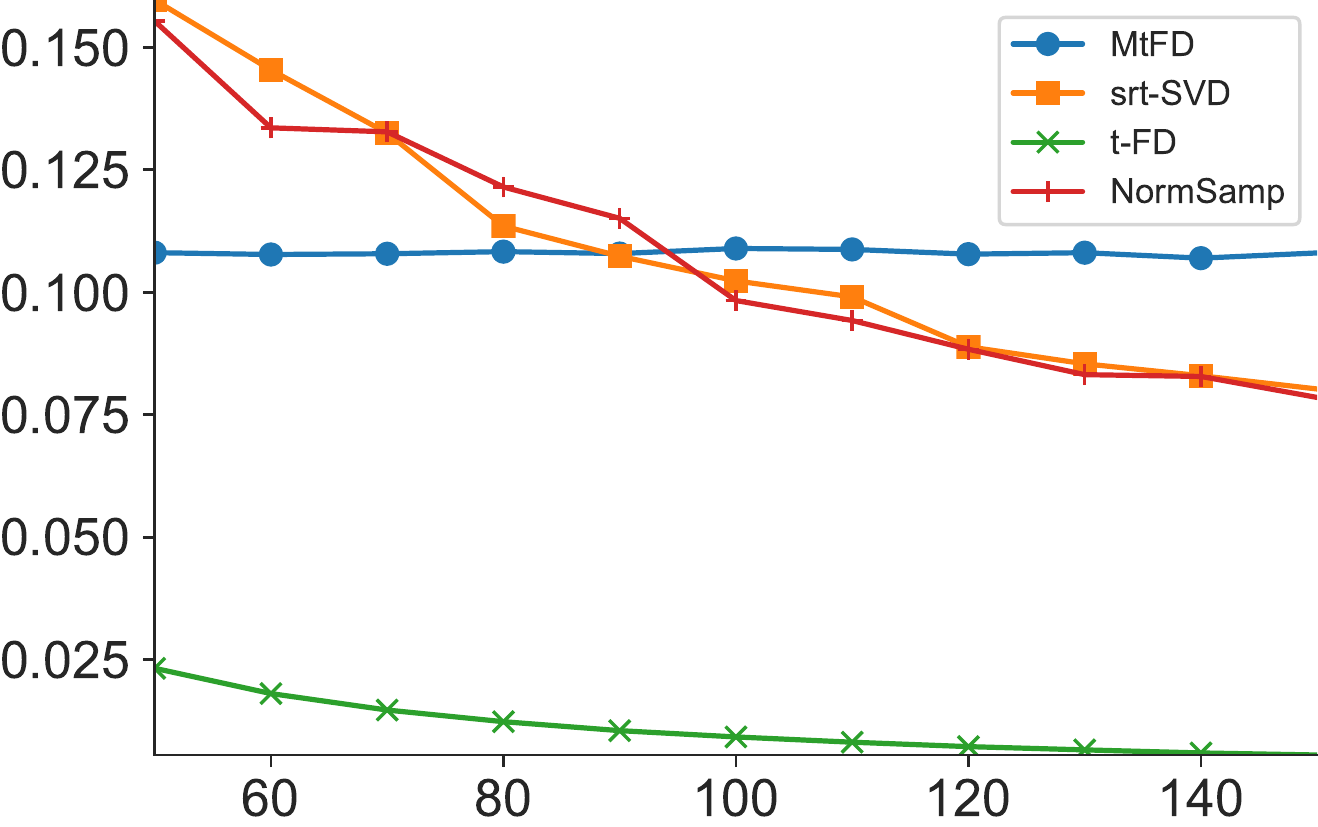}} &  \raisebox{-1mm}{\includegraphics[scale = 0.23]{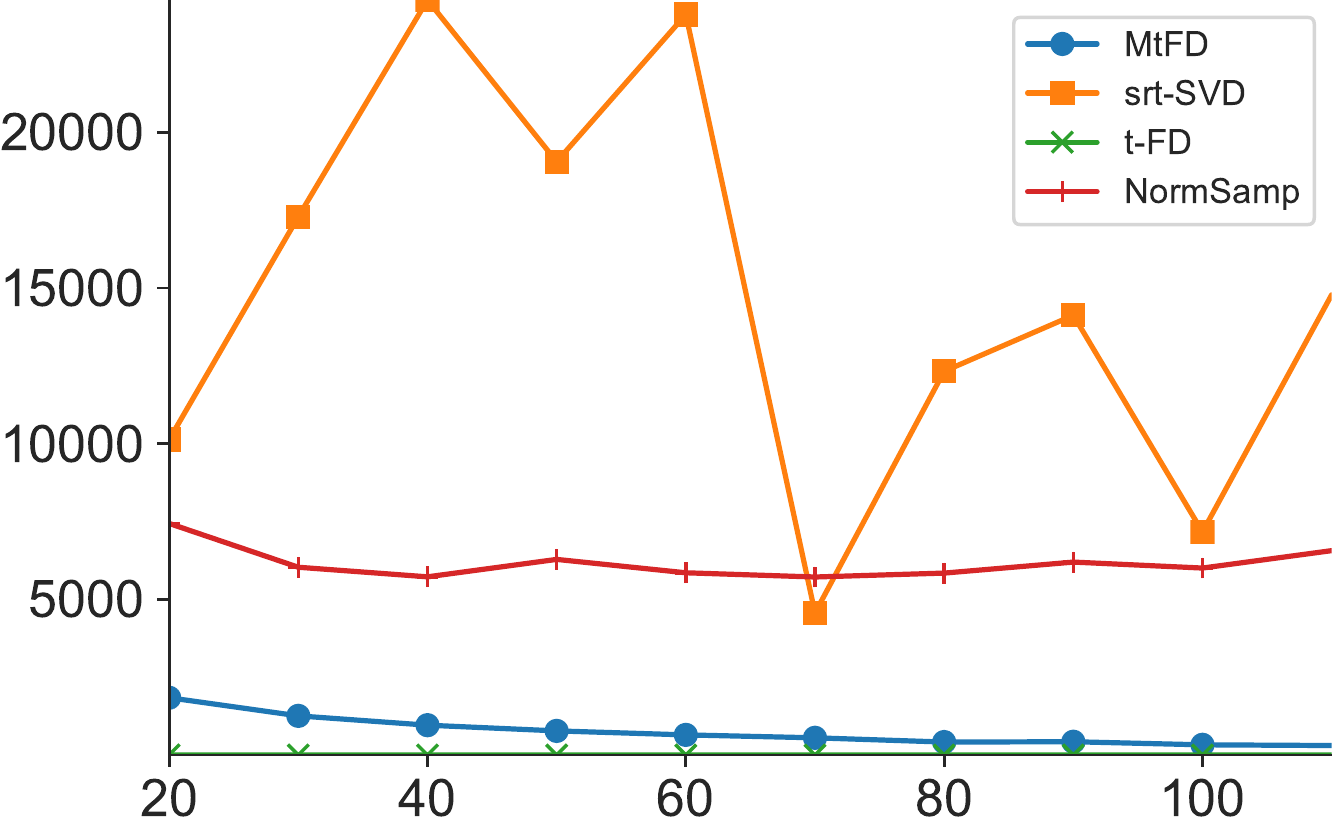}} &  \raisebox{-1mm}{\includegraphics[scale = 0.22]{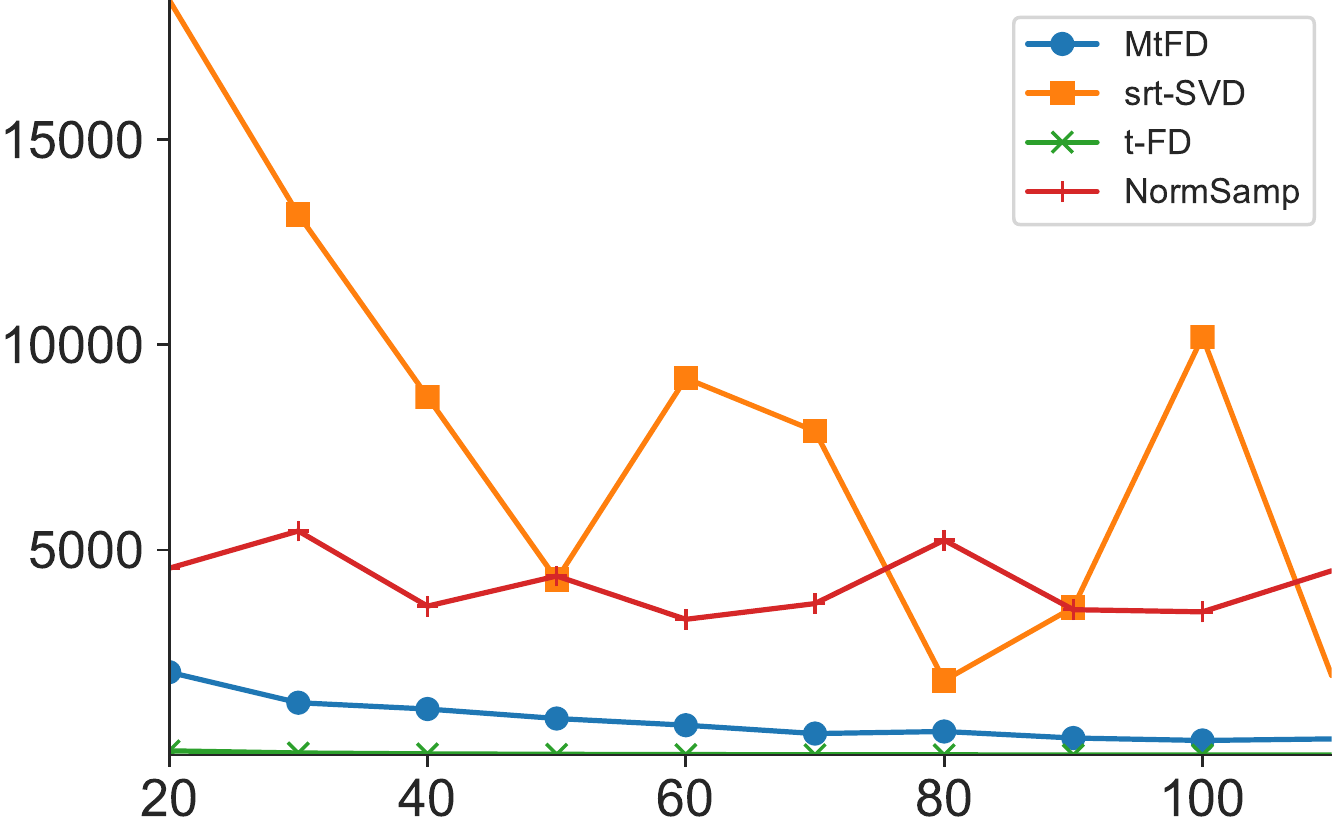}}\\
			\hline
			\rotatebox{90}{\begin{tiny}$\mathsf{Running\ time}$\end{tiny}} & 
			\raisebox{-1mm}{\includegraphics[scale = 0.22]{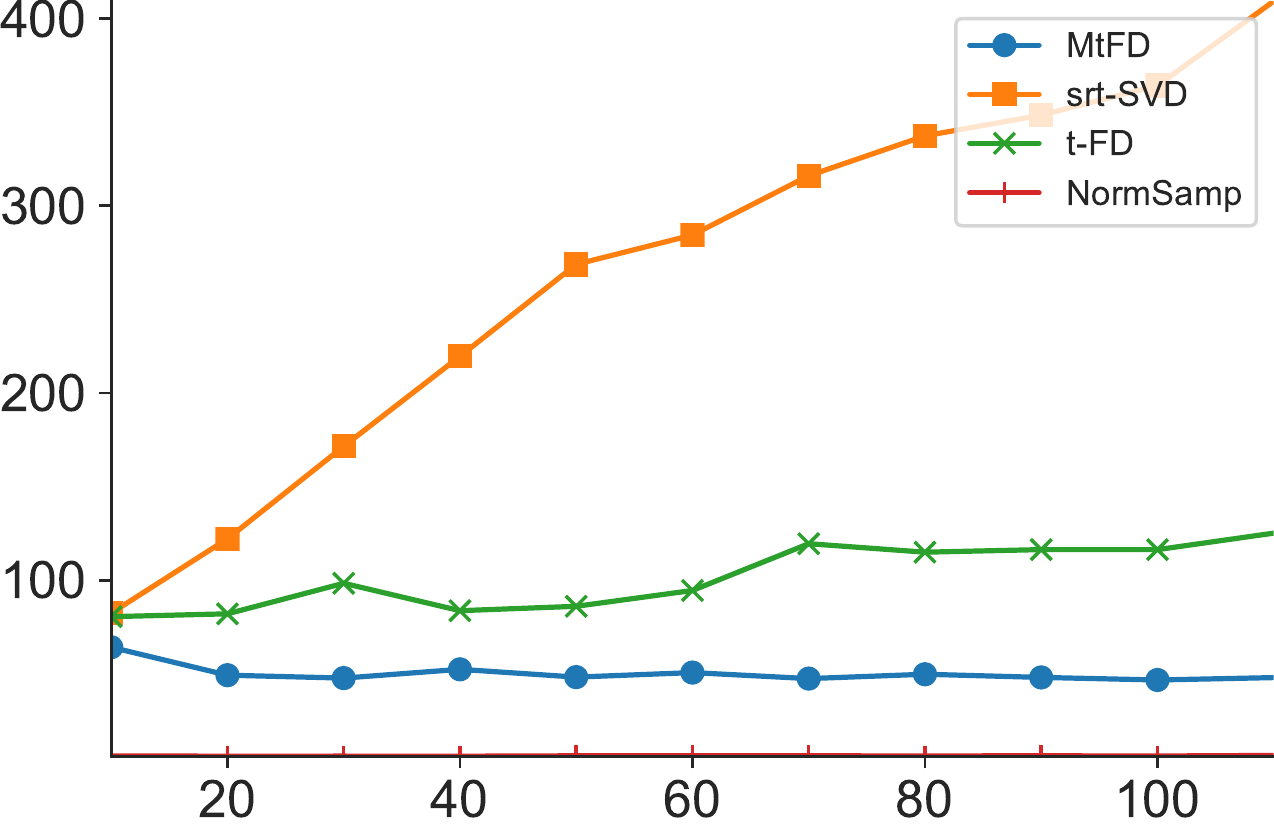}} &  \raisebox{-1mm}{\includegraphics[scale = 0.23]{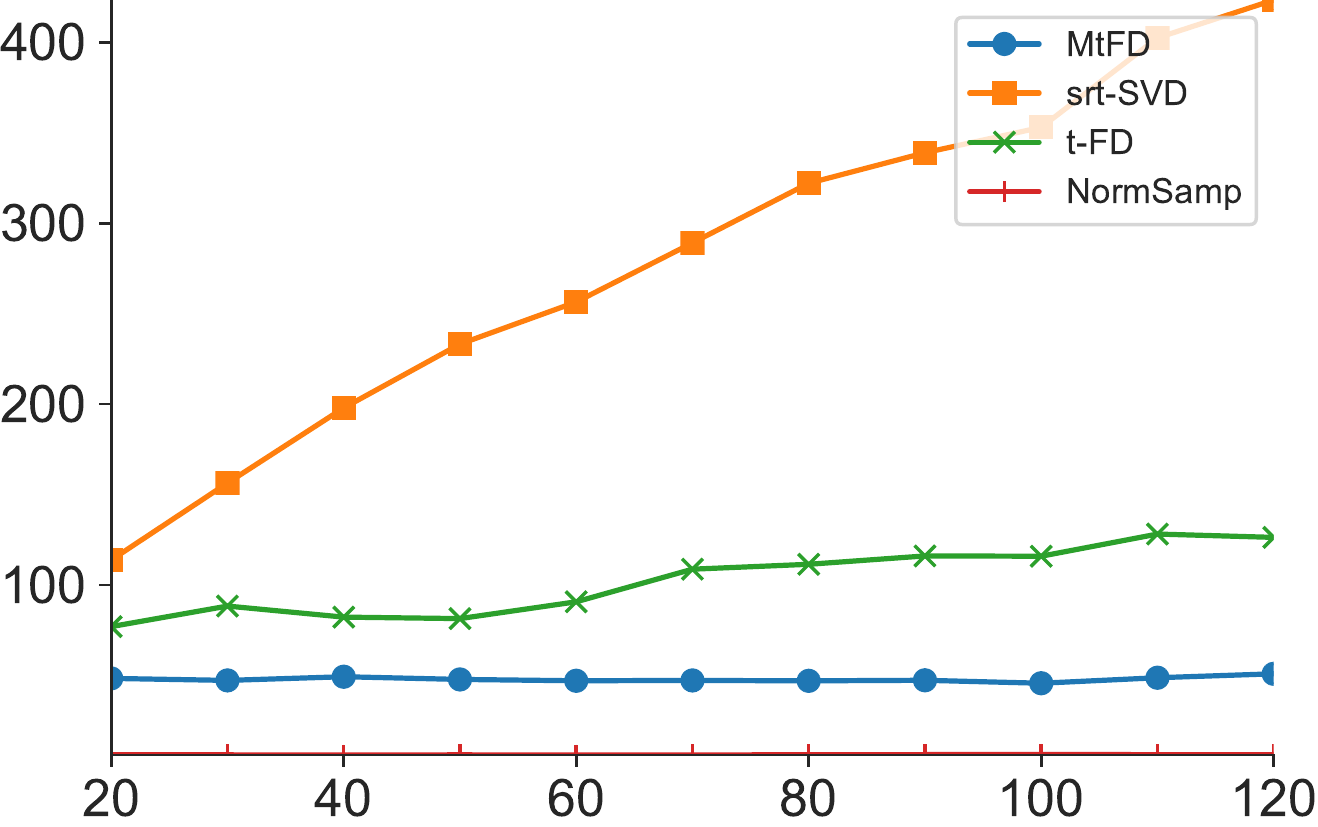}} &  \raisebox{-1mm}{\includegraphics[scale = 0.22]{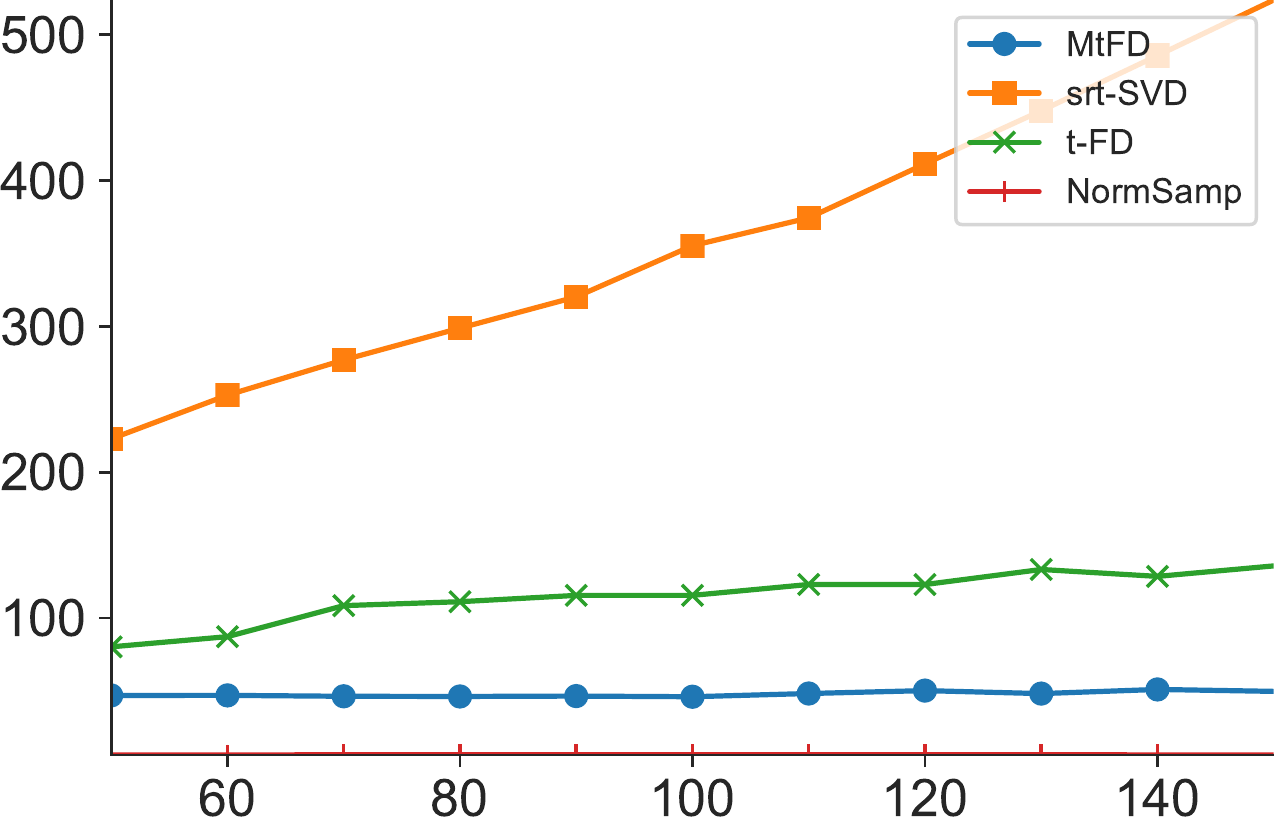}} &  \raisebox{-1mm}{\includegraphics[scale = 0.23]{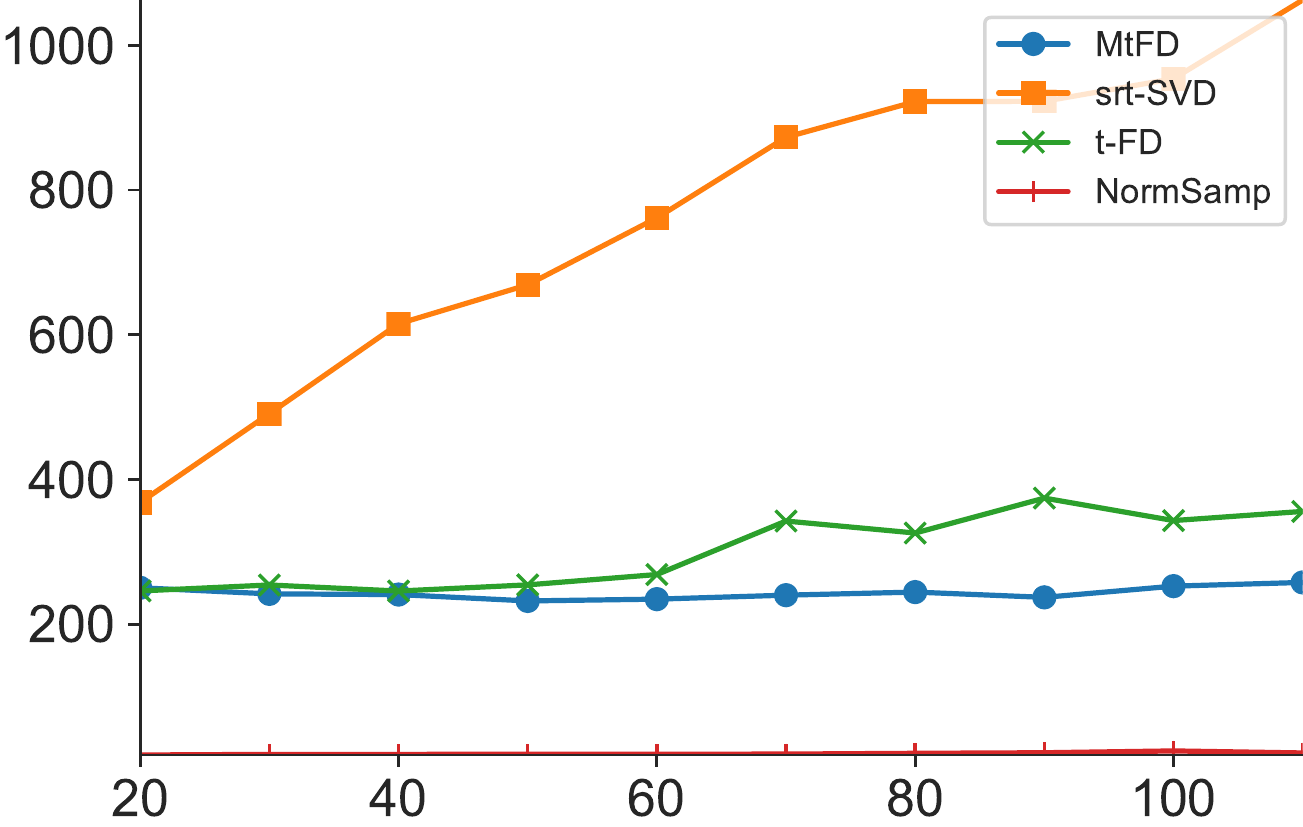}} &  \raisebox{-1mm}{\includegraphics[scale = 0.22]{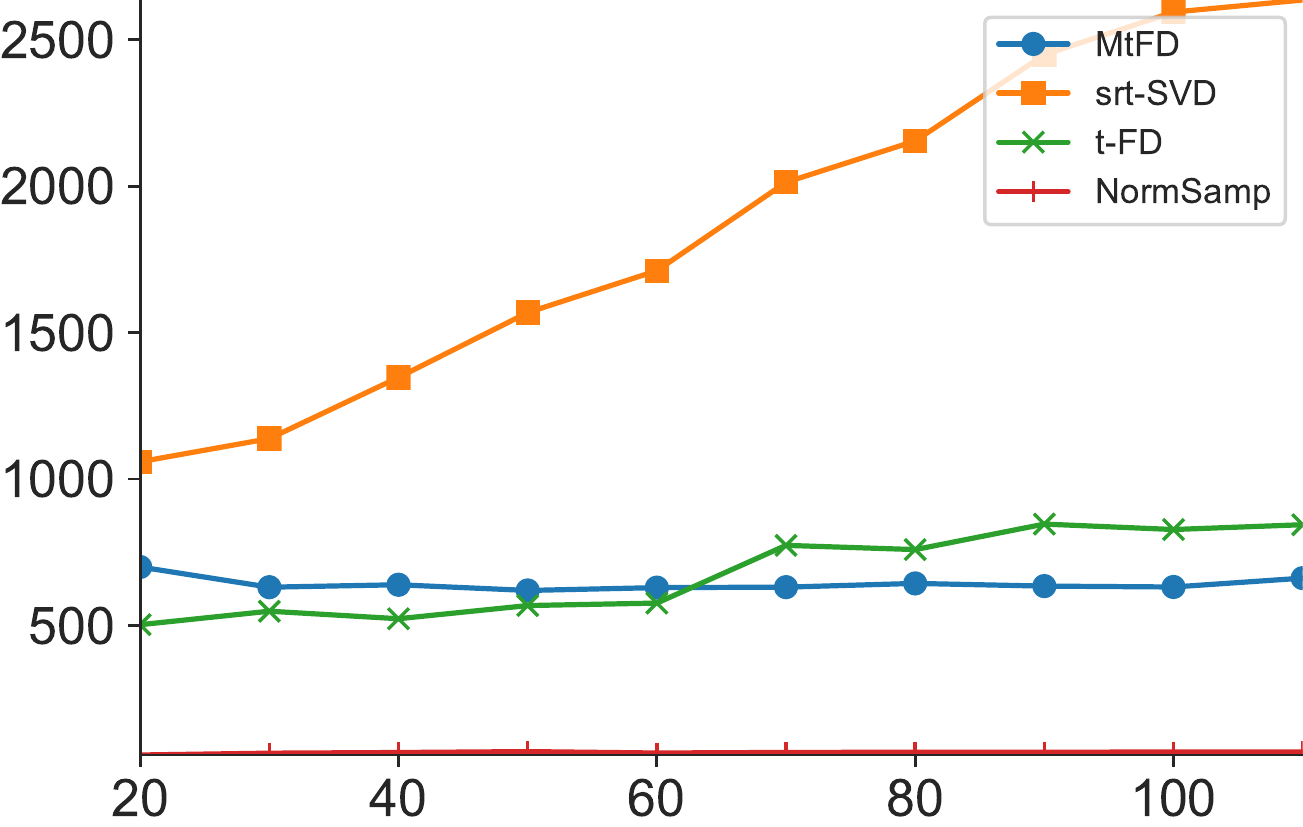}}\\
			\hline
			\rotatebox{90}{\begin{tiny}$\mathsf{The\  value \ of \ c}$\end{tiny}} & 
			\raisebox{-2mm}{\includegraphics[scale = 0.21]{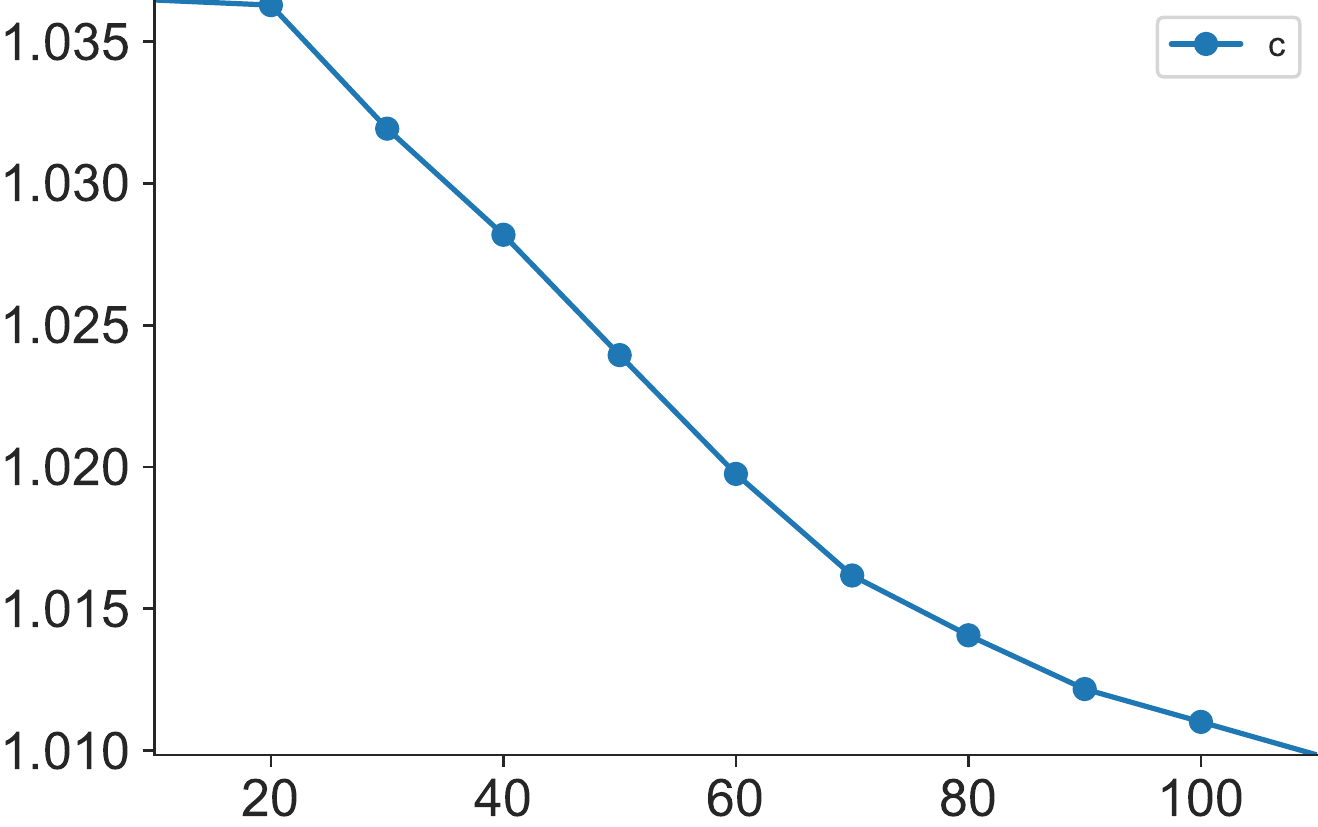}} &  \raisebox{-2mm}{\includegraphics[scale = 0.21]{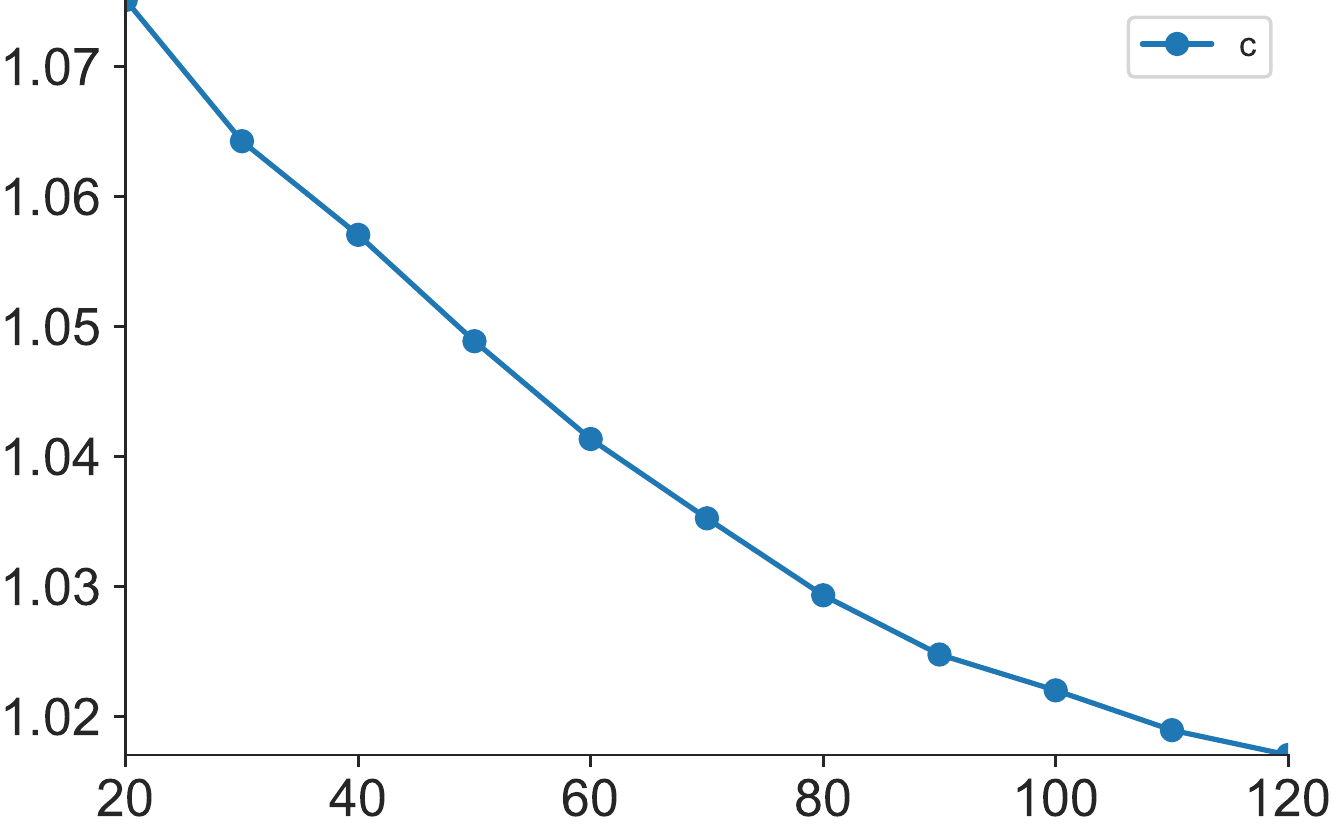}} &  \raisebox{-2mm}{\includegraphics[scale = 0.21]{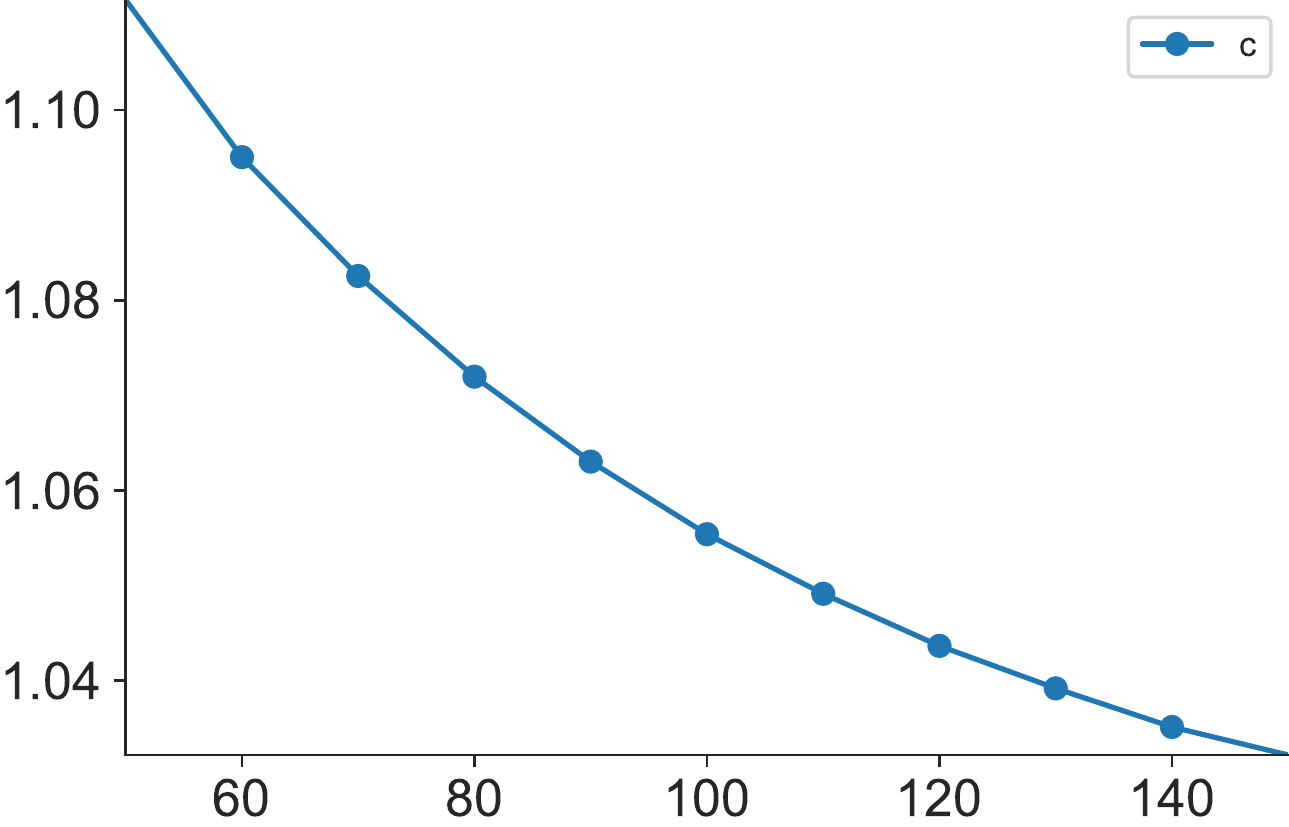}} &  \raisebox{-2mm}{\includegraphics[scale = 0.21]{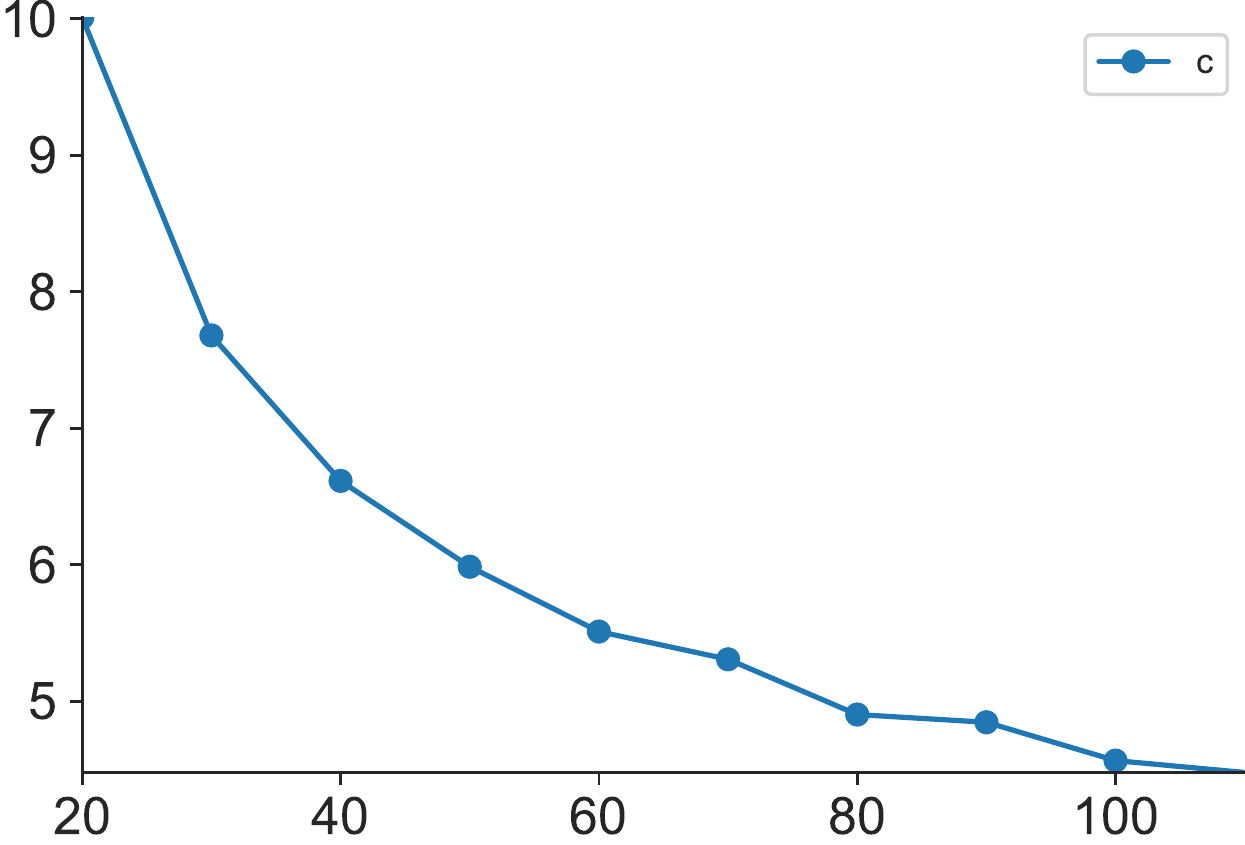}} &  \raisebox{-2mm}{\includegraphics[scale = 0.21]{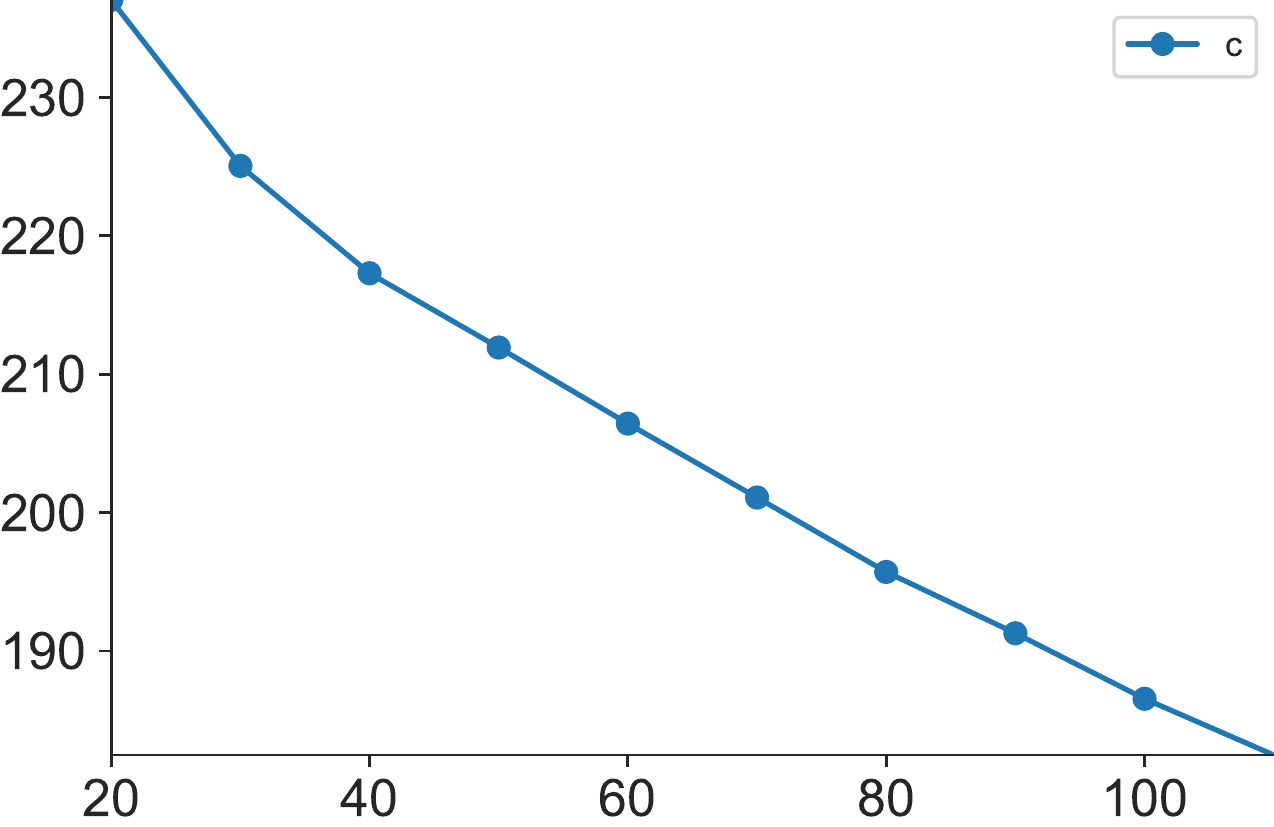}}\\
			\hline
			& $\mathsf{k = 10}$ & $\mathsf{k = 20}$ & $\mathsf{k = 50}$ & $\mathsf{Tabby Cat}$ & $\mathsf{Park Bench}$\\
			\hline
		\end{tabular}
	\end{adjustbox}
	\vspace{10pt}
	\caption{Experimental results for fourth-order synthetic and real datasets} \label{tab:stimuli:fou}
\end{table*}

The performance of t-FD is consistently much better than other  algorithms in terms of both error measures, especially for the covariance error. For the MtFD, even though the sketch tensor could capture a good subspace to achieve low projection error, it fails to approximate well for the covariance. For the third-order tensor, the covariance error only decreases subtly as the sketch size increases. And it  maintains nearly the same for the fourth-order tensor. We attribute this to the intrinsic structure may be destroyed  in the update process. However, our algorithm  shows an obvious decrease when sketch size grows.  Moreover, we notice in the higher rank setting, our method is more competitive.  For the other two randomized algorithms, i.e., srt-SVD and NormSamp, there is small difference between their performance.

For the running time, all these algorithms show a linearly growth in different settings. Clearly, the srt-SVD method is the slowest. For MtFD and t-FD, the t-FD is only slightly slower than MtFD, however, from the performance analysis, we conclude the improvement in the precision is deserved.  Even though NormSamp could be implemented in seconds, the performance is much worse than ours and has no theoretical guarantee as ours.\\
\subsection{Real data examples}
We now test our algorithm using four real-world streaming data. For the highway traffic data \cite{chen2020low}, it records the traffic speed time series over weeks from 11160 sensors and thus can be treated as a dense tensor. Here we choose four weeks data and formulate it as a tensor $\mathcal{B}  \in \mathbb{R}^{11160 \times 288 \times 28}$. Since the sensor data has strong similarity in our observation, a lower rank 10 is used for this comparison. For the Uber data \cite{smith2017frostt}, it can be represented as an extremely sparse tensor $\mathcal{A} \in \mathbb{R}^{183 \times 24 \times 1140 \times 1717}$ with only $0.038\%$ non-zeros. The value at $(i,j,k,l)$ represents the number of pick-ups on day $i$, hours $j$, at latitude $k$ and longitude $l$. We aggregate the time dimension and subsample the location dimension to a tensor $\mathcal{A} \in \mathbb{R}^{4392 \times 500 \times 500}$. Due to the highly sparsity, we set up the rank to $50$.  For the fourth-order tensor datasets, we consider two color video datasets studied in \cite{malik2021sampling}, that is, 
	Tabby Cat that can be represented as a tensor of $1280\times720\times 3 \times286$ and Park Bench that can be represented as a tensor of 1920  $\times$ 1280$\times$ 3 $\times$  364. The rank is set to $20$ for these two datasets. 

It can be easily seen from the last two columns of Tables   \ref{tab:stimuli} and  \ref{tab:stimuli:fou} that, our  algorithm is more accurate and stable, especially for the larger and sparser Uber dataset. We also notice that, even the results are averaged over ten runs, srt-SVD and NormSamp could not  achieve stable results in some cases.



\subsection{Impact of parameter $c$} In our theoretical analysis, the parameter $c$ in Theorems \ref{main1} and \ref{main}  is an uncertainty. It is determined by the structure of the tensor. In our synthetic data as well as the real data, the parameter $c$ is much smaller than $\rho$, in which case our algorithm has a superior performance. Here we construct two extreme  cases to verify the effect of parameter $c$, in which the generated tensor has the form $\mathcal{A} = \mathcal{B} + \alpha \mathcal{U}$. The each frontal slice of $\mathcal{B}$ is the same, sampled from $\mathcal{N}(0,1)$, and $\mathcal{U}$ is a random tensor uniformly distributed on $[0,1]$. The parameter $\alpha$ is set up to control the difference among all the slices. When $\alpha$ becomes smaller, the parameter $c$ would be closer to $\rho$. Thus we consider $\alpha$  varies in $\{0.01, 100\}$ and the test tensor $\mathcal{A}$ with size of  $\mathbb{R}^{3000 \times 300 \times 20}$.

Figs. 2 and 3 show the comparison results between MtFD and t-FD in these two extreme cases. It can be seen that larger $c$ really deteriorates the performance of t-FD, however, we still obtain a comparable performance with MtFD. And when $c$ is very small, our estimation becomes more accurate. Both these findings further demonstrate the superiority of our tensor version of FD over the direct matricization technique for tackling the tensor data.

In the Tables \ref{tab:stimuli} and  \ref{tab:stimuli:fou}, we also draw the parameter $c$ in each setting. For the synthetic datasets, the behaviors of parameter $c$ is consistently decrease as the sketch size becomes larger, and the value of $c$ is much closer to 1. However, for the higher order ParkBench dataset, even though the value of $c$ is  larger than those of other methods, it is still much smaller than $\rho$. Moreover, it decreases quickly with larger sketch sizes. Additionally, our algorithm still outperforms other compared methods, which is consistent with the previous experimental results.

\begin{figure}[t]
	\label{fig:com}
	\centering
	\subfigure{
		\includegraphics[scale=0.25]{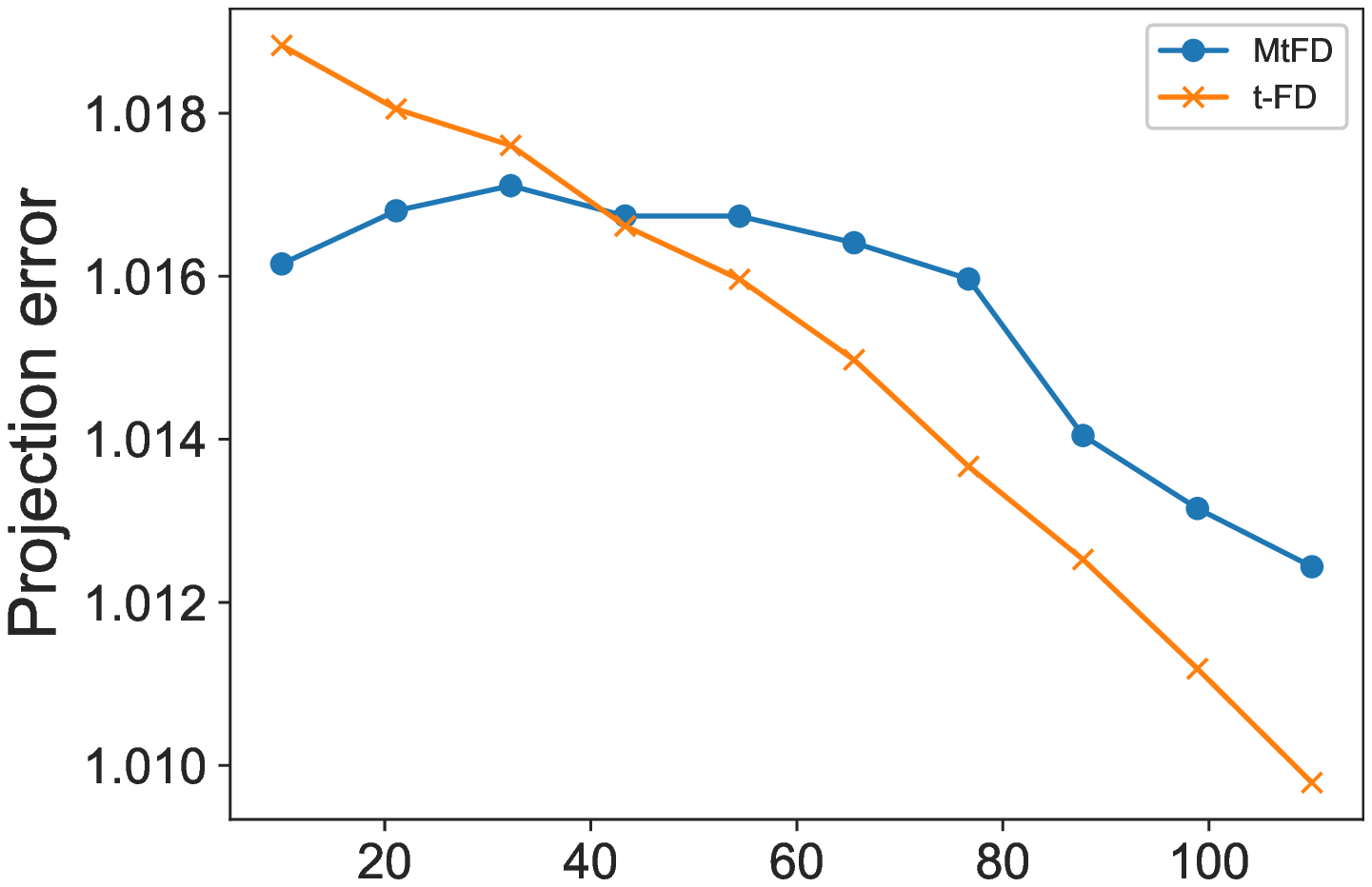}
	}
	\quad
	\subfigure{
		\includegraphics[scale=0.25]{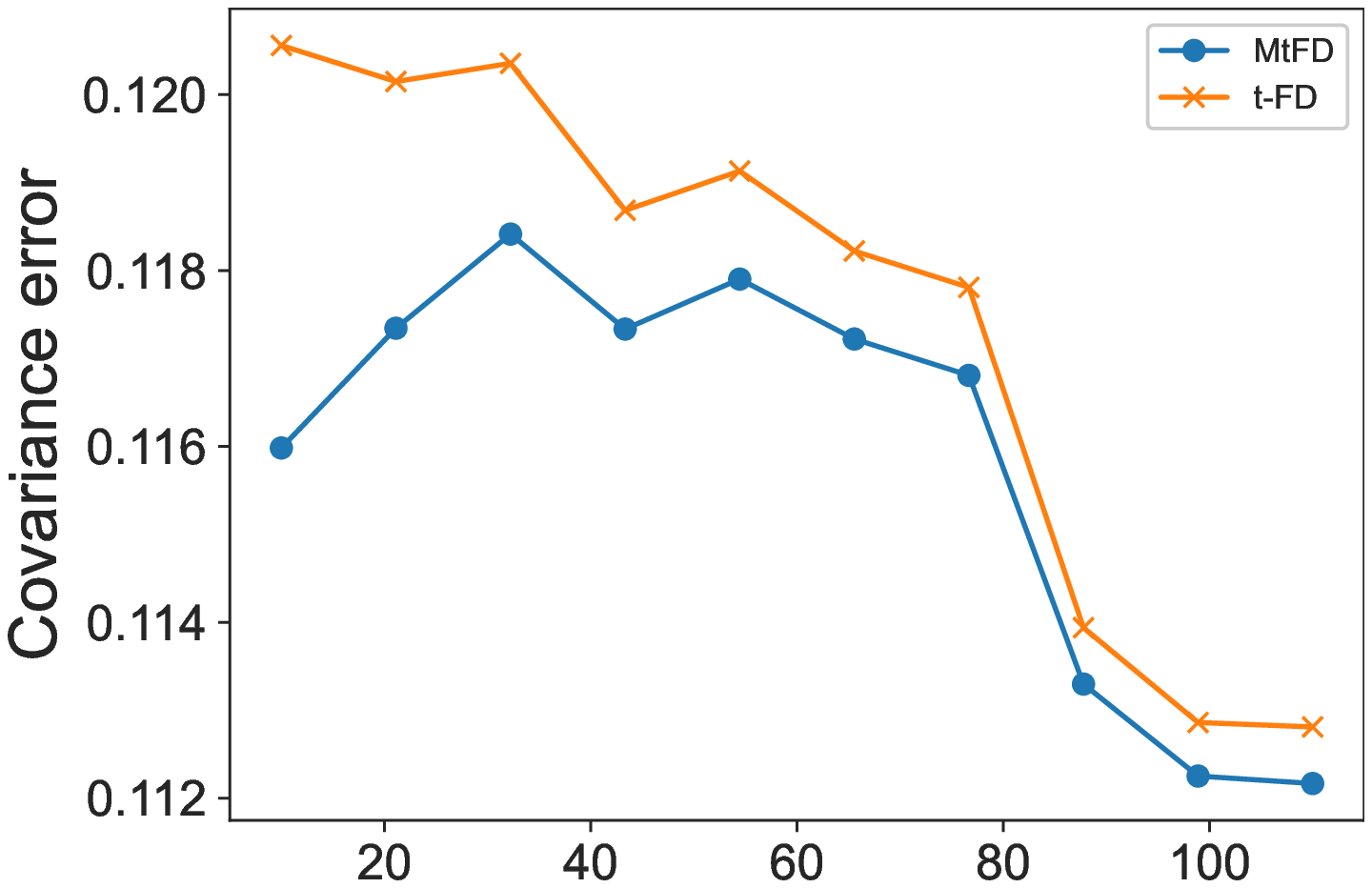}
	} \caption{$\alpha = 0.01, c \approx 19.75$}
	
	\subfigure{
		\includegraphics[scale=0.25]{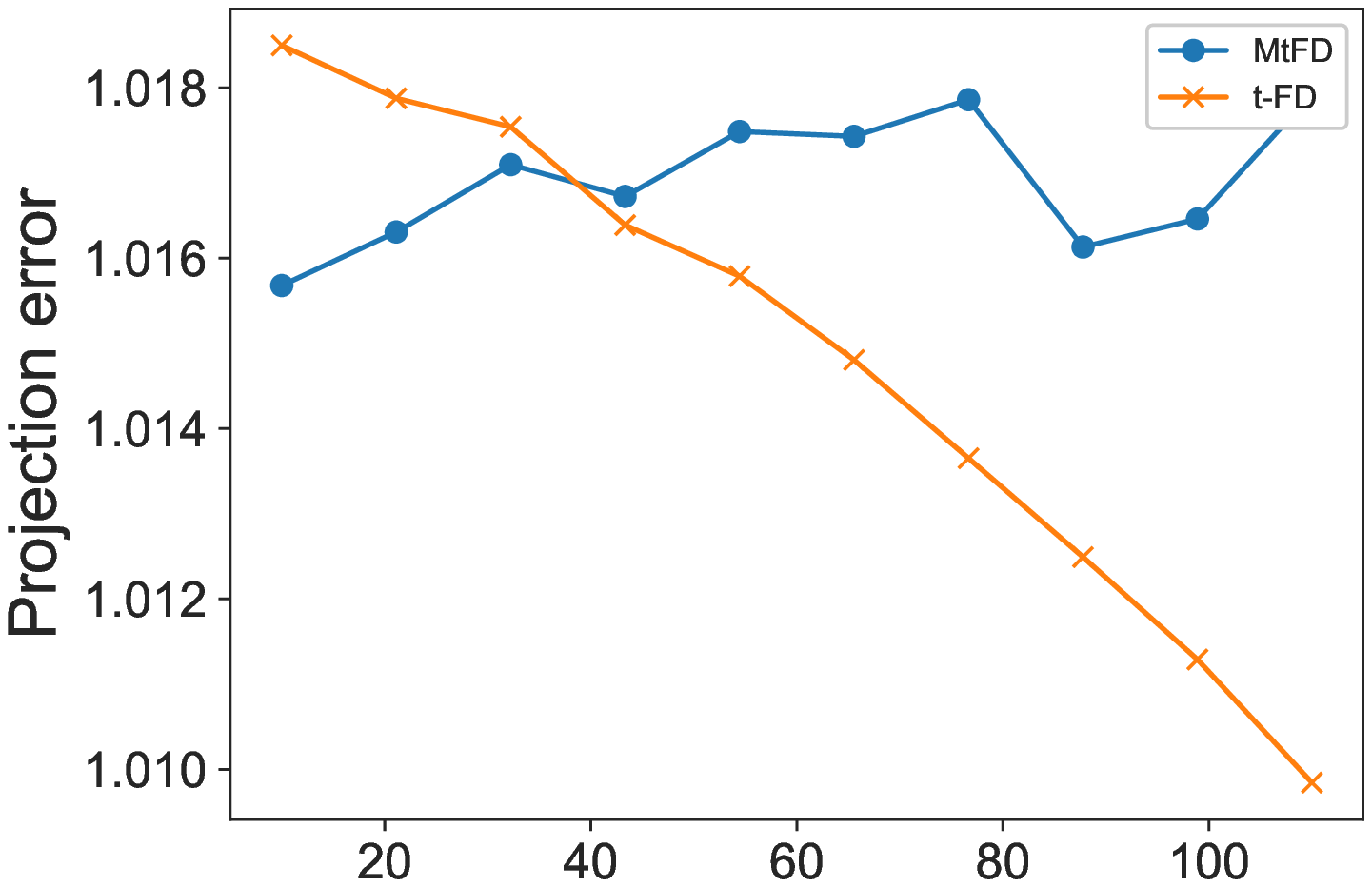}
	}	
	\quad
	\subfigure{
		\includegraphics[scale=0.25]{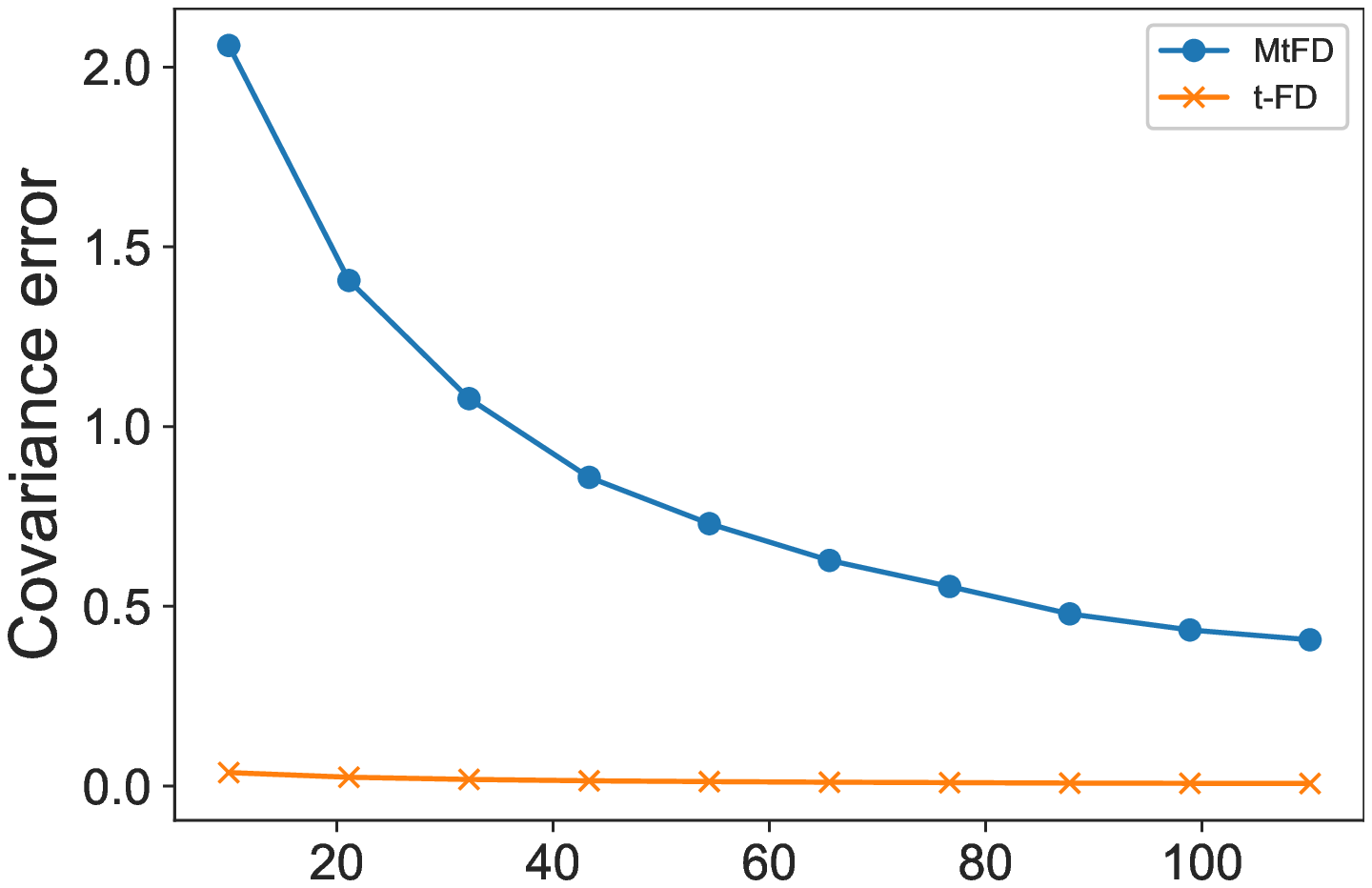}	
	}
	\caption{$\alpha = 100, c \approx 1.01$}
\end{figure}
\vspace{-5mm}
\subsection{Application to Video Scene Classification}
In this subsection, we present how to use our algorithm to classify real video scenes. The video \cite{malik2018low} is documented by a fixed camera, and a person occurs in the camera twice. It consists of 2200 frames, each of size 1080 by 1980. Our aim is to identify the frames in which a person occurs. We sequentially load the whole tensor through the second dimension, by each time loading a slice $\mathcal{A}_j \in \mathbb{R}^{1080 \times 2200}$. We then choose the sketch size $\ell$ varied in $\{10,20,50\}$. As such, we obtain a sketch tensor $\mathcal{B} \in \mathbb{R}^{\ell \times 1080 \times 2200}$. Thus,  by applying the t-SVD, we could obtain the  dominant space $\mathcal{U} \in \mathbb{R}^{\ell \times \ell \times 2200}$, and further get  the mean matrix $\boldsymbol{U} \in \mathbb{R}^{\ell \times 2200}$ along the second dimension. The $i$-th column of $\boldsymbol{U}$ represents the feature vector of the $i$-th frame. To identify the frames, we apply $K$-means clustering algorithm to those feature vectors corresponding to all 2200 frames.

In this real-world application, we could identify most of the frames containing a person by using the proposed t-FD algorithm. Some typical results are demonstrated in Fig. \ref{clf}. In this figure,  when no person appears, the frames are classified into green class; when a person is captured by the camera, the  frames are marked as orange class. Compared with the previous works \cite{malik2018low, sun2019low}, the classification results obtained by t-FD are similar even though we select smaller clusters than such two works. Additionally, among different sketch sizes, the orange frames are all classified correctly for the smaller sketch size $\ell \in \{10, 20\}$, while more frames including a person are classified for the sketch size $\ell = 50$, but also some frames are misclassified. 

%

\begin{figure*}[htp!]
	
	\centering
	\includegraphics[scale=0.56]{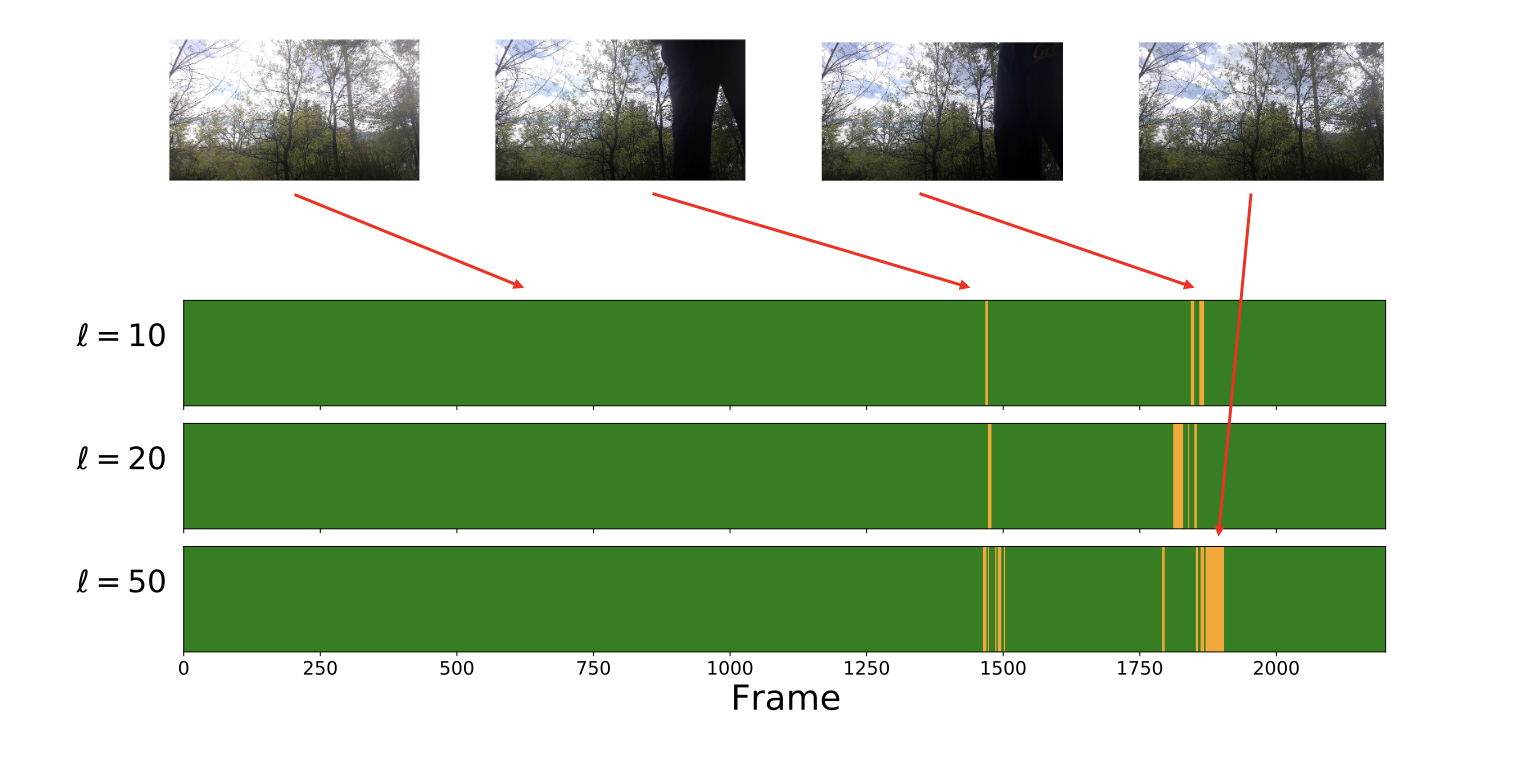}
	\vspace{-10pt}
	\caption{Classification results for different sketch sizes}
	\label{clf}
\end{figure*}

\section{Proofs}
In this section, we shall prove that our proposed algorithm t-FD is within $1+\varepsilon$ of best tubal-rank-$k$ approximation. Meanwhile, we derive the error bounds of MtFD for comparison. To this end, we first need to prove the following auxiliary properties and lemmas. 

\subsection{Some useful lemmas and properties}

Since Lemma \ref{lemma2} plays an important role in our theoretical analysis, we shall first present the proof of the lemma. 
\begin{proof}[Proof of Lemma \ref{lemma2}]
	Set $n=\min \left(n_{1}, n_{2}\right)$, then due to the property that $\left\|\mathcal{A}\right\|_{F}^{2}=\left\|\mathcal{S}\right\|_{F}^{2}=\frac{1}{\rho}\left\|\boldsymbol{\bar{S}}\right\|_{F}^{2}$, we can get that
	$$
	\begin{aligned}
		&\left\|\mathcal{A}-\mathcal{A}_{k}\right\|_{F}^{2} \notag\\
		=&\|\mathcal{S}(k+1: n, k+1: n,:,\ldots,:)\|_{F}^{2} \\
		=&\rho\left\|\boldsymbol{\bar{S}}^{(1)}(k+1: n, k+1: n)\right\|_{F}^{2}+\cdots\notag\\
		&+\rho\left\|\boldsymbol{\bar{S}}^{(\rho)}(k+1: n, k+1: n)\right\|_{F}^{2}.
	\end{aligned}
	$$
	Now let $\mathcal{B} \in \mathbb{A}$, so that $\mathcal{B}=\mathcal{X} * \mathcal{Y}^{T}$. Then
	$$
	\begin{aligned}
		&\|\mathcal{A}-\mathcal{B}\|_{F}^{2}\notag\\ 
		=&\rho\left\|\boldsymbol{\bar{A}}^{(1)}-\boldsymbol{\bar{X}}^{(1)} \boldsymbol{\bar{Y}}^{(1)T}\right\|_{F}^{2}+\cdots+\rho\left\|\boldsymbol{\bar{A}}^{(\rho)}-\boldsymbol{\bar{X}}^{(\rho)} \boldsymbol{\bar{Y}}^{(\rho)T}\right\|_{F}^{2} \\
		\geqslant& \rho\left\|\boldsymbol{\bar{S}}^{(1)}(k+1: n, k+1: n)\right\|_{F}^{2}+\cdots\notag\\
		&+\rho\left\|\boldsymbol{\bar{S}}^{(\rho)}(k+1: n, k+1: n)\right\|_{F}^{2} .
	\end{aligned}
	$$
	This finishes the proof of Lemma \ref{lemma2}.
\end{proof}

The block circulant operation on tensors acts as a bridge for seeking the tensor norm relationship between the original domain and the Fourier domain. At first, we briefly review the relevant properties about it.
\begin{lemma}\cite{lund2020tensor}\label{lund}
	Given tensors $\mathcal{A} \in \mathbb{C}^{n \times n \times p}$ and $\mathcal{B} \in \mathbb{C}^{n \times s \times p} .$ Then\\
	(1) $\mathtt{bcirc}(\mathcal{A} * \mathcal{B})=\mathtt{bcirc}(\mathcal{A}) \mathtt{bcirc}(\mathcal{B})$;\\
	(2) $(\mathcal{A} * \mathcal{B})^{\top}=\mathcal{B}^{\top} * \mathcal{A}^{\top}$;\\
	(3) $\mathtt{bcirc}\left(\mathcal{A}^{\top}\right)=(\mathtt{bcirc}(\mathcal{A}))^{\top}$.
\end{lemma}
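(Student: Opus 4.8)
\textbf{Proof proposal for Lemma~\ref{lund} (properties of $\mathtt{bcirc}$).}

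The plan is to verify the three identities by direct computation on the block structure, working entirely in the original (non-Fourier) domain. First I would record the basic observation that $\mathtt{bcirc}(\mathcal{A})$ is, by definition, the $np \times np$ block matrix whose $(i,j)$ block equals $\boldsymbol{A}^{(1+((i-j)\bmod p))}$; equivalently, it is generated by cyclic shifts of the block column $\mathtt{unfold}(\mathcal{A})$. Property (1), $\mathtt{bcirc}(\mathcal{A}*\mathcal{B}) = \mathtt{bcirc}(\mathcal{A})\,\mathtt{bcirc}(\mathcal{B})$, is the crucial one: I would start from the definition $\mathcal{A}*\mathcal{B} = \mathtt{fold}(\mathtt{bcirc}(\mathcal{A})\,\mathtt{unfold}(\mathcal{B}))$, so that $\mathtt{unfold}(\mathcal{A}*\mathcal{B}) = \mathtt{bcirc}(\mathcal{A})\,\mathtt{unfold}(\mathcal{B})$ gives the \emph{first block column} of the desired product. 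Then I would argue that the remaining block columns of $\mathtt{bcirc}(\mathcal{A}*\mathcal{B})$ are cyclic shifts of the first, and that applying the same cyclic shift to $\mathtt{unfold}(\mathcal{B})$ turns it into the corresponding block column of $\mathtt{bcirc}(\mathcal{B})$ — the point being that $\mathtt{bcirc}(\mathcal{A})$ commutes with the block-cyclic-shift permutation matrix (this is exactly the defining property of block-circulant matrices). Hence $\mathtt{bcirc}(\mathcal{A})$ applied to each block column of $\mathtt{bcirc}(\mathcal{B})$ reproduces the corresponding block column of $\mathtt{bcirc}(\mathcal{A}*\mathcal{B})$, which is (1).

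Next, property (2), $(\mathcal{A}*\mathcal{B})^{\top} = \mathcal{B}^{\top}*\mathcal{A}^{\top}$, I would obtain by combining (1) with (3): transpose both sides of (1), use that the ordinary matrix transpose reverses products, and then recognize the pieces via (3) and the injectivity of $\mathtt{bcirc}$ (a tensor is determined by its block-circulant matrix, e.g.\ by reading off the first block column). So the real content beyond (1) is property (3), $\mathtt{bcirc}(\mathcal{A}^{\top}) = (\mathtt{bcirc}(\mathcal{A}))^{\top}$, which I would check entrywise at the block level: the $(i,j)$ block of $(\mathtt{bcirc}(\mathcal{A}))^{\top}$ is $\big(\boldsymbol{A}^{(1+((j-i)\bmod p))}\big)^{\top}$, while the tensor-transpose definition says $\mathcal{A}^{\top}$ has frontal slices $(\boldsymbol{A}^{(1)})^{\top}, (\boldsymbol{A}^{(p)})^{\top}, (\boldsymbol{A}^{(p-1)})^{\top}, \dots, (\boldsymbol{A}^{(2)})^{\top}$, i.e.\ $(\mathcal{A}^{\top})^{(m)} = (\boldsymbol{A}^{(1+((1-m)\bmod p))})^{\top}$; plugging this into the definition of $\mathtt{bcirc}(\mathcal{A}^{\top})$ and matching indices modulo $p$ gives equality of every block, hence (3).

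The main obstacle I anticipate is purely bookkeeping: getting the cyclic index arithmetic modulo $p$ consistent across the three different conventions (the block-circulant shift pattern in Definition~\ref{deftproduct-p}, the reversal-and-transpose pattern in the tensor-transpose definition, and the product-of-block-matrices indexing). There is no deep idea here — the result is stated as a known lemma from \cite{lund2020tensor} — so I would keep the proof short, most likely just citing \cite{lund2020tensor} for (1) and (3) and then deriving (2) in one line as above, or alternatively giving the one-paragraph block-index verification sketched here. Since this is an auxiliary lemma quoted verbatim from the literature, I expect the paper simply to reference it rather than reprove it in full.
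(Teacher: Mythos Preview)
Your proposal is correct, and your closing expectation is exactly what the paper does: Lemma~\ref{lund} is stated with a citation to \cite{lund2020tensor} and no proof is given in the paper itself. Your block-index verification of (3), the cyclic-shift/commutation argument for (1), and the one-line derivation of (2) from (1), (3), and injectivity of $\mathtt{bcirc}$ are all sound, so nothing further is needed here.
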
 
For proving our main theorems, we next need to prove the following three auxiliary properties of Algorithm \ref{tensor-FD}. In the subsequent analysis, the t-SVD of $\mathbf{\mathcal{A}}$ and $\mathbf{\mathcal{B}}$ are expressed as $\mathbf{\mathcal{A}}=\mathbf{\mathcal{Z}}*\mathbf{\mathcal{W}}*\mathbf{\mathcal{Y}}^{T}$ and $\mathbf{\mathcal{B}}=\mathbf{\mathcal{U}}*\mathbf{\mathcal{S}}*\mathbf{\mathcal{V}}^{T}$, respectively. The corresponding rank-$k$ approximation are  $\mathbf{\mathcal{A}}_{k}=\mathbf{\mathcal{Z}}_{k}*\mathbf{\mathcal{W}}_{k}*\mathbf{\mathcal{Y}}^{T}_{k}$ and $\mathbf{\mathcal{B}}_{k}=\mathbf{\mathcal{U}}_{k}*\mathbf{\mathcal{S}}_{k}*\mathbf{\mathcal{V}}_{k}^{T}$. Moreover, let $\vec{\boldsymbol{y}}_{i} \in \mathbb{R}^{n_{2} \times 1 \times n_{3}}$ be the $i$-th lateral slice of $\mathbf{\mathcal{Y}}_{k}$, and $\vec{\boldsymbol{v}}_{i} \in \mathbb{R}^{n_{2} \times 1 \times n_{3}}$ be the $i$-th lateral slice of $\mathbf{\mathcal{V}}_{k}$. Let  $ \Delta = \sum_{j=1}^{n_1}\max\limits_{i} \delta_{j}^{(i)}$ be the sum of maximum information losses in the truncated procedure. 
\begin{property}\label{pro1}
	For any tensor column $\vec{\boldsymbol{x}} \in \mathbb{R}^{n_{2} \times 1 \times n_{3}}$, if $\mathbf{\mathcal{B}}$ is the output result by applying Algorithm \ref{tensor-FD} to the input $\mathbf{\mathcal{A}}$, then $\left\|\mathbf{\mathcal{A}} * \vec{\boldsymbol{x}}\right\|_{2^{*}}	^{2}-\left\|\mathbf{\mathcal{B}} * \vec{\boldsymbol{x}}\right\|_{2^{*}}	^{2} \ge0$.
\end{property} 
\begin{proof}
	Let $\boldsymbol{x}\in \mathbb{R}^{n_{2}n_{3} \times 1 }$ be the vectorized column vector
	of $\vec{\boldsymbol{x}} $. According to the definition of t-product (Def. 1), we obtain 
	$$
	\left\|\mathbf{\mathcal{A}} * \vec{\boldsymbol{x}}\right\|_{2^{*}}	^{2}-\left\|\mathbf{\mathcal{B}} * \vec{\boldsymbol{x}}\right\|_{2^{*}}	^{2}  
	=\left\|\mathtt{bcirc}(\mathbf{\mathcal{A}})\boldsymbol{x}\right\|^{2}-\left\|\mathtt{bcirc}(\mathbf{\mathcal{B}})\boldsymbol{x}\right\|^{2}.
	$$
	
	\noindent Furthermore, from algorithm t-FD, it is clear to observe that $\boldsymbol{C}_{j}^{(i)}$ contains two parts, one of which is the $\boldsymbol{B}_{j-1}^{(i)}$ produced by the last iteration, and the other is the newly inserted row $\boldsymbol{A}_{j}^{(i)}$. Then according to the definition of the block circulant matrix, we can obtain that $$\left\|\mathtt{bcirc}(\mathbf{\mathcal{A}})\boldsymbol{x}\right\|^{2}+\sum_{j=1}^{n_1}\left\|\mathtt{bcirc}(\mathbf{\mathcal{B}}_{j-1})\boldsymbol{x}\right\|^{2}=\sum_{j=1}^{n_1}\left\|\mathtt{bcirc}(\mathbf{\mathcal{C}}_{j})\boldsymbol{x}\right\|^{2}.$$ 
	
	\noindent Therefore, 
	\begin{align}
		&\left\|\mathbf{\mathcal{A}} * \vec{\boldsymbol{x}}\right\|_{2^{*}}	^{2}-\left\|\mathbf{\mathcal{B}} * \vec{\boldsymbol{x}}\right\|_{2^{*}}	^{2}  \notag \\
		=&\left\|\mathtt{bcirc}(\mathbf{\mathcal{A}})\boldsymbol{x}\right\|^{2}-\left\|\mathtt{bcirc}(\mathbf{\mathcal{B}})\boldsymbol{x}\right\|^{2} \notag\\
		=&\left\|\mathtt{bcirc}(\mathbf{\mathcal{A}})\boldsymbol{x}\right\|^{2}+\sum_{j=1}^{n_1}\left(\left\|\mathtt{bcirc}(\mathbf{\mathcal{B}}_{j-1})\boldsymbol{x}\right\|^{2}-\left\|\mathtt{bcirc}(\mathbf{\mathcal{B}}_{j})\boldsymbol{x}\right\|^{2} \right) \notag\\
		=&\sum_{j=1}^{n_1}\left(\left\|\mathtt{bcirc}(\mathbf{\mathcal{C}}_{j})\boldsymbol{x}\right\|^{2}-\left\|\mathtt{bcirc}(\mathbf{\mathcal{B}}_{j})\boldsymbol{x}\right\|^{2} \right) \notag\\
		\ge&0. \notag
	\end{align}
	This finishes the proof of Property \ref{pro1}.
\end{proof}

\begin{property}\label{pro2}
	For any tensor column $\vec{\boldsymbol{x}} \in \mathbb{R}^{n_{2} \times 1 \times n_{3}}$ satisfied $\|\vec{\boldsymbol{x}}\|_{2^{*}} = 1$, if $\mathbf{\mathcal{B}}$ is the output result by applying Algorithm \ref{tensor-FD} to the input $\mathbf{\mathcal{A}}$, then $\left\|\mathbf{\mathcal{A}} * \vec{\boldsymbol{x}}\right\|_{2^{*}}	^{2}-\left\|\mathbf{\mathcal{B}} * \vec{\boldsymbol{x}}\right\|_{2^{*}}	^{2} \le  \Delta $ .
\end{property} 

\begin{proof} 
	For $\vec{\boldsymbol{x}} \in \mathbb{R}^{n_{2} \times 1 \times n_{3}}$, $\boldsymbol{x}\in \mathbb{R}^{n_{2}n_{3} \times 1 }$ denotes the vectorized column vector
	of $\vec{\boldsymbol{x}} $. If we let $\boldsymbol{x}$ be a unit vector, as explained in the proof of Property \ref{pro1} above, there holds
	$$
	\left\|\mathbf{\mathcal{A}} * \vec{\boldsymbol{x}}\right\|_{2^{*}}	^{2}-\left\|\mathbf{\mathcal{B}} * \vec{\boldsymbol{x}}\right\|_{2^{*}}	^{2}=\sum_{j=1}^{n_1}\left(\left\|\mathtt{bcirc}(\mathbf{\mathcal{C}}_{j})\boldsymbol{x}\right\|^{2}-\left\|\mathtt{bcirc}(\mathbf{\mathcal{B}}_{j})\boldsymbol{x}\right\|^{2} \right).
	$$
	Since $\boldsymbol{x}$ is the unit vector, we obtain 
	\begin{align}
		&\left\|\mathtt{bcirc}(\mathbf{\mathcal{C}}_{j})\boldsymbol{x}\right\|^{2}-\left\|\mathtt{bcirc}(\mathbf{\mathcal{B}}_{j})\boldsymbol{x}\right\|^{2} \notag\\
		=&\boldsymbol{x}^{T}\left(\mathtt{bcirc}(\mathbf{\mathcal{C}}_{j})^{T}\mathtt{bcirc}(\mathbf{\mathcal{C}}_{j})-\mathtt{bcirc}(\mathbf{\mathcal{B}}_{j})^{T}\mathtt{bcirc}(\mathbf{\mathcal{B}}_{j})\right)\boldsymbol{x} \notag\\
		\le&\left\|\mathtt{bcirc}(\mathbf{\mathcal{C}}_{j})^{T}\mathtt{bcirc}(\mathbf{\mathcal{C}}_{j})-\mathtt{bcirc}(\mathbf{\mathcal{B}}_{j})^{T}\mathtt{bcirc}(\mathbf{\mathcal{B}}_{j})\right\| .\notag
	\end{align}
	According to Lemma \ref{lund}, we further obtain
	\begin{align}
	&\left\|\mathtt{bcirc}(\mathbf{\mathcal{C}}_{j})^{T}\mathtt{bcirc}(\mathbf{\mathcal{C}}_{j})-\mathtt{bcirc}(\mathbf{\mathcal{B}}_{j})^{T}\mathtt{bcirc}(\mathbf{\mathcal{B}}_{j})\right\| \notag\\
	=&\left\|\mathtt{bcirc}\left(\mathbf{\mathcal{C}}_{j}^{T}*\mathbf{\mathcal{C}}_{j}-\mathbf{\mathcal{B}}_{j}^{T}*\mathbf{\mathcal{B}}_{j}\right)\right\|, \notag
\end{align}
	then due to $\left(\boldsymbol{F}_{n_{3}} \otimes \boldsymbol{I}_{n_{1}}\right) \cdot \mathtt{bcirc}(\mathbf{\mathcal{A}}) \cdot\left(\boldsymbol{F}_{n_{3}}^{-1} \otimes \boldsymbol{I}_{n_{2}}\right)=\boldsymbol{\bar{A}}$ and the property that $\left(\boldsymbol{F}_{n_{3}} \otimes \boldsymbol{I}_{n_{1}}\right) / \sqrt{n_{3}}$ is orthogonal, we have
	$$
	\left\|\mathtt{bcirc}\left(\mathbf{\mathcal{C}}_{j}^{T}*\mathbf{\mathcal{C}}_{j}-\mathbf{\mathcal{B}}_{j}^{T}*\mathbf{\mathcal{B}}_{j}\right)\right\|
	=\left\| \text{DFT}\left(\boldsymbol{C}_{j}^{T}\boldsymbol{C}_{j}- \boldsymbol{B}_{j}^{T}\boldsymbol{B}_{j} \right) \right\|.
	$$
	Furthermore, since DFT is a linear transform, the following equation holds
	$$
	\left\| \text{DFT}\left(\boldsymbol{C}_{j}^{T}\boldsymbol{C}_{j}- \boldsymbol{B}_{j}^{T}\boldsymbol{B}_{j} \right) \right\|
	=\left\| \boldsymbol{\bar{C}}_{j}^{T}\boldsymbol{\bar{C}}_{j}- \boldsymbol{\bar{B}}_{j}^{T}\boldsymbol{\bar{B}}_{j}  \right\|.
	$$
	Therefore,	
	\begin{align}
	\left\|\mathbf{\mathcal{A}} * \vec{\boldsymbol{x}}\right\|_{2^{*}}	^{2}-\left\|\mathbf{\mathcal{B}} * \vec{\boldsymbol{x}}\right\|_{2^{*}}	^{2} 
	\le&\sum_{j=1}^{n_1}\left\| \boldsymbol{\bar{C}}_{j}^{T}\boldsymbol{\bar{C}}_{j}- \boldsymbol{\bar{B}}_{j}^{T}\boldsymbol{\bar{B}}_{j}  \right\|  \notag\\
	=&\sum_{j=1}^{n_1}\max\limits_{i} \delta_{j}^{(i)}=\Delta. \notag
	\end{align}
	This completes the proof.
\end{proof}

\begin{lemma}
	\label{lemma:re}
	For the tensor column $\vec{\boldsymbol{x}} \in \mathbb{R}^{n_{2} \times 1 \times n_{3}}$, $\boldsymbol{x}\in \mathbb{R}^{n_{2}n_{3} \times 1 }$ denotes the vectorized column vector
	of $\vec{\boldsymbol{x}} $. Let $\boldsymbol{x}$ be the eigenvector of $\operatorname{bcirc}(\mathbf{\mathcal{A}})^{T}\operatorname{bcirc}(\mathbf{\mathcal{A}})-\operatorname{bcirc}(\mathbf{\mathcal{B}})^{T}\operatorname{bcirc}(\mathbf{\mathcal{B}})$ corresponding to its largest eigenvalue, then
	$$
	\left\|\mathbf{\mathcal{A}}^{T}*\mathbf{\mathcal{A}}-\mathbf{\mathcal{B}}^{T}*\mathbf{\mathcal{B}}\right\|  
	=\left\|\mathbf{\mathcal{A}} * \vec{\boldsymbol{x}}\right\|_{2^{*}} ^{2}-\left\|\mathbf{\mathcal{B}} * \vec{\boldsymbol{x}}\right\|_{2^{*}} ^{2} .
	$$
\end{lemma}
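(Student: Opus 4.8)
The plan is to connect the tensor spectral norm $\|\mathcal{A}^T * \mathcal{A} - \mathcal{B}^T * \mathcal{B}\|$ to the quadratic form maximized over vectorized tensor columns, and then to identify the maximizer with the claimed eigenvector. First I would invoke Lemma \ref{lund} parts (1) and (3), which give $\operatorname{bcirc}(\mathcal{A}^T * \mathcal{A}) = \operatorname{bcirc}(\mathcal{A})^T \operatorname{bcirc}(\mathcal{A})$ and likewise for $\mathcal{B}$, so that
$$
\operatorname{bcirc}\!\left(\mathcal{A}^T * \mathcal{A} - \mathcal{B}^T * \mathcal{B}\right) = \operatorname{bcirc}(\mathcal{A})^T \operatorname{bcirc}(\mathcal{A}) - \operatorname{bcirc}(\mathcal{B})^T \operatorname{bcirc}(\mathcal{B}).
$$
Next, using Definition \ref{def_tsn} applied to the tensor $\mathcal{A}^T * \mathcal{A} - \mathcal{B}^T * \mathcal{B}$ — specifically the chain $\|\mathcal{C}\| = \sup_{\|\mathcal{V}\|_F \le 1}\|\operatorname{bcirc}(\mathcal{C})\operatorname{unfold}(\mathcal{V})\|_F = \|\widetilde{C}\|$ — together with the fact that $\operatorname{bcirc}(\mathcal{C})$ is a symmetric matrix here, the tensor spectral norm equals the largest eigenvalue (in absolute value) of $\operatorname{bcirc}(\mathcal{A})^T\operatorname{bcirc}(\mathcal{A}) - \operatorname{bcirc}(\mathcal{B})^T\operatorname{bcirc}(\mathcal{B})$. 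I should check that this matrix is positive semidefinite (which follows from Property \ref{pro1}, since $\boldsymbol{x}^T(\operatorname{bcirc}(\mathcal{A})^T\operatorname{bcirc}(\mathcal{A}) - \operatorname{bcirc}(\mathcal{B})^T\operatorname{bcirc}(\mathcal{B}))\boldsymbol{x} = \|\mathcal{A}*\vec{\boldsymbol{x}}\|_{2^*}^2 - \|\mathcal{B}*\vec{\boldsymbol{x}}\|_{2^*}^2 \ge 0$), so the largest eigenvalue is indeed the spectral norm without absolute value issues.

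Then I would take $\boldsymbol{x}$ to be the unit eigenvector realizing this largest eigenvalue $\lambda_{\max}$, so that
$$
\left\|\mathcal{A}^T * \mathcal{A} - \mathcal{B}^T * \mathcal{B}\right\| = \lambda_{\max} = \boldsymbol{x}^T\!\left(\operatorname{bcirc}(\mathcal{A})^T\operatorname{bcirc}(\mathcal{A}) - \operatorname{bcirc}(\mathcal{B})^T\operatorname{bcirc}(\mathcal{B})\right)\!\boldsymbol{x} = \left\|\operatorname{bcirc}(\mathcal{A})\boldsymbol{x}\right\|^2 - \left\|\operatorname{bcirc}(\mathcal{B})\boldsymbol{x}\right\|^2.
$$
Finally I would rewrite the right-hand side back in tensor notation: by the definition of the t-product and the $\ell_{2^*}$ norm (exactly as in the opening lines of the proofs of Properties \ref{pro1} and \ref{pro2}), $\|\operatorname{bcirc}(\mathcal{A})\boldsymbol{x}\|^2 = \|\mathcal{A} * \vec{\boldsymbol{x}}\|_{2^*}^2$ and similarly for $\mathcal{B}$, which yields the claimed identity.

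The main subtlety — not really an obstacle but the point requiring care — is the normalization. The quadratic form $\boldsymbol{x} \mapsto \|\operatorname{bcirc}(\mathcal{A})\boldsymbol{x}\|^2 - \|\operatorname{bcirc}(\mathcal{B})\boldsymbol{x}\|^2$ lives on vectors $\boldsymbol{x} \in \mathbb{R}^{n_2 n_3}$, whereas the supremum in Definition \ref{def_tsn} is over $\mathcal{V}$ with $\|\mathcal{V}\|_F \le 1$, i.e. over $\widehat{\boldsymbol{V}}$ (the unfolding of $\mathcal{V}$) of unit Frobenius norm; one must confirm that $\|\vec{\boldsymbol{x}}\|_{2^*} = \|\boldsymbol{x}\|_2$ so that "unit eigenvector of the bcirc Gram difference" coincides with "unit-norm tensor column", and that maximizing the Rayleigh quotient of a symmetric PSD matrix is attained at its top eigenvector. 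Both are standard, so the proof is essentially a bookkeeping exercise chaining Lemma \ref{lund}, Definition \ref{def_tsn}, and the variational characterization of the largest eigenvalue.
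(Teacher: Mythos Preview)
Your proposal is correct and follows essentially the same route as the paper: reduce $\|\mathcal{A}^T*\mathcal{A}-\mathcal{B}^T*\mathcal{B}\|$ to the matrix spectral norm of $\operatorname{bcirc}(\mathcal{A})^T\operatorname{bcirc}(\mathcal{A})-\operatorname{bcirc}(\mathcal{B})^T\operatorname{bcirc}(\mathcal{B})$, evaluate the Rayleigh quotient at its top eigenvector, and translate back via the t-product. The only cosmetic difference is that the paper passes through the Fourier block-diagonal form $\bar{\boldsymbol{A}}^T\bar{\boldsymbol{A}}-\bar{\boldsymbol{B}}^T\bar{\boldsymbol{B}}$ rather than invoking Lemma~\ref{lund} directly, and it does not explicitly justify dropping the absolute value (your appeal to Property~\ref{pro1} for positive semidefiniteness is a welcome addition that the paper leaves implicit).
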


\begin{proof}
	In what follows, we take $\boldsymbol{x}$ as the eigenvector of $\mathtt{bcirc}(\mathbf{\mathcal{A}})^{T}\mathtt{bcirc}(\mathbf{\mathcal{A}})-\mathtt{bcirc}(\mathbf{\mathcal{B}})^{T}\mathtt{bcirc}(\mathbf{\mathcal{B}})$ corresponding to its largest eigenvalue. According to the definition of tensor spectral norm (Def. \ref{def_tsn}), we have
	\begin{align}
		&\left\|\mathbf{\mathcal{A}}^{T}*\mathbf{\mathcal{A}}-\mathbf{\mathcal{B}}^{T}*\mathbf{\mathcal{B}}\right\|  \notag\\
		=&\left\| \boldsymbol{\bar{A}}^{T}\boldsymbol{\bar{A}}- \boldsymbol{\bar{B}}^{T}\boldsymbol{\bar{B}}\right\| \notag \\
		=&\left\| \mathtt{bcirc}(\mathbf{\mathcal{A}})^{T}\mathtt{bcirc}(\mathbf{\mathcal{A}})-\mathtt{bcirc}(\mathbf{\mathcal{B}})^{T}\mathtt{bcirc}(\mathbf{\mathcal{B}}) \right\|  \notag \\
		=&\left\|\mathtt{bcirc}(\mathbf{\mathcal{A}})\boldsymbol{x}\right\|^{2}-\left\|\mathtt{bcirc}(\mathbf{\mathcal{B}})\boldsymbol{x}\right\|^{2} \notag\\
		=&\left\|\mathbf{\mathcal{A}} * \vec{\boldsymbol{x}}\right\|_{2^{*}}	^{2}-\left\|\mathbf{\mathcal{B}} * \vec{\boldsymbol{x}}\right\|_{2^{*}}	^{2} \notag.
	\end{align} 
	This concludes the proof of Lemma \ref{lemma:re}.
\end{proof}

The first two properties bound the projected distance for a tensor  column $\vec{\boldsymbol{x}} $ from $\mathcal{A}$ and $\mathcal{B}$, which indicate the tensor sketch $\mathcal{B}$ really captures the principle subspace of $\mathcal{A}$.  And Lemma \ref{lemma:re} demonstrates the importance to build an upper bound for $\Delta$.

\begin{property}\label{pro3}
	If $\mathbf{\mathcal{B}}$ is the output result by applying Algorithm \ref{tensor-FD} to the input $\mathbf{\mathcal{A}}$ with prescribed sketch size $\ell$, then for any $\ell> ck$, we have $\Delta \le \frac{1}{\frac{\ell}{c}-k}\left\|\mathbf{\mathcal{A}}-\mathbf{\mathcal{A}}_{k}\right\|_{F}^{2}$, where $c=\frac{n_{3}\sum_{j=1}^{n_1}\max\limits_{i} \delta_{j}^{(i)} }{\sum_{j=1}^{n_1}\sum_{i=1}^{n_{3}} \delta_{j}^{(i)}}$.
\end{property}

\begin{proof}
	Noting that $\mathbf{\mathcal{B}}$ is initialized to a all-zero tensor, we have
	\begin{align}
		\left\|\mathbf{\mathcal{B}}\right\|_{F}^{2}  
		=&\sum_{j=1}^{n_1}\left(\left\|\mathbf{\mathcal{B}}_{j}\right\|_{F}^{2}-\left\|\mathbf{\mathcal{B}}_{j-1}\right\|_{F}^{2} \right) \notag\\
		=&\sum_{j=1}^{n_1}\left[\left(\left\|\mathbf{\mathcal{C}}_{j}\right\|_{F}^{2}-\left\|\mathbf{\mathcal{B}}_{j-1}\right\|_{F}^{2} \right) -\left(\left\|\mathbf{\mathcal{C}}_{j}\right\|_{F}^{2}-\left\|\mathbf{\mathcal{B}}_{j}\right\|_{F}^{2} \right) \right]. \notag
	\end{align}
	Since $\boldsymbol{C}_{j}^{(i)}$ is composed of $\boldsymbol{B}_{j-1}^{(i)}$ and $\boldsymbol{A}_{j}^{(i)}$, the following relationship holds
	$$
	\sum_{j=1}^{n_1}\left(\left\|\mathbf{\mathcal{C}}_{j}\right\|_{F}^{2}-\left\|\mathbf{\mathcal{B}}_{j-1}\right\|_{F}^{2} \right)
	=\left\|\mathbf{\mathcal{A}}\right\|_{F}^{2}.
	$$
	And according to the definition and property of the DFT tensor $\boldsymbol{\bar{C}}_{j}$, we obtain $$\left\|\mathbf{\mathcal{C}}_{j}\right\|_{F}^{2}=\frac{1}{n_{3}}\left\|\mathtt{bcirc}(\mathbf{\mathcal{C}}_{j})\right\|_{F}^{2}=\frac{1}{n_{3}}\left\|\boldsymbol{\bar{C}}_{j}\right\|_{F}^{2},$$
	by utilizing the relation between the Frobenius norm and trace of a matrix, we further obtain
	$$
	\left\|\boldsymbol{\bar{C}}_{j}\right\|_{F}^{2}=\text{tr}\left(\boldsymbol{\bar{C}}_{j}^{T}\boldsymbol{\bar{C}}_{j}\right).
	$$
	Therefore,
	\begin{align}
		\left\|\mathbf{\mathcal{B}}\right\|_{F}^{2}  
		=&\sum_{j=1}^{n_1}\left[\left(\left\|\mathbf{\mathcal{C}}_{j}\right\|_{F}^{2}-\left\|\mathbf{\mathcal{B}}_{j-1}\right\|_{F}^{2} \right) -\left(\left\|\mathbf{\mathcal{C}}_{j}\right\|_{F}^{2}-\left\|\mathbf{\mathcal{B}}_{j}\right\|_{F}^{2} \right) \right] \notag\\
		=&\left\|\mathbf{\mathcal{A}}\right\|_{F}^{2}-\frac{1}{n_{3}}\sum_{j=1}^{n_1}\text{tr}\left(\boldsymbol{\bar{C}}_{j}^{T}\boldsymbol{\bar{C}}_{j}- \boldsymbol{\bar{B}}_{j}^{T}\boldsymbol{\bar{B}}_{j}\right) \notag \\
		\le&\left\|\mathbf{\mathcal{A}}\right\|_{F}^{2}-\frac{\ell}{n_{3}}\sum_{j=1}^{n_1}\sum_{i=1}^{n_{3}} \delta_{j}^{(i)} \notag \\
		=&\left\|\mathbf{\mathcal{A}}\right\|_{F}^{2}-\frac{\ell}{c} \Delta, \notag 
	\end{align}
	where $c=\frac{n_{3}\sum_{j=1}^{n_1}\max\limits_{i} \delta_{j}^{(i)} }{\sum_{j=1}^{n_1}\sum_{i=1}^{n_{3}} \delta_{j}^{(i)}}$. Furthermore, based on the property that $\|\mathbf{\mathcal{A}}\|_{F}^{2}=\|\mathbf{\mathcal{A}}*\mathbf{\mathcal{Y}}\|_{F}^{2}=\sum_{i=1}^{r}\left\|\mathbf{\mathcal{A}} * \vec{\boldsymbol{y}}_{i}\right\|_{2^{*}}	^{2}$, where $r$ is the tensor tubal rank of $\mathbf{\mathcal{A}}$, we have
	\begin{align}
		\frac{\ell}{c} \Delta  & \le\|\mathbf{\mathcal{A}}\|_{F}^{2}-\|\mathbf{\mathcal{B}}\|_{F}^{2} \notag\\
		&=\sum_{i=1}^{k}\left\|\mathbf{\mathcal{A}} * \vec{\boldsymbol{y}}_{i}\right\|_{2^{*}}	^{2}+\sum_{i=k+1}^{r}\left\|\mathbf{\mathcal{A}} * \vec{\boldsymbol{y}}_{i}\right\|_{2^{*}}	^{2}-\|\mathbf{\mathcal{B}}\|_{F}^{2} \notag\\
		&=\sum_{i=1}^{k}\left\|\mathbf{\mathcal{A}} * \vec{\boldsymbol{y}}_{i}\right\|_{2^{*}}	^{2}+\left\|\mathbf{\mathcal{A}}-\mathbf{\mathcal{A}}_{k}\right\|_{F}^{2}-\|\mathbf{\mathcal{B}}\|_{F}^{2} \notag\\
		& \leq\left\|\mathbf{\mathcal{A}}-\mathbf{\mathcal{A}}_{k}\right\|_{F}^{2}+\sum_{i=1}^{k}\left(\left\|\mathbf{\mathcal{A}} * \vec{\boldsymbol{y}}_{i}\right\|_{2^{*}}	^{2}-\left\|\mathbf{\mathcal{B}} * \vec{\boldsymbol{y}}_{i}\right\|_{2^{*}}	^{2}\right) \notag\\
		& \leq\left\|\mathbf{\mathcal{A}}-\mathbf{\mathcal{A}}_{k}\right\|_{F}^{2}+k \Delta.  \notag
	\end{align}
	Then, we can conclude that $\Delta \le \frac{1}{\frac{\ell}{c}-k}\left\|\mathbf{\mathcal{A}}-\mathbf{\mathcal{A}}_{k}\right\|_{F}^{2}$.
\end{proof}

\subsection{Error bounds of the proposed t-FD}

\begin{proof}[Proof of Theorem \ref{main1}]
	By Lemma \ref{lemma:re}, when the vectorized column vector
	of $\vec{\boldsymbol{x}} $ takes the eigenvector of $\mathtt{bcirc}(\mathbf{\mathcal{A}})^{T}\mathtt{bcirc}(\mathbf{\mathcal{A}})-\mathtt{bcirc}(\mathbf{\mathcal{B}})^{T}\mathtt{bcirc}(\mathbf{\mathcal{B}})$ corresponding to its largest eigenvalue, the tensor covariance error is equivalent to the following formulation:
	\begin{align}
		\left\|\mathbf{\mathcal{A}} * \vec{\boldsymbol{x}}\right\|_{2^{*}} ^{2}-\left\|\mathbf{\mathcal{B}} * \vec{\boldsymbol{x}}\right\|_{2^{*}} ^{2}. \label{equivalent}
	\end{align}
	
	So, to complete the proof, we only need to analyze the error bound of (\ref{equivalent}). Note that the aforementioned tensor column $\vec{\boldsymbol{x}}$ satisfies the unit norm constraint, thus combining Properties 2 and 3 can get the desired bound.
\end{proof}

\begin{proof}[Proof of Theorem \ref{main}]
	By using the Pythagorean theorem, $\left\|\mathbf{\mathcal{A}}-\mathbf{\mathcal{A}} * \mathbf{\mathcal{V}}_{k} * \mathbf{\mathcal{V}}_{k}^{T}\right\|_{F}^{2}
	=\left\|\mathbf{\mathcal{A}}\right\|_{F}^{2}-\left\|\mathbf{\mathcal{A}} * \mathbf{\mathcal{V}}_{k}\right\|_{F}^{2}$. Since $\vec{\boldsymbol{v}}_{i} \in \mathbb{R}^{n_{2} \times 1 \times n_{3}}$ is the $i$-th lateral slice of $\mathbf{\mathcal{V}}_{k}$, we rewrite $\left\|\mathbf{\mathcal{A}} * \mathbf{\mathcal{V}}_{k} \right\|_{F}^{2}$ as $\sum_{i=1}^{k}\left\|\mathbf{\mathcal{A}} * \vec{\boldsymbol{v}}_{i}\right\|_{2^{*}}	^{2}$. So, we obtain
	$$
	\left\|\mathbf{\mathcal{A}}-\mathbf{\mathcal{A}} * \mathbf{\mathcal{V}}_{k} * \mathbf{\mathcal{V}}_{k}^{T}\right\|_{F}^{2} 
	=\left\|\mathbf{\mathcal{A}}\right\|_{F}^{2}-\sum_{i=1}^{k}\left\|\mathbf{\mathcal{A}} * \vec{\boldsymbol{v}}_{i}\right\|_{2^{*}}	^{2} .
	$$
	According to Property \ref{pro1}, it is easy to see that
	$$
	\left\|\mathbf{\mathcal{A}} * \vec{\boldsymbol{v}}_{i}\right\|_{2^{*}}	^{2}\ge\left\|\mathbf{\mathcal{B}} * \vec{\boldsymbol{v}}_{i}\right\|_{2^{*}}	^{2}.
	$$
	Therefore, we can further obtain that
	$$
	\left\|\mathbf{\mathcal{A}}-\mathbf{\mathcal{A}} * \mathbf{\mathcal{V}}_{k} * \mathbf{\mathcal{V}}_{k}^{T}\right\|_{F}^{2}\le \left\|\mathbf{\mathcal{A}}\right\|_{F}^{2}-\sum_{i=1}^{k}\left\|\mathbf{\mathcal{B}} * \vec{\boldsymbol{v}}_{i}\right\|_{2^{*}}	^{2} .
	$$
	Noting that $\mathbf{\mathcal{B}}_{k}=\mathbf{\mathcal{U}}_{k}*\mathbf{\mathcal{S}}_{k}*\mathbf{\mathcal{V}}_{k}^{T}$, and $\vec{\boldsymbol{v}}_{i} \in \mathbb{R}^{n_{2} \times 1 \times n_{3}}$ is the $i$-th lateral slice of $\mathbf{\mathcal{V}}_{k}$. Thus, $\sum_{i=1}^{k}\left\|\mathbf{\mathcal{B}} * \vec{\boldsymbol{v}}_{i}\right\|_{2^{*}}	^{2}\ge \sum_{i=1}^{k}\left\|\mathbf{\mathcal{B}} * \vec{\boldsymbol{y}}_{i}\right\|_{2^{*}}	^{2}.$ Therefore,
	$$
	\left\|\mathbf{\mathcal{A}}-\mathbf{\mathcal{A}} * \mathbf{\mathcal{V}}_{k} * \mathbf{\mathcal{V}}_{k}^{T}\right\|_{F}^{2}
	\le\left\|\mathbf{\mathcal{A}}\right\|_{F}^{2}-\sum_{i=1}^{k}\left\|\mathbf{\mathcal{B}} * \vec{\boldsymbol{y}}_{i}\right\|_{2^{*}}	^{2} . 
	$$
	Then, it follows from the conclusion of Property \ref{pro2} that	
	\begin{align}
		\left\|\mathbf{\mathcal{A}}-\mathbf{\mathcal{A}} * \mathbf{\mathcal{V}}_{k} * \mathbf{\mathcal{V}}_{k}^{T}\right\|_{F}^{2} 
		\le&\|\mathbf{\mathcal{A}}\|_{F}^{2}-\sum_{i=1}^{k}\left(\left\|\mathbf{\mathcal{A}} * \vec{\boldsymbol{y}}_{i}\right\|_{2^{*}}	^{2}-\Delta\right)  \notag \\
		=&\|\mathbf{\mathcal{A}}\|_{F}^{2}-\left\|\mathbf{\mathcal{A}}_{k}\right\|_{F}^{2}+k \Delta \notag\\
		\le&\frac{\ell}{\ell-ck}\left\|\mathbf{\mathcal{A}}-\mathbf{\mathcal{A}}_{k}\right\|_{F}^{2},    \label{fsimilar}
	\end{align}	
	where the last inequality can be derived directly from Property \ref{pro3}. This completes the proof of Theorem \ref{main}. And if we set $\ell=c\lceil k+k / \varepsilon\rceil$, then we can get the standard bound that $     \left\|\mathbf{\mathcal{A}}-\mathbf{\mathcal{A}} * \mathbf{\mathcal{V}}_{k} * \mathbf{\mathcal{V}}_{k}^{T}\right\|_{F}^{2} \le (1+\varepsilon)\left\|\mathbf{\mathcal{A}}-\mathbf{\mathcal{A}}_{k}\right\|_{F}^{2}.$
\end{proof}

	\begin{proof}[Proof of Theorem \ref{tce-p-order}]
		The proof is very similar to that of Theorem \ref{main1}. Firstly, according to the relationship of the spectral norm between $\mathbf{\mathcal{A}}$, $\boldsymbol{\tilde{A}}$ and $\boldsymbol{\bar{A}}$, the tensor covariance error can be reformulated as:
		\begin{align}
			\left\|\mathbf{\mathcal{A}}^{T}*\mathbf{\mathcal{A}}-\mathbf{\mathcal{B}}^{T}*\mathbf{\mathcal{B}}\right\|  
			=\left\|\mathbf{\mathcal{A}} * \vec{\boldsymbol{x}}\right\|_{2^{*}} ^{2}-\left\|\mathbf{\mathcal{B}} * \vec{\boldsymbol{x}}\right\|_{2^{*}} ^{2} .\label{combine1}
		\end{align}
		Then the second step is to upper bound $\left\|\mathbf{\mathcal{A}} * \vec{\boldsymbol{x}}\right\|_{2^{*}} ^{2}-\left\|\mathbf{\mathcal{B}} * \vec{\boldsymbol{x}}\right\|_{2^{*}} ^{2}$ by the following formulation:
		\begin{align}
			\left\|\mathbf{\mathcal{A}} * \vec{\boldsymbol{x}}\right\|_{2^{*}}	^{2}-\left\|\mathbf{\mathcal{B}} * \vec{\boldsymbol{x}}\right\|_{2^{*}}	^{2} \le  \Delta \le \frac{1}{\frac{\ell}{c}-k}\left\|\mathbf{\mathcal{A}}-\mathbf{\mathcal{A}}_{k}\right\|_{F}^{2},\label{combine2}
		\end{align}
		where $ \Delta = \sum_{j=1}^{n_1}\max\limits_{i} \delta_{j}^{(i)}$ is the sum of maximum information losses in the truncated procedure, and $c=\frac{\rho\sum_{j=1}^{n_1}\max\limits_{i} \delta_{j}^{(i)} }{\sum_{j=1}^{n_1}\sum_{i=1}^{\rho} \delta_{j}^{(i)}}$. The derivation  of (\ref{combine2}) depends on the the relationship between the Frobenius norm of original and Fourier domains, that is, $\left\|\mathbf{\mathcal{C}}_{j}\right\|_{F}^{2}=\frac{1}{\rho}\left\|\boldsymbol{\bar{C}}_{j}\right\|_{F}^{2}$. Different from the third-order case, the number of blocks of $\boldsymbol{\bar{C}}_{j}$ is changed to $\rho$, which directly results in the change of $c$. Finally, combining (\ref{combine1}) with (\ref{combine2})  yields the desired bound. 
	\end{proof}
	\begin{proof}[Proof of Theorem \ref{tpe-p-order}]
		The proof follows the similar  ideas of  the proof of Theorem 2. By using the Pythagorean theorem,
		we get $\left\|\mathcal{A}-\mathcal{A} * \mathcal{V}_{k} * \mathcal{V}_{k}^{T}\right\|_{F}^{2}=\|\mathcal{A}\|_{F}^{2}-\left\|\mathcal{A} * \mathcal{V}_{k}\right\|_{F}^{2}$. Since
		$\vec{\boldsymbol{v}}_{i} \in \mathbb{R}^{n_{2} \times 1 \times n_{3} \times \cdots \times n_{p}}$ is the $i$-th lateral slice of $\mathcal{V}_{k}$, we
		rewrite $\left\|\mathcal{A} * \mathcal{V}_{k}\right\|_{F}^{2}$ as $\sum_{i=1}^{k}\left\|\mathcal{A} * \vec{\boldsymbol{v}}_{\boldsymbol{i}}\right\|_{2^{*}}^{2} .$ 
		Then similar to with the third-order case, it is easy to get 
		$
		\left\|\mathcal{A} * \vec{\boldsymbol{v}}_{i}\right\|_{2^{*}}^{2} \geq\left\|\mathcal{B} * \vec{\boldsymbol{v}}_{i}\right\|_{2^{*}}^{2}
		$
		and
		$\sum_{i=1}^{k}\left\|\mathcal{B} * \vec{\boldsymbol{v}}_{i}\right\|_{2^{*}}^{2} \geq
		\sum_{i=1}^{k}\left\|\mathcal{B} * \vec{\boldsymbol{y}}_{i}\right\|_{2^{*}}^{2} $. Thus, 
		$$
		\left\|\mathcal{A}-\mathcal{A} * \mathcal{V}_{k} * \mathcal{V}_{k}^{T}\right\|_{F}^{2} \leq\|\mathcal{A}\|_{F}^{2}-\sum_{i=1}^{k}\left\|\mathcal{B} * \vec{\boldsymbol{y}}_{i}\right\|_{2^{*}}^{2}.
		$$
		Finally, combining with (\ref{combine2}), we obtain
		$$
		\begin{aligned}
			\left\|\mathcal{A}-\mathcal{A} * \mathcal{V}_{k} * \mathcal{V}_{k}^{T}\right\|_{F}^{2} & \leq\|\mathcal{A}\|_{F}^{2}-\sum_{i=1}^{k}\left(\left\|\mathcal{A} * \vec{\boldsymbol{y}}_{i}\right\|_{2^{*}}^{2}-\Delta\right) \\
			&=\|\mathcal{A}\|_{F}^{2}-\left\|\mathcal{A}_{k}\right\|_{F}^{2}+k \Delta \\
			& \leq \frac{\ell}{\ell-c k}\left\|\mathcal{A}-\mathcal{A}_{k}\right\|_{F}^{2},
		\end{aligned}
		$$
		where $c=\frac{\rho\sum_{j=1}^{n_1}\max\limits_{i} \delta_{j}^{(i)} }{\sum_{j=1}^{n_1}\sum_{i=1}^{\rho} \delta_{j}^{(i)}}$. This finishes the proof. 
\end{proof}

\subsection{Error bounds of the compared algorithm MtFD}

\begin{proof}[Proof of Theorem \ref{thm-norm2-matrization}]
	For the $n_1\rho \times n_2\rho$ block matrix $\boldsymbol{\tilde{A}}$, its explicit form could be very complicated for higher order case. So for simplicity, we take the third-order case as an example to describe the proof process. And more generally,  the proof technique presented here is also applicable to order-$p(p>3)$ case. According to the proof framework of Theorem \ref{main1}, it is easy to see that
	$$
	\left\|\mathbf{\mathcal{A}}^{T}*\mathbf{\mathcal{A}}-\mathbf{\mathcal{B}}^{T}*\mathbf{\mathcal{B}}\right\| 
	=\left\|\mathbf{\mathcal{A}} * \vec{\boldsymbol{x}}\right\|_{2^{*}}	^{2}-\left\|\mathbf{\mathcal{B}} * \vec{\boldsymbol{x}}\right\|_{2^{*}}	^{2}, 
	$$
	thus, our core is to bound $\left\|\mathbf{\mathcal{A}} * \vec{\boldsymbol{x}}\right\|_{2^{*}}	^{2}-\left\|\mathbf{\mathcal{B}} * \vec{\boldsymbol{x}}\right\|_{2^{*}}$. For $\vec{\boldsymbol{x}} \in \mathbb{R}^{n_{2} \times 1 \times n_{3}}$, let $\boldsymbol{x}\in \mathbb{R}^{n_{2}n_{3} \times 1 }$ denote the vectorized column vector
	of $\vec{\boldsymbol{x}} $ with unit norm. By the definition of t-product (Def. 1), the following equivalent relation is established:
	$$
	\left\|\mathbf{\mathcal{A}} * \vec{\boldsymbol{x}}\right\|_{2^{*}}	^{2}-\left\|\mathbf{\mathcal{B}} * \vec{\boldsymbol{x}}\right\|_{2^{*}}	^{2}  
	=\left\|\mathtt{bcirc}(\mathbf{\mathcal{A}})\boldsymbol{x}\right\|^{2}-\left\|\mathtt{bcirc}(\mathbf{\mathcal{B}})\boldsymbol{x}\right\|^{2}.
	$$
	Divide $\boldsymbol{x}$ into $n_{3}$ parts, where each part $\boldsymbol{x^{i}}\in \mathbb{R}^{n_{2} \times 1 }$. 
	The 2-norm of a vector is defined as the square root of the inner product of the vector with itself. Therefore, after rearranging the blocks in the block circulant matrix and positions of the corresponding $\boldsymbol{x^{i}}$ is, the vector 2-norm remains unchanged. So,
	\begin{align}
		&\left\|\mathtt{bcirc}(\mathbf{\mathcal{A}})\boldsymbol{x}\right\|^{2} \notag\\
		=&\left\|\left[\begin{array}{cccc}
			\boldsymbol{A}^{(1)} & \boldsymbol{A}^{\left(n_{3}\right)} & \cdots & \boldsymbol{A}^{(2)} \\
			\boldsymbol{A}^{(2)} & \boldsymbol{A}^{(1)} & \cdots & \boldsymbol{A}^{(3)} \\
			\vdots & \vdots & \ddots & \vdots \\
			\boldsymbol{A}^{\left(n_{3}\right)} & \boldsymbol{A}^{\left(n_{3}-1\right)} & \cdots & \boldsymbol{A}^{(1)}
		\end{array}\right]  \left(\begin{array}{c}
			\boldsymbol{x^{1}} \\
			\boldsymbol{x^{2}} \\
			\vdots \\
			\boldsymbol{x^{n_{3}}} 
		\end{array}\right)\right\|^{2} \notag \\
		=&\left\|\left[\begin{array}{cccc}
			\boldsymbol{A}^{(1)} & \boldsymbol{A}^{\left(n_{3}\right)} & \cdots & \boldsymbol{A}^{(2)} 
		\end{array}\right]  \left(\begin{array}{c}
			\boldsymbol{x^{1}} \\
			\boldsymbol{x^{2}} \\
			\vdots \\
			\boldsymbol{x^{n_{3}}} 
		\end{array}\right)\right\|^{2} + \cdots \notag \\
		=&\left\|\left[\begin{array}{cccc}
			\boldsymbol{A}^{(1)} & \boldsymbol{A}^{\left(2\right)} & \cdots & \boldsymbol{A}^{(n_{3})} 
		\end{array}\right]  \left(\begin{array}{c}
			\boldsymbol{x^{1}} \\
			\boldsymbol{x^{n_{3}}} \\
			\vdots \\
			\boldsymbol{x^{2}} 
		\end{array}\right)\right\|^{2} + \cdots\notag \\
		=&\sum_{i=1}^{n_{3}}\left\|\boldsymbol{A}_{(1)}\boldsymbol{x}_{i}\right\|^{2}, \notag
	\end{align}
	where each $\boldsymbol{x}_{i}\in \mathbb{R}^{n_{2}n_{3} \times 1 }$ is a unit vector.
	Therefore,
	\begin{align}
		\left\|\mathbf{\mathcal{A}}^{T}*\mathbf{\mathcal{A}}-\mathbf{\mathcal{B}}^{T}*\mathbf{\mathcal{B}}\right\| 
		=\sum_{i=1}^{n_{3}}\left(\left\|\boldsymbol{A}_{(1)}\boldsymbol{x}_{i}\right\|^{2}-\left\|\boldsymbol{B}_{(1)}\boldsymbol{x}_{i}\right\|^{2}\right). \label{tensor-matrix}
	\end{align}
	Then, it follows from the properties of FD proved in Theorem 1.1 of \cite{ghashami2016frequent} that
	\begin{align}
		&\sum_{i=1}^{n_{3}}\left(\left\|\boldsymbol{A}_{(1)}\boldsymbol{x}_{i}\right\|^{2}-\left\|\boldsymbol{B}_{(1)}\boldsymbol{x}_{i}\right\|^{2}\right) \notag\\
		\le&\frac{n_{3}}{\ell-k}\left\|\boldsymbol{A}_{(1)}-\boldsymbol{A}_{(1)_{k}}\right\|_{F}^{2} \notag \\
		\le& \frac{n_{3}}{\ell-k}\left\|\boldsymbol{A}_{(1)}\right\|_{F}^{2} \notag\\
		=&\frac{n_{3}}{\ell-k}\left\|\mathbf{\mathcal{A}}\right\|_{F}^{2},\notag
	\end{align}
	which together with the equation (\ref{tensor-matrix}) conclude the desired result. 
\end{proof}

\begin{proof}[Proof of Theorem \ref{rem: rel}]
	Also, we here present the proof for the third-order tensors, and the general order-$p$ case is also applicable. The only difference is that the spectral norm of block circulant operation $\mathtt{bcirc}(\mathbf{\mathcal{A}})$, which acts as the bridge between tensor spectral norm and the $\ell_{2^{*}}$ norm of tensor column, is replaced with the spectral norm of the general $\boldsymbol{\tilde{A}}$.	Noting that for any $\vec{\boldsymbol{x}} \in \mathbb{R}^{n_{2} \times 1 \times n_{3}}$, if the vectorized column vector $\boldsymbol{x}\in \mathbb{R}^{n_{2}n_{3} \times 1 }$ is a unit vector, then we have
	\begin{align}
		&\left|\left\|\mathbf{\mathcal{A}} * \vec{\boldsymbol{x}}\right\|_{2^{*}}	^{2}-\left\|\mathbf{\mathcal{B}} * \vec{\boldsymbol{x}}\right\|_{2^{*}}	^{2}\right| \notag\\
		=&\left|\left\|\mathtt{bcirc}(\mathbf{\mathcal{A}})\boldsymbol{x}\right\|^{2}-\left\|\mathtt{bcirc}(\mathbf{\mathcal{B}})\boldsymbol{x}\right\|^{2}\right| \notag\\
		\le&\left\| \mathtt{bcirc}(\mathbf{\mathcal{A}})^{T}\mathtt{bcirc}(\mathbf{\mathcal{A}})-\mathtt{bcirc}(\mathbf{\mathcal{B}})^{T}\mathtt{bcirc}(\mathbf{\mathcal{B}}) \right\|_{2}. \notag
	\end{align}
	Then, according to the definition of tensor spectral norm (Def. \ref{def_tsn}), we have
	\begin{align}	
		&\left\| \mathtt{bcirc}(\mathbf{\mathcal{A}})^{T}\mathtt{bcirc}(\mathbf{\mathcal{A}})-\mathtt{bcirc}(\mathbf{\mathcal{B}})^{T}\mathtt{bcirc}(\mathbf{\mathcal{B}}) \right\|_{2}  \notag \\
		=&\left\| \boldsymbol{\bar{A}}^{T}\boldsymbol{\bar{A}}- \boldsymbol{\bar{B}}^{T}\boldsymbol{\bar{B}}\right\|_{2}  \notag \\
		=&\left\|\mathbf{\mathcal{A}}^{T}*\mathbf{\mathcal{A}}-\mathbf{\mathcal{B}}^{T}*\mathbf{\mathcal{B}}\right\| . \notag
	\end{align}
	That is to say,
	\begin{align}
		\left|\left\|\mathbf{\mathcal{A}} * \vec{\boldsymbol{x}}\right\|_{2^{*}}	^{2}-\left\|\mathbf{\mathcal{B}} * \vec{\boldsymbol{x}}\right\|_{2^{*}}	^{2}\right| \le
		\left\|\mathbf{\mathcal{A}}^{T}*\mathbf{\mathcal{A}}-\mathbf{\mathcal{B}}^{T}*\mathbf{\mathcal{B}}\right\|. \label{spectralnormand2*}
	\end{align}
	As analyzed in Theorem \ref{main}, $\left\|\mathbf{\mathcal{A}}-\mathbf{\mathcal{A}} * \mathbf{\mathcal{V}}_{k} * \mathbf{\mathcal{V}}_{k}^{T}\right\|_{F}^{2}$ is equal to $\left\|\mathbf{\mathcal{A}}\right\|_{F}^{2}-\sum_{i=1}^{k}\left\|\mathbf{\mathcal{A}} * \vec{\boldsymbol{v}}_{i}\right\|_{2^{*}}	^{2}$. Therefore, we obtain
	\begin{align}
		&\left\|\mathbf{\mathcal{A}}-\mathbf{\mathcal{A}} * \mathbf{\mathcal{V}}_{k} * \mathbf{\mathcal{V}}_{k}^{T}\right\|_{F}^{2} \notag \\
		=&\left\|\mathbf{\mathcal{A}}\right\|_{F}^{2}-\sum_{i=1}^{k}\left\|\mathbf{\mathcal{A}} * \vec{\boldsymbol{v}}_{i}\right\|_{2^{*}}	^{2}  \notag \\
		\le &\left\|\mathbf{\mathcal{A}}\right\|_{F}^{2}-\sum_{i=1}^{k}\left\|\mathbf{\mathcal{B}} * \vec{\boldsymbol{v}}_{i}\right\|_{2^{*}}	^{2} +k\left\|\mathbf{\mathcal{A}}^{T}*\mathbf{\mathcal{A}}-\mathbf{\mathcal{B}}^{T}*\mathbf{\mathcal{B}}\right\| \text{(By Eq.(\ref{spectralnormand2*}))} \notag \\
		\le&\left\|\mathbf{\mathcal{A}}\right\|_{F}^{2}-\sum_{i=1}^{k}\left\|\mathbf{\mathcal{B}} * \vec{\boldsymbol{y}}_{i}\right\|_{2^{*}}	^{2} +k\left\|\mathbf{\mathcal{A}}^{T}*\mathbf{\mathcal{A}}-\mathbf{\mathcal{B}}^{T}*\mathbf{\mathcal{B}}\right\|  \notag \\
		\le&\left\|\mathbf{\mathcal{A}}\right\|_{F}^{2}-\sum_{i=1}^{k}\left\|\mathbf{\mathcal{A}} * \vec{\boldsymbol{y}}_{i}\right\|_{2^{*}}	^{2} +2k\left\|\mathbf{\mathcal{A}}^{T}*\mathbf{\mathcal{A}}-\mathbf{\mathcal{B}}^{T}*\mathbf{\mathcal{B}}\right\|  \text{(By Eq.(\ref{spectralnormand2*}))}\notag\\
		=&\left\|\mathbf{\mathcal{A}}-\mathbf{\mathcal{A}}_{k}\right\|_{F}^{2}+2k\left\|\mathbf{\mathcal{A}}^{T}*\mathbf{\mathcal{A}}-\mathbf{\mathcal{B}}^{T}*\mathbf{\mathcal{B}}\right\|. \notag
	\end{align}
	This completes the proof.
\end{proof}

\section{Conclusion}
In this paper, we propose a simple and effective sketching algorithm for obtaining a low-tubal-rank tensor approximation in the streaming setting. The main idea is to extend matrix FD algorithm to the higher order tensor case using the t-SVD framework. The  theoretical analysis shows that our new algorithm could provide a near optimal low-tubal-rank tensor approximation in terms of both covariance and projection errors. Extensive experiments on both synthetic and real data also verify the efficiency and effectiveness of the proposed algorithm. In the future, we are planning to incorporate this new algorithm into some popular tensor recovery models, namely tensor completion and tensor robust PCA, in the streaming setting.

\section*{Acknowledgment}

This work was supported in part by the National Key Research and Development Program of China under
Grant 2018YFB1402600, in part by the National Natural Science Foundation of China under Grant 11971374 and Grant 11501440.

\bibliographystyle{IEEEtran}
\bibliography{tensorfd}
\ifCLASSOPTIONcaptionsoff
  \newpage
\fi

\end{document}